\newcommand{\bs}{\mathbf}
\newtheorem{thm}{Theorem}[section]
\newtheorem{cor}[thm]{Corollary}
\newtheorem{prop}[thm]{Proposition}
\newtheorem{lem}[thm]{Lemma}
\renewcommand{\xi}{{\xx}^{(m)}}
\newcommand{\needcite}[1]{}
\newcommand{\be}{\begin{equation}}
\newcommand{\ee}{\end{equation}}
\newcommand{\benn}{\begin{equation*}}
\newcommand{\eenn}{\end{equation*}}
\newcommand{\bea}{\begin{eqnarray*}}
\newcommand{\eea}{\end{eqnarray*}}
\newcommand{\bean}{\begin{eqnarray}}
\newcommand{\eean}{\end{eqnarray}}
\newcommand{\ww}{\boldsymbol{w}} 
\newcommand{\xx}{\boldsymbol{x}}
\newcommand{\vv}{\boldsymbol{v}} 
\newcommand{\dd}{\boldsymbol{d}} 
\newcommand{\eee}{\boldsymbol{e}} 
\newcommand{\uu}{\boldsymbol{u}} 
\newcommand{\xpart}[1]{\xx[#1]}
\newcommand{\cD}{{\cal D}}
\newcommand{\cG}{{\cal G}}
\newcommand{\ourarch}{\em No-Overlap Networks}
\newcommand{\ourarchsngl}{\em No-Overlap Network}
\newcommand{\relu}[1]{\sigma\left(#1\right)}
\renewcommand{\aa}{\boldsymbol{a}}
\newcommand{\ignore}[1]{}
\newcommand{\polyring}[1]{\reals\left[x_1,\ldots,x_n\right]}
\newcommand{\zero}{\boldsymbol{0}}
\renewcommand{\eqref}[1]{Eq.~\ref{#1}}
\newcommand{\figref}[1]{Figure \ref{#1}}
\newcommand{\secref}[1]{Section \ref{#1}}
\newcommand{\expect}[2]{\mathbb{E}_{#1}\left[{#2}\right]}
\newcommand{\reals}{\mathbb{R}}
\newtheorem{theorem}{Theorem}
\newtheorem{definition}[theorem]{Definition}
\title{Globally Optimal Gradient Descent for a ConvNet with Gaussian Inputs}
\author{
        Alon Brutzkus \\ \texttt{alonbrutzkus@mail.tau.ac.il}
            \and
       Amir Globerson \\
       \texttt{gamir@cs.tau.ac.il}
}
	\date{\vspace{-5ex}}
\begin{document}
\maketitle

\begin{abstract} 
	Deep learning models are often successfully trained using gradient descent, despite the worst case hardness of the underlying non-convex optimization problem. The key question is then under what conditions can one prove that optimization will succeed. Here we provide a strong result of this kind. We consider a neural net with one hidden layer and a convolutional structure with no overlap and a ReLU activation function. For this architecture we show that learning is NP-complete in the general case, but that when the input distribution is Gaussian, gradient descent converges to the global optimum in polynomial time. To the best of our knowledge, this is the first global optimality guarantee of gradient descent on a convolutional neural network with ReLU activations. 
\end{abstract}

\section{Introduction}
\label{Introduction}
Deep neural networks have achieved state-of-the-art performance on many  machine learning tasks in areas such as natural language processing \citep{WuSCLNMKCGMKSJL16}, computer vision \citep{krizhevsky2012imagenet} and speech recognition \citep{hinton2012deep}. Training of such networks is often successfully performed by minimizing a high-dimensional non-convex objective function, using simple first-order methods such as stochastic gradient descent. 

Nonetheless, the success of deep learning from an optimization perspective is poorly understood theoretically. Current results are mostly pessimistic, suggesting that even training a 3-node neural network is NP-hard \citep{blum1993training}, and that the objective function of a single neuron can admit exponentially many local minima \citep{auer1996exponentially,safran2015quality}. There have been recent attempts to bridge this gap between theory and practice.  Several works focus on the geometric properties of loss functions that neural networks attempt to minimize. For some simplified architectures, such as linear activations, it can be shown that there are no bad local minima \citep{kawaguchi2016deep}. Extension of these results to the non-linear case currently requires very strong independence assumptions \citep{kawaguchi2016deep}.

Since gradient descent is the main ``work-horse'' of deep learning it
is of key interest to understand its convergence properties. However,
there are no results showing that gradient descent is globally optimal
for non-linear models, except for the case of many hidden neurons
\citep{valiant2014learning} and non-linear activation functions that
are not widely used in practice \citep{zhang2017electron}.\footnote{See
	more related work in \secref{Related_Work}.}  Here we provide the
first such result for a neural architecture that has two very common
components: namely a ReLU activation function and a convolution layer.

The architecture considered in the current paper is shown in \figref{non_overlap_fig}. We refer to these models as \textit{no-overlap} networks. A no-overlap network can be viewed as a simple convolution layer with non overlapping filters, followed by a ReLU activation function, and then average pooling.  Formally, let $\ww\in\reals^m$ denote the filter coefficient, and assume the input $\xx$ is in $\reals^d$. Define $k=m/d$ and assume for simplicity that $k$ is integral. Partition $\xx$ into $k$ non-overlapping parts and denote $\xpart{i}$ the $i^{th}$ part. Finally, define $\sigma$ to be the ReLU activation function, namely $\relu{z} = \max\{0,z\}$.
Then the output of the network in \figref{non_overlap_fig} is given by:
\be
f(\xx;\ww) = {1\over k} \sum_{i} \relu{\ww\cdot\xpart{i} }
\label{eq:nonoverlap_function}
\ee
We note that such architectures have been used in several works \citep{lin2013network,milletari2016v}, but we view them as important firstly because they capture key properties of general convolutional networks.

We address the realizable case, where training data is generated from a function as in \eqref{eq:nonoverlap_function} with weight vector $\ww^*$.  Training data is then generated by sampling $n$ training points $\xx_1,\ldots,\xx_n$ from a distribution $\cD$, and assigning them labels using $y=f(\xx;\ww^*)$.  The learning problem is then to find a $\ww$ that minimizes the squared loss. In other words, solve the optimization problem:
\be
\min_{\ww} {1\over n}  \sum_i \left( f(\xx_i;\ww) - y_i \right) ^2
\label{eq:empirical_risk}
\ee
In the limit $n\to\infty$, this is equivalent to minimizing the population risk:
\be
\ell(\ww) = \expect{\xx\sim \cD}{\left( f(\xx;\ww) - f(\xx;\ww^*) \right)^2}
\label{population_risk}
\ee
Like several recent works \citep{hardt2016gradient,hardt2016identity} we focus on minimizing the population risk, leaving the finite sample case to future work. We believe the population risk captures the key characteristics of the problem, since the large data regime is the one of interest. 
\ignore{
	In this paper, we study the tractability of learning a two layer non-linear convolutional neural networks with gradient descent, in the realizable case. In our setting, we receive observations from a non-linear convolutional neural network $f(\mathbf{x})=\mathbf{b}^T\sigma(A^*\mathbf{x})$ where $A^* \in \mathbb{R}^{d \times n}$ corresponds to a single-channel convolutional layer with shared weight values across neurons, $\sigma$ is the element-wise ReLU activation function ($\sigma(\mathbf{x})_i = \max\{0,\mathbf{x}_i\}$) and $\mathbf{b} \in \mathbb{R}^n$ corresponds to a mean-pooling layer, i.e., $\mathbf{b}_i=\frac{1}{n}$ for all $1 \leq i \leq n$. Our goal is to fit a two-layer convolutional neural network $h(\mathbf{x})=\mathbf{b}^T\sigma(A\mathbf{x})$, with the same architecture as $f$, to the observations. Towards that end, we consider minimizing the population risk:
	
	\begin{equation}
	\label{population_risk}
	\mathbb{E}_{\mathbf{x} \sim \mathcal{D}}[(\mathbf{b}^T\sigma(A\mathbf{x}) - \mathbf{b}^T\sigma(A^*\mathbf{x}))^2]
	\end{equation} 
}
\begin{wrapfigure}{r}{6.5cm}
		\includegraphics[width=6.5cm]{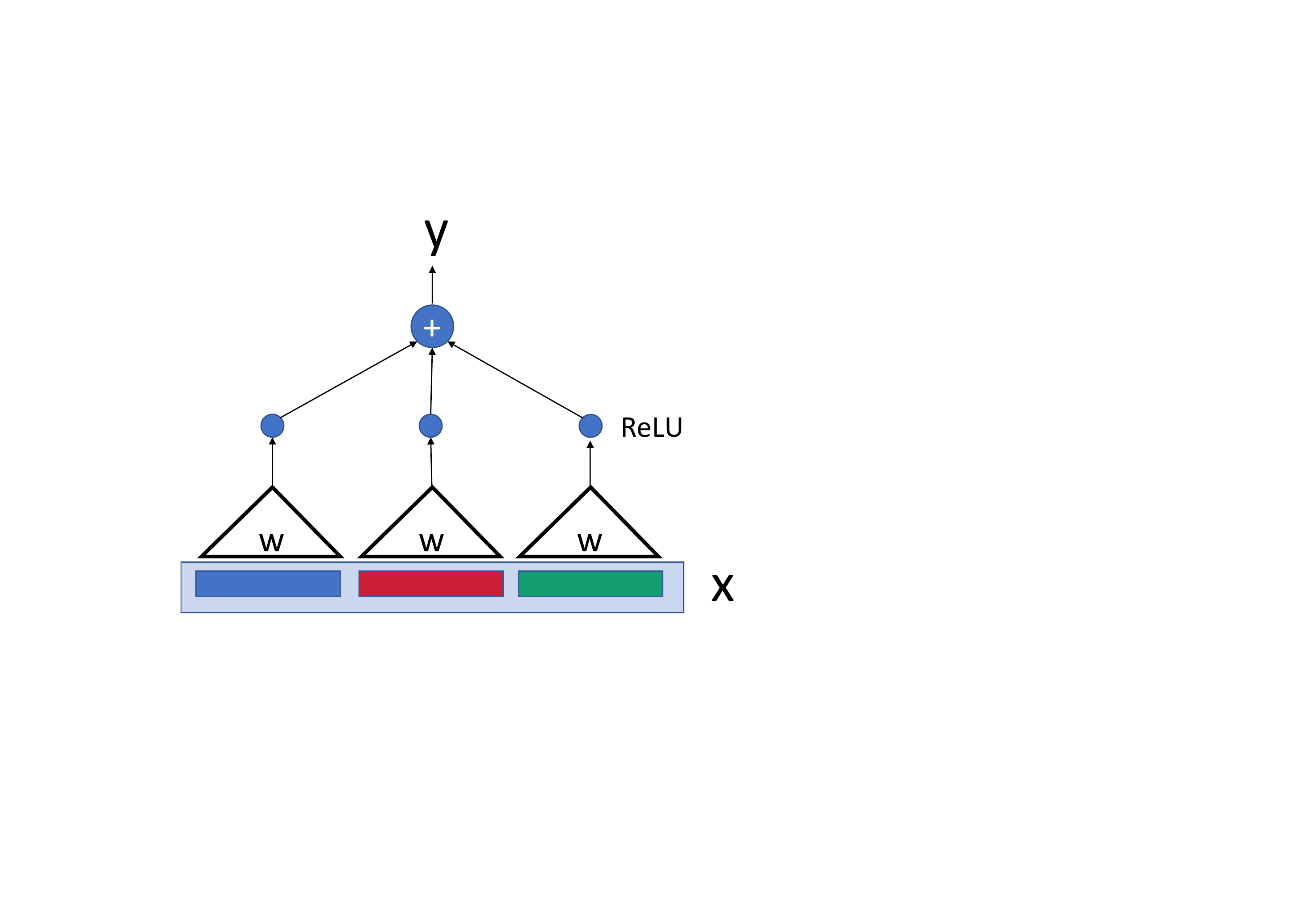}		
		\caption{Convolutional neural network with non-overlapping filters. In the first layer, a filter $\ww$ is applied to non-overlapping parts of the input vector $\xx$, and the output passes through a ReLU activation function. The outputs of the neurons are then averaged to give the output $y$.}
		\label{non_overlap_fig}
\end{wrapfigure} 

Our key results are as follows:
\begin{itemize}
	\item {\bf Worst Case Hardness: } Despite the simplicity of {\ourarch}, we show that learning them is in fact hard if $\cD$ is unconstrained. Specifically, in \secref{hardness_section}, we show
	that learning {\ourarch} is NP complete via a reduction from a variant of the set splitting problem.
	\item {\bf Distribution Dependent Tractability: } When $\cD$ corresponds to independent Gaussian variables with $\mu=0,\sigma^2=1$, we show in \secref{non_overlapping_gaussian} that {\ourarch} can be learned in polynomial time using gradient descent. 
\end{itemize}
The above two results nicely demonstrate the gap between worst-case intractability and tractability under assumptions on the data. We provide an empirical demonstration of this in \secref{nonoverlap_experiments} where gradient descent is shown to succeed on the Gaussian case and fail for a different distribution.

\ignore{
	show that our theory predicts cases where the optimization of neural networks is hard and when it is tractable. Concretely, we optimize an objective of a network with non-overlapping filters with the the worst-case data set constructed in the hardness result in Section~\ref{hardness_section} and with a dataset with i.i.d. gaussian entries. Both datasets are of the same size and labeled by the same neural network that we attempt to learn. We show that given the worst-case dataset, the optimization fails and gets stuck in a sub-optimal point, whereas given the gaussian data, it converges to the global minimum and thus in this case we are able to learn the network parameters.  
}

To further understand the role of overlap in the network, we consider
networks that do have overlap between the filters. In \secref{overlapping_section} we show that in this case, even under Gaussian distributed inputs, there will be non-optimal local minima.  Thus, gradient descent will no longer be optimal in the overlap case. In \secref{general_conv_experiments} we show empirically that these local optima may be overcome in practice by using gradient descent with multiple restarts.

\ignore{
	In addition, in the case where the data follows a gaussian distribution, we consider the general case, with no assumption on the overlapping of the filters ( Section~\ref{overlapping_section}). We show that the previous result for the non-overlapping case does not hold in general, namely, gradient descent does \textit{not} converge almost surely to the global minimum of the objective \eqref{population_risk}. We prove this by constructing a network with overlapping filters such that a randomly initialized gradient descent will get stuck in a sub-optimal region with probability at least $\frac{1}{4}$.
}

\ignore{
	Giving convergence guarantees of gradient descent in the general setting under the Gaussian assumption, remains an intriguing open problem. We conduct comprehensive experiments in this setting which suggest that by restarting gradient descent a constant number of times, one can ensure that with high probability gradient descent converges to the global optimum.
}

Taken together, our results are the first to demonstrate distribution dependent optimality of gradient descent for learning a neural architecture with a convolutional like architecture and a ReLU activation function.  

\section{Related Work} 
\label{Related_Work}
Hardness of learning neural networks has been demonstrated for many different settings. For example, \citep{blum1993training} show that learning a neural network with one hidden layer with a sign activation function is NP-hard in the realizable case. \citep{livni2014computational} extend this to other activation functions and bounded norm optimization. Hardness can also be shown for improper learning under certain cryptographic assumptions \citep[e.g., see][]{daniely2014average,klivans2008cryptographic,livni2014computational}. Note that these hardness results do not hold for the regression and tied parameter setting that we consider.  

Due to the above hardness results, it is clear that the success of deep-learning can only be explained by making additional assumptions about the data generating distribution. The classic algorithm by \citep{baum1990polynomial} shows that intersection of halfspaces (i.e., a specific instance of a one hidden layer network) is PAC learnable under any symmetric distribution. This was later extended in \citep{klivans2009baum} to log-concave distributions. 

The above works do not consider gradient descent as the optimization method, leaving open the question of which assumptions can lead to global optimality of gradient descent. Such results have been hard to obtain, and we survey some recent ones below. One instance when gradient descent can succeed is when there are enough hidden units such that random initialization of the first layer can lead to zero error even if only the second layer is trained. Such over-specified networks have been considered in \citep{valiant2014learning,livni2014computational} and it was shown that gradient descent can globally learn them in some cases \citep{valiant2014learning}. However, the assumption of over-specification is very restrictive and limits generalization. In contrast, we show convergence of gradient descent to a global optimum for any network size and consider convolutional neural networks with shared parameters. Another interesting case is linear dynamical systems, where \citep{hardt2016gradient} show that under independence assumptions maximum likelihood is quasi-concave and hence solvable with gradient ascent. 

Recent work by \citep{mei2016landscape} shows that regression with a {\em single neuron} and certain non-linear activation functions, can be learned with gradient descent for sub-Gaussian inputs. We note that their architecture is significantly simpler than ours, in that it uses a single neuron. In fact, their regression problem can also be solved via methods for generalized linear models such as \citep{NIPS2011_4429}. 




\citep{shamir2016distribution} recently showed that there is a limit to what distribution dependent results can achieve. Namely, it was shown that for large enough one-hidden layer networks, no distributional assumptions can make gradient descent tractable. Importantly, the construction in \citep{shamir2016distribution} does not use parameter tying and thus is not applicable to the architecture we study here. 

\ignore{
	Recently, the theory of neural networks has drawn significant attention. Nonetheless, we know of only a few that study the dynamics of gradient descent in optimization of neural networks. In a work most similar to ours, \citep{zhang2017electron} prove that gradient descent converges to the global minimum on two-layer neural networks for different activation functions. However, they do not cover the case where the activation is the ReLU function. In
	\citep{valiant2014learning} they prove that gradient descent can learn polynomials with sufficiently large two-layer fully-connected networks. Their lower bounds on the network size depend inversely on the loss error, and can get extremely large as the error approaches 0. 
}

Several works have focused on understanding the loss surface of neural network objectives, but without direct algorithmic implications. \citep{kawaguchi2016deep} show that {\em linear} neural networks do not suffer from bad local minima.   \citep{hardt2016identity} consider objectives of {\em linear residual} networks and prove that there are no critical points other than the global optimum.   \citep{soudry2016no} show that in the objective of over-parameterized neural networks with dropout-like noise, all differentiable local minima are global. Other works \citep{safran2015quality,haeffele2015global} give similar results for over-specified networks. All of these results are purely geometric and do not have direct implications on convergence of optimization algorithms. In a different approach, \citep{janzamin2015beating}, suggest alternatives to gradient-based methods for learning neural networks. However, these algorithms are not widely used in practice. Finally, \citep{choromanska2015loss} use spin glass models to argue that, under certain generative modelling and architectural constraints, local minima are likely to have low loss values. 

The theory of non-convex optimization is closely related to the theory of neural networks. Recently, there has been substantial progress in proving convergence guarantees of simple first-order methods in various machine learning problems, that don't correspond to typical neural nets. These include for example matrix completion \citep{ge2016matrix} and tensor decompositions \citep{ge2015escaping}. 

Finally, recent work by \citep{ZhangBHRV16} shows that neural nets can perfectly fit random labelings of the data. Understanding this from an optimization perspective is largely an open problem.   

\section{Preliminaries} 
\label{preliminaries}
We use bold-faced letters for vectors and capital letters for matrices. The $i^{th}$ row of a matrix $A$ is denoted by $\mathbf{a}_i$.

In our analysis in Section \ref{non_overlapping_gaussian} and Section \ref{overlapping_section} we assume that the input feature $\xx \in \reals^d$ is a vector of IID Gaussian random variables with zero mean and variance one.\footnote{The variance per variable can be arbitrary. We choose one for simplicity.} Denote this distribution by $\cG$.  We consider networks with one hidden layer, and $k$ hidden units. Our main focus will be on {\ourarch}, but we begin with a more general one-hidden-layer neural network with a fully-connected layer parameterized by $W\in \reals^{k,d}$ followed by average pooling. The network output is then:
\be
f(\xx;W) = {1\over k} \sum_{i} \relu{\ww_i\cdot\xx }
\label{eq:nonoverlap_function_fully_connected}
\ee
where $\relu{}$ is the pointwise ReLU function.

We consider the realizable setting where there exists a {\em true} $W^*$ using which the training data is generated. The population risk (see \eqref{population_risk}) is then:
\be
\ell(W) = \expect{\cG}{(f(\xx;W) - f(\xx;W^*))^2} ~,
\ee
As we show next, $\ell(W)$ can be considerably simplified. First, define:
\be
g(\uu,\vv) = \expect{\cG}{\relu{\uu \cdot \xx}\relu{\vv \cdot \xx}}
\label{eq:def_g}
\ee
Simple algebra then shows that:
\be
\ell(W)  =\frac{1}{k^{2}} \sum_{i,j}\left[{g(\ww_i,\ww_j)} - 2{g(\ww_i,\ww^*_j)} + {g(\ww^*_i,\ww^*_j)}\right]
\ee

The next Lemma from \citep{cho2009kernel} shows that $g(\uu,\vv)$ has a simple form. 
\begin{lem}[\citep{cho2009kernel}, Section 2]
	Assume $\xx\in\reals^d$ is a vector where the entries are IID Gaussian random variables with mean 0 and variance 1. Given $\uu,\vv \in \reals^d$ denote by $\theta_{\uu,\vv}$ the
	angle between $\uu$ and $\vv$. Then:
	\be
	g(\uu,\vv)= \frac{1}{2\pi}\left \| \uu \right \| \left \| \vv \right \|\Bigg(\sin \theta_{\uu,\vv} + \Big(\pi - \theta_{\uu,\vv} \Big) \cos \theta_{\uu,\vv} \Bigg) \nonumber
	\ee
\end{lem}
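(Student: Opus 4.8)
The plan is to compute the expectation $g(\uu,\vv)=\expect{\cG}{\relu{\uu\cdot\xx}\relu{\vv\cdot\xx}}$ directly, exploiting the rotational invariance of the standard Gaussian. The key observation is that $\relu{\uu\cdot\xx}\relu{\vv\cdot\xx}$ depends on $\xx$ only through the two scalar projections $\uu\cdot\xx$ and $\vv\cdot\xx$. Since $\xx$ has IID standard Gaussian entries, the pair $(\uu\cdot\xx,\vv\cdot\xx)$ is jointly Gaussian with mean zero and covariance matrix determined entirely by the norms $\|\uu\|,\|\vv\|$ and the inner product $\uu\cdot\vv=\|\uu\|\|\vv\|\cos\theta_{\uu,\vv}$. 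Concretely, $\expect{\cG}{(\uu\cdot\xx)^2}=\|\uu\|^2$, $\expect{\cG}{(\vv\cdot\xx)^2}=\|\vv\|^2$, and $\expect{\cG}{(\uu\cdot\xx)(\vv\cdot\xx)}=\uu\cdot\vv$. So the $d$-dimensional integral collapses to a two-dimensional Gaussian integral.

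First I would reduce to the two-dimensional problem. Writing $s=\uu\cdot\xx$ and $t=\vv\cdot\xx$, the quantity to evaluate is $\expect{}{\relu{s}\relu{t}}=\expect{}{\max\{0,s\}\max\{0,t\}}$ where $(s,t)$ is bivariate normal with the covariance above. By homogeneity of the ReLU (namely $\relu{\alpha z}=\alpha\relu{z}$ for $\alpha\ge 0$), I can factor out $\|\uu\|$ and $\|\vv\|$, so that $g(\uu,\vv)=\|\uu\|\|\vv\|\,G(\theta_{\uu,\vv})$ where $G(\theta)$ is the analogous expectation for unit vectors with angle $\theta$ between them, i.e. correlation $\cos\theta$.

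Next I would compute $G(\theta)$ by switching to polar coordinates in the plane. For unit vectors, $(s,t)$ can be written as $(\hat\uu\cdot\xx,\hat\vv\cdot\xx)$; an even cleaner route is to note that over the region where both $s,t\ge 0$ the integrand is just $st$ against the bivariate Gaussian density with correlation $\rho=\cos\theta$. The standard approach is to diagonalize or to parametrize the two relevant directions by a single angle and integrate $r^2\cos\phi\cos(\phi-\theta)$ (the product of the two projections of a planar Gaussian vector) over the angular wedge where both projections are nonnegative, weighting by the radial Gaussian $\frac{1}{2\pi}r\,e^{-r^2/2}$. The radial integral contributes a constant factor, and the angular integral over the admissible wedge of width $\pi-\theta$ produces the combination $\sin\theta+(\pi-\theta)\cos\theta$ after using product-to-sum identities; assembling the factors yields the claimed $\frac{1}{2\pi}\|\uu\|\|\vv\|\big(\sin\theta_{\uu,\vv}+(\pi-\theta_{\uu,\vv})\cos\theta_{\uu,\vv}\big)$.

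The main obstacle is the angular integration: one must correctly identify the wedge of directions for which both $\relu{\cdot}$ arguments are simultaneously positive and then evaluate a trigonometric integral of the form $\int\cos\phi\cos(\phi-\theta)\,d\phi$ over that wedge without sign errors. This is a routine but delicate calculation, and it is essentially where the specific functional form $\sin\theta+(\pi-\theta)\cos\theta$ emerges. A useful sanity check is the boundary case $\theta=0$ (so $\uu,\vv$ aligned), which should reduce to $\frac{1}{2}\expect{}{\relu{s}^2}=\frac{1}{4}$ per unit norm since $\expect{}{s^2\cdot\mathbf{1}[s\ge 0]}=\frac{1}{2}$; indeed $\frac{1}{2\pi}(\sin 0+\pi\cos 0)=\frac{1}{2}$, matching. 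Since this is a restatement of a known result from \citep{cho2009kernel}, I would cite the arc-cosine kernel derivation there rather than reproduce every step.
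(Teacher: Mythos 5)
Your proposal is correct, but it is worth being clear that the paper itself contains no proof of this statement: the lemma is imported directly from \citep{cho2009kernel} (Section 2, the degree-one arc-cosine kernel), and the paper's appendix only derives the companion gradient formula (Lemma \ref{kernel_gradient}). So what you have done is reconstruct the proof of the cited external result, and your reconstruction is sound and is essentially the derivation in \citep{cho2009kernel}. The three ingredients all check out: (i) $\relu{\uu\cdot\xx}\relu{\vv\cdot\xx}$ depends on $\xx$ only through the pair $(\uu\cdot\xx,\vv\cdot\xx)$, which is centered bivariate normal with covariance entries $\|\uu\|^2$, $\|\vv\|^2$, $\uu\cdot\vv$, so the $d$-dimensional expectation collapses to a planar one; (ii) positive homogeneity of the ReLU extracts the factor $\|\uu\|\|\vv\|$, leaving a function $G(\theta)$ of the angle alone; (iii) in polar coordinates the integrand $r^2\cos\phi\,\cos(\phi-\theta)$ is supported (as far as the ReLUs are concerned) on a wedge of width exactly $\pi-\theta$, the radial integral gives $\int_0^\infty r^3e^{-r^2/2}\,dr=2$, and the product-to-sum identity $\cos\phi\cos(\phi-\theta)=\frac{1}{2}\left(\cos\theta+\cos(2\phi-\theta)\right)$ turns the angular integral into $\frac{1}{2}\left((\pi-\theta)\cos\theta+\sin\theta\right)$, assembling to the claimed $\frac{1}{2\pi}\left(\sin\theta+(\pi-\theta)\cos\theta\right)$. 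One small slip in your sanity check: at $\theta=0$ the quantity to match is $\expect{}{\relu{s}^2}=\frac{1}{2}$ for standard Gaussian $s$, not $\frac{1}{2}\expect{}{\relu{s}^2}=\frac{1}{4}$; your final comparison against the value $\frac{1}{2}$ produced by the formula is the correct one, so the check still goes through. In terms of trade-offs, the paper's citation-only treatment is economical and appropriate since the result is standard, while your route makes the lemma self-contained and verifiable without consulting the reference, which is genuinely useful given that the paper's later computations (e.g., $g(\ww_i,\ww_i)=\frac{1}{2}\|\ww\|^2$ and the cross terms $\frac{1}{2\pi}\|\ww\|\|\ww^*\|$ in \eqref{non_overlap_loss_eq}) are direct specializations of this formula at $\theta=0$ and $\theta=\frac{\pi}{2}$.
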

\ignore{
	Therefore the loss function is given by 
	\begin{equation}
	\label{general_loss_function}
	l(A)  = \frac{1}{n^2}\Big[ \sum_{i,j}{g(A_i,A_j)} - 2\sum_{i,j}{g(A_i,A^*_j)} + \sum_{i,j}{g(A^*_i,A^*_j)} \Big]
	\end{equation}
	
	where 
	
	\begin{equation}
	\label{kernel_function}
	g(\ww,\vv) \triangleq \frac{1}{2\pi}\left \| \ww \right \| \left \| \vv \right \|\Bigg(\sin \theta + \Big(\pi - \theta \Big) \cos \theta \Bigg)
	\end{equation}
	
	and $\theta$ is the angle between $\ww$ and $\vv$. Note that $g(\ww,\ww) = \frac{1}{2}{\left \| \ww \right \|}^2$.
}
The gradient of $g$ with respect to $\uu$ also turns out to have a simple form, as stated in the lemma below. The proof is deferred to the Appendix \ref{prelim_supp}.
\begin{lem}
	\label{kernel_gradient}
	Let $g$ be as defined in \eqref{eq:def_g}. Then	$g$ is differentiable at all points $\uu \neq \mathbf{0}$ and $$\frac{\partial g(\uu,\vv)}{\partial \uu} = \frac{1}{2\pi}\left \| \vv \right \| \frac{\uu}{\left \| \uu \right \|} \sin \theta_{\uu,\vv} + \frac{1}{2\pi}\Big(\pi - \theta_{\uu,\vv} \Big) \vv$$
\end{lem}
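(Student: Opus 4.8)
The plan is to simply differentiate the closed form for $g$ supplied by the preceding lemma, namely $g(\uu,\vv) = \frac{1}{2\pi}\|\uu\|\,\|\vv\|\,h(\theta_{\uu,\vv})$ with $h(\theta) = \sin\theta + (\pi-\theta)\cos\theta$, using the chain rule. Two elementary gradients are needed: $\partial\|\uu\|/\partial\uu = \uu/\|\uu\|$ (valid for $\uu\neq\zero$), and the gradient of the angle. Writing $\theta_{\uu,\vv} = \arccos\!\big(\tfrac{\uu\cdot\vv}{\|\uu\|\,\|\vv\|}\big)$, differentiating through $\arccos$, and using $\sqrt{1-\cos^2\theta}=\sin\theta$ (legitimate since $\theta_{\uu,\vv}\in[0,\pi]$), I would obtain
\[
\frac{\partial \theta_{\uu,\vv}}{\partial \uu} = -\frac{1}{\sin\theta_{\uu,\vv}}\,\frac{1}{\|\uu\|\,\|\vv\|}\Big(\vv - \frac{\uu\cdot\vv}{\|\uu\|^2}\,\uu\Big).
\]

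Next I would assemble the product rule. Since $h'(\theta) = \cos\theta - \cos\theta - (\pi-\theta)\sin\theta = -(\pi-\theta)\sin\theta$, the term $h'(\theta)\,\partial\theta/\partial\uu$ contains the factor $\sin\theta$, which cancels the singular $1/\sin\theta$ above. Using the identity $\frac{\uu\cdot\vv}{\|\uu\|^2}\,\uu = \frac{\|\vv\|\cos\theta_{\uu,\vv}}{\|\uu\|}\,\uu$, the resulting $(\pi-\theta_{\uu,\vv})\cos\theta_{\uu,\vv}$ contribution in the direction $\uu/\|\uu\|$ exactly cancels the corresponding piece of $h(\theta_{\uu,\vv})$ coming from differentiating $\|\uu\|$. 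This is the crucial cancellation: what remains is $\frac{1}{2\pi}\|\vv\|\frac{\uu}{\|\uu\|}\sin\theta_{\uu,\vv}$ from the $\|\uu\|$-derivative together with $\frac{1}{2\pi}(\pi-\theta_{\uu,\vv})\vv$ from the angle-derivative, which is precisely the claimed formula.

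The main obstacle, and the only nontrivial point, is differentiability at the degenerate configurations where $\uu$ and $\vv$ are parallel or antiparallel, i.e.\ $\theta_{\uu,\vv}\in\{0,\pi\}$, since $\arccos$ is not differentiable at $\pm 1$ and the factor $1/\sin\theta$ blows up there, so the chain-rule derivation does not directly apply. I would treat these points by a direct argument: substituting $\cos\theta_{\uu,\vv}=\uu\cdot\vv/(\|\uu\|\|\vv\|)$ and Taylor expanding $h$ shows that the non-smooth dependence on the angle enters $g$ only through $\sin^2\theta = 1-\cos^2\theta$ and higher even combinations, so that $g$ behaves like $\|\uu\|\,\|\vv\|(\tfrac12 - O(\phi^2))$ near the parallel case and like $O(|\phi|^3)$ near the antiparallel case, where $\phi$ denotes the angular perturbation. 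Evaluating the difference quotients at these two points then confirms both that $g$ is differentiable and that the claimed expression extends continuously there, yielding $\tfrac12\vv$ in the parallel case and $\zero$ in the antiparallel case. Away from these points $g$ is a composition of smooth maps on $\uu\neq\zero$, so differentiability is immediate and the computation above applies verbatim, completing the proof.
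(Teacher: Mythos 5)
Your proposal is correct, and its core computation is the same as the paper's: differentiate the Cho--Saul closed form by the product/chain rule, observe that the singular $1/\sin\theta_{\uu,\vv}$ factor from differentiating $\arccos$ is killed by the $\sin\theta_{\uu,\vv}$ in $h'(\theta)=-(\pi-\theta)\sin\theta$, and then see the $(\pi-\theta_{\uu,\vv})\cos\theta_{\uu,\vv}\,\uu/\|\uu\|$ contributions cancel, leaving exactly the claimed expression. Where you genuinely diverge from the paper is at the degenerate configurations $\theta_{\uu,\vv}\in\{0,\pi\}$. The paper first uses rotation invariance of $g$ to place $\uu,\vv$ on a coordinate axis, uses symmetry to reduce to two partial derivatives, computes these at the degenerate point via L'H\^opital's rule, checks they are limits of the generic formula, and concludes differentiability from continuity of the partials; this rotation-plus-symmetry machinery is not incidental, as the paper reuses it later for the Hessian analysis of the loss. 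You instead argue directly from the local expansion $g=\|\uu\|\|\vv\|\left(\tfrac12-O(\phi^2)\right)$ near the parallel case and $g=O(|\phi|^3)$ near the antiparallel case, which yields Fr\'echet differentiability with gradients $\tfrac12\vv$ and $\zero$ respectively, matching the formula's continuous extension. This is arguably more direct and self-contained. Two small points to tighten: the phrase ``evaluating the difference quotients'' undersells what you need --- directional derivatives alone do not give differentiability, so you should state explicitly that the angular perturbation satisfies $\phi=O(\|\dd\|/\|\uu\|)$ uniformly in the direction of $\dd$, making the remainder $o(\|\dd\|)$ in the Fr\'echet sense (your expansion does deliver this); and the claim that the angle enters only through $\sin^2\theta$ ``and higher even combinations'' is not literally true (the expansion of $h$ at $\theta=0$ has a $\theta^3/3$ term), though this is harmless since that term is absorbed into your $O(\phi^2)$ remainder and only first-order behavior matters for the lemma.
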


We conclude by special-casing the results above to {\ourarch}. In this case, the entire model is specified by a single {\em filter} vector $\ww\in\reals^m$. The rows $\ww_i$ are mostly zeros, except for the indices $((i-1)m +1,\ldots,im)$ which take the values of $\ww$. Namely, $\ww_i = \left(\mathbf{0}_{(i-1)m},\ww,\mathbf{0}_{d-im}\right)$ where $\mathbf{0}_{l} \in \mathbb{R}^l$ is a zero vector. The same holds for the vectors $\ww^*_i$ with a weight vector $\ww^*$. This simplifies the loss considerably, since for all $i$:  $g(\ww_i,\ww_i)=\frac{1}{2}\left\|\ww\right\|^2$, and for all $i\neq j$: $g(\ww_i,\ww_j)=\frac{1}{2\pi}\left\|\ww\right\|^2$ and  $g(\ww_i,\ww^*_j)=\frac{1}{2\pi}\left\|\ww\right\|\left\|\ww^*\right\|$. Thus the loss $\ell(\ww)$ for {\ourarch}  yields (up to additive factors in $\ww^*$):
\begin{equation}
\label{non_overlap_loss_eq}
l(\ww)= \frac{1}{k^2} \Big[\gamma{\left \|\ww \right \|}^2 - 2kg(\ww,\ww^*) -2\beta\left \|\ww \right \|\left \|\ww^* \right \| \Big]
\end{equation}
where $\beta = \frac{k^2-k}{2\pi}$ and $\gamma = \beta + \frac{k}{2}$.

\ignore{
	\begin{equation}
	\label{non_overlap_loss_eq}
	\begin{split} 
	l(\ww)&= \frac{1}{n^2} \Big[\big(\frac{n}{2} + \frac{n^2-n}{2\pi}\big){\left \|\ww \right \|}^2 - 2ng(\ww,\ww^*) \\ &-(n^2-n)\frac{\left \|\ww \right \|\left \|\ww^* \right \|}{\pi} \Big]
	\end{split}
	\end{equation}
}

\ignore{
	\be
	\ell(\ww) =
	\frac{2\pi - 1 + n}{4}\|\ww\|^2 - 2\pi g(\ww,\ww^*) 
	+ (1-n){\left \|\ww \right \|\left \|\ww^* \right \|} 
	\label{eq:gaussian_loss}
	\ee
}

\ignore{
	\bea
	\ell(\ww)& =&
	\frac{1}{n^2} \Big[\big(\frac{n}{2} + \frac{n^2-n}{2\pi}\big){\left \|\ww \right \|}^2 - 2ng(\ww,\ww^*) \\
	&&  -(n^2-n)\frac{\left \|\ww \right \|\left \|\ww^* \right \|}{\pi} +  c(\ww^*)\Big]
	\eea
}

\ignore{
	\bea
	&& \expect{\cG}{(\mathbf{b}^T\sigma(A\xx))^2} - 2\expect{\cD}{(\mathbf{b}^T\sigma(A\xx))(\mathbf{b}^T\sigma(A^*\xx))} \\
	&&  + \expect{\cG}{(\mathbf{b}^T\sigma(A^*\xx))^2}
	\eea
	and
	\begin{equation}
	\begin{split}
	\mathbb{E}[(\mathbf{b}^T\sigma(A\xx))^2] &=  \sum_{i,j}{\mathbf{b}_i\mathbf{b}_j\mathbb{E}[\sigma(A_i \cdot \xx)\sigma(A_j \cdot \xx)}]
	\end{split}
	\end{equation}
	Hence it suffices to calculate $\mathbb{E}_\xx[\sigma(\ww \cdot \xx)\sigma(\vv \cdot \xx)]$ for all vectors $\ww,\vv \in \mathbb{R}^d$. This integral has a closed from expression.
}

\section{Learning {\ourarch} is NP-Complete}
\label{hardness_section}
The {\ourarch} architecture is a simplified convolutional layer with average pooling. However, as we show here, learning it is still a hard problem. This will motivate our exploration of distribution dependent results in \secref{non_overlapping_gaussian}.

Recall that our focus is on minimizing the squared error in \eqref{population_risk}. For this section, we do not make any assumptions on $\cD$. Thus $\cD$ can be a distribution with uniform mass on training points $\xx_1,\ldots,\xx_n$, recovering the empirical risk in \eqref{eq:empirical_risk}. We know that $\ell(\ww)$ in \eqref{population_risk} can be minimized by setting $\ww=\ww^*$ and the corresponding squared loss $\ell(\ww)$ will be zero. However, we of course do not know $\ww^*$, and the question is how difficult is it to minimize $\ell(\ww)$. In what follows we show that this is hard. Namely, it is an NP-complete problem to find a $\ww$ that comes $\epsilon_0$ close to the minimum of $\ell(\ww)$, for some constant $\epsilon_0$. 

We begin by defining the \textit{Set-Splitting-by-k-Sets} problem, which is a variant of the classic Set-Splitting problem \citep{Garey:1990}. After establishing the hardness of \textit{Set-Splitting-by-k-Sets}, we will provide a reduction from it to learning {\ourarch}.
\begin{definition}
	The \textit{Set-Splitting-by-k-Sets} decision problem is defined as follows:
	Given a finite set $S$ of $d$ elements and a collection $\cal C$ of at most $(k-1)d$ subsets $C_j$ of $S$, do there exist disjoint sets $S_1,S_2,...,S_k$ such that $\bigcup_i{S_i}=S$ and for all $j$ and $i$, $C_j \not\subseteq S_i$?
\end{definition}
For $k=2$ and without the upper bound on $|\cal C|$ this is known as the \textit{Set-Splitting} decision problem which is NP-complete \citep{Garey:1990}. Next, we show that \textit{Set-Splitting-by-k-Sets} is NP-complete. The proof is via a reduction from 3SAT and induction, and is provided in Appendix \ref{nonoverlap_hardness_supp}.

\ignore{
	\begin{prop}
		\textit{Set-Splitting-by-2-Sets} is NP-complete.
	\end{prop}
	\begin{proof}
		First, define an \textit{Equal-3SAT} problem to be a 3SAT problem with the additional restriction that the number of clauses and variables is the same. Then  \textit{Equal-3SAT} can be shown to be NP-complete via a reduction from 3SAT.\footnote{By adding clauses of the form ($w$) and ($x\vee y\vee z$) for new variables $w,x,y,z$ one can construct from a formula $\phi$ a formula $\psi$ with equal number of variables and clauses such that $\phi$ is satisfiable if and only if $\psi$ is.}
		Furthermore, the reduction from 3SAT to \textit{Set-Splitting} creates a set $S$ with $2n+1$ elements and $\cal C$ with $m+n$ subsets where $n$ and $m$ are the number of variables and clauses in a given CNF formula, respectively (\textcolor{red}{show reference}). Hence, this translates to a reduction from \textit{Equal-3SAT} to \textit{Set-Splitting-by-2-Sets} (in \textit{Set-Splitting-by-2-Sets} we require that $|\cal C| \leq |S|$).
	\end{proof}
}
\begin{prop}
	\label{prop:set-splitting}
	\textit{Set-Splitting-by-k-Sets} is NP-complete for all $k\geq 2$.
\end{prop}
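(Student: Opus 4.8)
The plan is to prove NP-completeness in three parts: membership in NP, a base case at $k=2$, and an induction on $k$. Membership is immediate, since a valid partition $S_1,\ldots,S_k$ serves as a certificate: one checks in polynomial time that the parts are disjoint, cover $S$, and that no $C_j$ is contained in any single $S_i$. So the work is entirely in establishing NP-hardness for every $k\ge 2$.

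For the base case $k=2$, I would reduce from 3SAT. First observe that 3SAT stays NP-complete when the number of clauses equals the number of variables (an \emph{Equal-3SAT} problem): one pads an arbitrary formula with fresh variables and trivially satisfiable dummy clauses until the two counts match, which preserves satisfiability. Then I would invoke the classical reduction from 3SAT to \emph{Set-Splitting}, which from a formula on $n$ variables and $m$ clauses produces a ground set $S$ with $2n+1$ elements (two per literal plus one special element) and a collection $\mathcal{C}$ of $n+m$ subsets (one set $\{x_i,\bar x_i\}$ per variable and one clause-gadget per clause), satisfiable iff $S$ admits a valid $2$-split. Starting from an \emph{Equal-3SAT} instance, where $m=n$, gives $|\mathcal{C}|=2n\le 2n+1=|S|$, so the cardinality bound $|\mathcal{C}|\le(2-1)|S|$ demanded by \emph{Set-Splitting-by-2-Sets} is met, and the base case follows.

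For the inductive step, assuming \emph{Set-Splitting-by-k-Sets} is NP-complete I would reduce it to \emph{Set-Splitting-by-(k+1)-Sets}. Given an instance $(S,\mathcal{C})$ with $|S|=d$ and $|\mathcal{C}|\le(k-1)d$, I introduce one fresh element $a$, set $S'=S\cup\{a\}$, and let $\mathcal{C}'=\mathcal{C}\cup\{\{a,x\}:x\in S\}$. The size-two sets $\{a,x\}$ act as an ``apex'': in any valid $(k+1)$-split, whichever part contains $a$ can contain no element of $S$ (else some $\{a,x\}$ would be monochromatic), so $S$ is in fact distributed among the remaining $k$ parts, and intersecting those parts with $S$ yields a valid $k$-split satisfying $\mathcal{C}$. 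Conversely, any valid $k$-split of $S$ extends to a $(k+1)$-split by placing $a$ alone in the last part, and every constraint in $\mathcal{C}'$ is then met. This two-directional equivalence is the core of the argument. The step I expect to require the most care is the propagation of the cardinality bound through the induction, since that bound is a nonstandard feature of this problem variant: the new collection satisfies $|\mathcal{C}'|=|\mathcal{C}|+d\le(k-1)d+d=kd$, while \emph{Set-Splitting-by-(k+1)-Sets} permits up to $k\,|S'|=k(d+1)=kd+k$ subsets, so the bound holds with room to spare. Verifying that the apex gadget simultaneously uses up exactly one of the $k+1$ parts and stays within the prescribed $(k-1)d$-type bound at every level is the main obstacle; once that is in place the induction closes and the proposition follows for all $k\ge 2$.
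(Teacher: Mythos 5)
Your proposal is correct and follows essentially the same route as the paper's proof: the base case $k=2$ via an \emph{Equal-3SAT} reduction composed with the classical 3SAT-to-Set-Splitting construction (yielding $|\mathcal{C}|=2n\le 2n+1=|S|$), and an inductive step that adjoins one fresh element paired with every old element, forcing that element to sit alone in its part. Your ``apex'' element $a$ and the sets $\{a,x\}$ are exactly the paper's element $d+1$ and sets $D_j=\{j,d+1\}$, and your cardinality bookkeeping $|\mathcal{C}'|\le kd<k(d+1)$ matches the paper's.
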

\ignore{
	\begin{proof}
		We will show a reduction from \textit{Set-Splitting-by-k-Sets} to \textit{Set-Splitting-by-(k+1)-Sets}. Given $S=\{1,2,...,n\}$ and $\cal C=\{C_j\}_j$ define $S'=\{1,2,...,n+1\}$ and $\cal C' = \cal C \cup \{d_j\}_j$ where $d_j = \{j,n+1\}$ for all $1\leq j \leq n$. Note that $|\cal C'| \leq kn < k(n+1)$.
		Assume that there are $S_1,...,S_k$ that split the sets in $\cal C$. Then if we define $S_{k+1} = \{n+1\}$, it follows that $\bigcup_{i=1}^{k+1}{S_i}=S$ and $S_1,...,S_k,S_{k+1}$ are disjoint and split the sets in $\cal C'$.
		
		Conversely, assume that $S_1,...,S_k,S_{k+1}$ split the sets in $\cal C'$. Let w.l.o.g. $S_{k+1}$ be the set that contains $n+1$. Then for all $1 \leq j \leq n$ we have $d_j \not\subseteq S_{k+1}$. It follows that for all $1 \leq j \leq n$, $j \notin S_{k+1}$, or equivalently, $S_{k+1} = \{n+1\}$. Hence, $\bigcup_{i=1}^k{S_i}=S$ and $S_1,...,S_k$ are disjoint and split the sets in $\cal C$, as desired.
	\end{proof}
}

\newcommand{\kopt}{k-Non-Overlap-Opt}
We next formulate the {\ourarch} optimization problem.
\begin{definition}
	The {\kopt} problem is defined as follows. The input is a distribution $\cD_{X,Y}$ over input-output pairs $\xx,y$ where $\xx \in \mathbb{R}^d$.
	If the input is realizable by a \textit{no-overlap} network with $k$ hidden neurons, then the output is a vector $\ww$ such that:
	\be
	\expect{\cD_{X,Y}}{\left(f(\xx;\ww)-y)\right)^2} < \frac{1}{4k^5d}
	\label{eq:kopt}
	\ee
	Otherwise an arbitrary weight vector is returned.
\end{definition}
The above problem returns a $\ww$ that minimizes the population-risk up to $\frac{1}{4k^5d}$ accuracy. It is thus easier than minimizing the risk to an arbitrary precision $\epsilon$ (see \secref{non_overlapping_gaussian}, Theorem \ref{non_overlap_poly_conv}).

\ignore{
	Denote by \textit{$k$-non-overlap} network, a \textit{non-overlap} network with $k$ hidden neurons. Define the following problem \textit{k-nonoverlap-training-approximation}:
	
	"Let $\{\mathbf{x}_i\}$ be a set of training examples, where $\mathbf{x}_i \in \mathbb{R}^n$ are distributed according to a distribution $\mathcal{D}$ and labels $f(\mathbf{x}_i) \in \mathbb{R}$. If $f$ is a \textit{$k$-nonoverlap} network, output a \textit{$k$-nonoverlap} network $h$ such that $$\mathbb{E}_{\mathbf{x}\sim  \mathcal{D}}[(h(\mathbf{x})-f(\mathbf{x})]^2 < \frac{1}{4k^5n}$$ 
	Otherwise, output an arbitrary \textit{$k$-nonoverlap} network."
}

We prove the following theorem, which uses some ideas from \citep{blum1993training}, but introduces additional constructions needed for the no overlap case.
\begin{thm}
	For all the $k \geq 2$, the {\kopt} problem is NP-complete.
\end{thm}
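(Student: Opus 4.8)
The plan is to prove both membership in NP and NP-hardness, the latter by a polynomial reduction from \textit{Set-Splitting-by-k-Sets}, which \propref{prop:set-splitting} already shows to be NP-complete. For membership (via the natural decision version, ``does there exist $\ww$ satisfying \eqref{eq:kopt}?''), I note that every instance I construct has $\cD_{X,Y}$ supported on a polynomial number of points of polynomial bit-complexity; given a rational candidate filter $\ww$, the output $f(\xx;\ww)=\frac1k\sum_i\relu{\ww\cdot\xpart{i}}$ is a piecewise-linear rational function, so $\expect{\cD_{X,Y}}{(f(\xx;\ww)-y)^2}$ can be evaluated exactly in polynomial time and the inequality in \eqref{eq:kopt} checked. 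Thus a satisfying $\ww$ of polynomial size is a valid certificate, and the real work is the reduction.

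Given an instance $(S,\cC)$ with $|S|=d'$ and $|\cC|\le(k-1)d'$, I will build a labeled point set (hence a finite $\cD_{X,Y}$) for a no-overlap network with $k$ hidden units whose filter $\ww$ carries one coordinate per element of $S$ together with a few auxiliary ``gadget'' coordinates. The construction has three families of points. A first family of \emph{normalization points} pins down the scale of $\ww$ and fixes the gadget coordinates, so that up to the allowed error $\ww$ behaves like the indicator of a candidate assignment. A second family of \emph{element points}, one per $s_j\in S$, exploits the defining feature of the architecture: the \emph{same} filter is applied to all $k$ parts, so a single point probes $\ww$ at $k$ locations and sums the thresholded responses $\relu{\ww\cdot\xpart{i}}$. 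By shifting the $k$ parts with distinct gadget offsets I force these $k$ tied ReLUs to implement $k$ distinct thresholds, turning the element coordinate into a genuine $k$-way (rather than binary) choice of color for $s_j$. A third family of \emph{subset points}, one per $C_j\in\cC$, places the indicator vector of $C_j$ across the parts so that the target label is attainable precisely when the elements of $C_j$ are not all assigned the same color, i.e.\ exactly the splitting condition $C_j\not\subseteq S_i$.

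For the forward direction, a valid split $S_1,\dots,S_k$ yields an explicit filter $\ww^*$ (the indicator of the assignment on the element coordinates, with fixed gadget values) realizing every point exactly, so the minimum risk is $0<\frac1{4k^5 d}$ and any {\kopt} solver must return a filter meeting \eqref{eq:kopt}. For the reverse direction I argue a \emph{gap}: on a NO instance every filter violates at least one subset constraint by a fixed margin, and since the point count is $O(kd')$ and the averaging factor $\tfrac1k$ shrinks responses, this forces $\ell(\ww)\ge\frac1{4k^5 d}$ for \emph{all} $\ww$. Hence the minimum risk is $0$ on YES instances and at least $\frac1{4k^5 d}$ on NO instances, a separation matching the tolerance in \eqref{eq:kopt}. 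To decide \textit{Set-Splitting-by-k-Sets} one runs the solver, evaluates $\ell$ on the returned $\ww$ in polynomial time, and answers YES iff the bound is met; on YES instances $\ww$ can moreover be rounded to recover an explicit split.

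The main obstacle is the tied-weight constraint, which is absent from the Blum--Rivest reduction \citep{blum1993training}: there the two hidden units carry independent weights, whereas here a single shared filter must simultaneously encode a $k$-way partition and satisfy all subset constraints. Designing the offset gadget that coerces one filter into a faithful $k$-ary coloring, and then proving that the coloring survives \emph{approximate} optimization, is the delicate part. Because {\kopt} only guarantees error below $\frac1{4k^5 d}$ rather than exact realizability, I must show the construction is robust: the separation between valid and invalid responses must exceed the square root of the per-point error budget, so that no element's color can be flipped and no monochromatic subset spuriously satisfied within tolerance. Tuning the margins and point weights so that this separation scales as the stated $\frac1{4k^5 d}$ — using $|\cC|\le(k-1)d'$ to control the point count — is what makes the reduction go through.
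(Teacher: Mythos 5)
Your high-level skeleton --- a reduction from \textit{Set-Splitting-by-k-Sets}, a forward direction exhibiting an explicit zero-risk filter, and a reverse direction turning any filter meeting the $\frac{1}{4k^5d}$ tolerance into splitting sets via per-point error bounds and rounding --- is exactly the paper's, and your NP-membership remark is fine. The genuine gap is the central gadget, the one piece that must deal with weight tying, which you yourself flag as ``the delicate part'' but never construct, and whose described mechanism fails. You propose one filter coordinate per element of $S$, with the $k$ parts of an element point carrying that coordinate shifted by distinct gadget offsets, so that the $k$ tied ReLUs realize $k$ thresholds and the coordinate makes a ``genuine $k$-way choice.'' But the network output on such a point is $\frac{1}{k}\sum_{i}\relu{w_{s}+b_{i}}$, a nondecreasing piecewise-linear function of the single scalar $w_{s}$; requiring it to lie within any tolerance of a label confines $w_{s}$ to one interval, never to $k$ separated intervals, so no element point can encode a $k$-ary color. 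The subset points inherit the same monotonicity obstruction: if the response to the point for $C_j$ depends on $\ww$ only through $\sum_{s\in C_j}w_s$ (as it does when each part sees the indicator of $C_j$ plus a fixed offset), then for $k\geq 3$ the condition ``not all elements share a color'' is not expressible, since a monochromatic set at a middle color value yields a sum lying strictly between the sums of legitimate mixed colorings, and an interval constraint cannot exclude an interior point. (For $k=2$, ``non-monochromatic'' \emph{is} an interval condition on the sum, which is why the threshold encoding feels plausible; it breaks precisely for $k\geq 3$, while the theorem is claimed for all $k\geq 2$.)

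The paper's construction dissolves the tying problem rather than fighting it: it blows up the input dimension so that the tied filter secretly contains $k$ untied weight vectors. Inputs live in $\reals^{k^2d}$, the filter $\ww\in\reals^{kd}$ is read as $k$ blocks $(\ww_1,\ldots,\ww_k)$ with $\ww_l\in\reals^{d}$, and the gadget $\dd(\vv)$ places the payload $\vv$ into the $i$-th block of the $i$-th part and zeros elsewhere, so that part $i$ responds with $\relu{\ww_i\cdot\vv}$ and $f(\dd(\vv);\ww)=\frac{1}{k}\sum_{i}\relu{\ww_i\cdot\vv}$, i.e., an ordinary fully-connected (untied) one-hidden-layer network evaluated at $\vv$. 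With this single idea, element points $\dd(\eee_i)$ labeled $\frac{1}{k}$ and subset points $\dd(\sum_{i\in C_j}\eee_i)$ labeled $0$ make the rest of your outline go through with no normalization points and no margin tuning: each training point has mass at least $\frac{1}{kd}$, so risk below $\frac{1}{4k^5d}$ forces every per-point error below $\frac{1}{2k^2}$; defining $S_l=\{i\mid \ww_l^{T}\eee_i>\frac{1}{2k}\}$ (made disjoint arbitrarily) then gives coverage of $S$ from the $\frac{1}{k}$ labels and non-containment of every $C_j$ from the $0$ labels. To repair your write-up you would need either a proof that offset gadgets can create disconnected feasible sets for a tied filter (they cannot, by the monotonicity argument above) or to adopt some equivalent of this block construction.
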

\begin{proof}
	We will show a reduction from \textit{Set-Splitting-by-k-sets} to {\em{\kopt}}. Assume a given instance of the \textit{Set-Splitting-by-k-sets} problem with a set $S$ and collection of subsets $\cal C$. Denote $S=\{1,2,...,d\}$ and $|\cal C|$ $\leq (k-1)d$.  Let $\zero_d \in \reals^d$ be the all zeros vector. For a vector $\vv \in \reals^d$, define the vector $\dd_i(\vv) \in \reals^{kd}$ to be the concatenation of $i-1$ vectors $\zero_d$, followed by $\vv$ and $k-i$ vectors $\zero_d$, and let $\dd(\vv) = (\dd_1(\vv),\dd_2(\vv),...,\dd_k(\vv)) \in \reals^{k^2d}$. 
	
	We next define a training set for {\em{\kopt}}. For each element $i\in S$ define an input vector $\xx_i = \dd(\eee_i)$, where $\eee_i$ is the standard basis of $\reals^d$. Assign the label $y_i={1\over k}$ to this input.  In addition, for each subset $C_j\in \cal C$ define the vector $\xx_{d+j} = \dd(\sum_{i\in C_j}{\eee_i})$ and label $y_{d+j} = 0$. Thus we have $|S|+|\cal C|$ inputs in $\reals^{k^2d}$. Let $\cD_{X,Y}$ be a uniform distribution over the training set points (i.e., each point with probability at least $\frac{1}{kd}$ since $|\cal C|$ $\leq (k-1)d$).
	
	\ignore{
		training set on the $k^2d$-dimensional hypercube $\{0,1\}^{k^2d}$. For each $i \in  S$ define the vector $\mathbf{d}(\mathbf{e}_i) \in \mathbb{R}^{k^2d}$ where $\mathbf{e}_i \in \mathbb{R}^d$ is $i$th one-hot vector, with label $\frac{1}{k}$. In addition, for each $C_j$ define the vector $\mathbf{d}(\sum_{i\in C_j}{\mathbf{e}_i}) \in \mathbb{R}^{k^2d}$ with label $0$. Denote by $f$ the label function, i.e., $f(\mathbf{d}(\mathbf{e}_i)) = \frac{1}{k}$ and  $f(\mathbf{d}(\sum_{i\in C_j}{\mathbf{e}_i})) = 0$ for all $i$ and $j$. We let $\mathcal{D}$ be a uniform distribution over the training set points (i.e., each point with probability at least $\frac{1}{kd}$ since $|\cal C| \leq (k-1)d$).
	}	
	We will now show that the given instance of \textit{Set-Splitting-by-k-sets} has a solution (i.e., there exist splitting sets) if and only if {\em{\kopt}} returns a weight vector with low risk. First, assume there exist splitting sets $S_1,...,S_k$. 
	For each $1 \leq l \leq k$ define the vector $\aa^{S_l} \in \reals^d$ such that for all $i \in S_l$, $a^{S_l}_i = 1$ and $a^{S_l}_i = -d$ otherwise. Define a {\ourarchsngl} with $k^2d$ inputs and weight vector $\ww = (\aa^{S_1}, \aa^{S_2},...,\aa^{S_k}) \in \reals^{kd}$. Then for all $1 \leq i \leq d$ we have:
	\be
	f(\xx_i;\ww)=\frac{\sum_{l=1}^{k}{\sigma((\aa^{S_l})^T\eee_i)}}{k} =\frac{1}{k}  = y_i
	\ee
	and for all $j$:
	\be
	f(\xx_{d+j};\ww)=\frac{\sum_{l=1}^{k}{\sigma((\mathbf{a}^{S_l})^T(\sum_{i\in C_j}{\eee_i}))}}{k} = 0  = y_{d+j}
	\ee
	where the last equality follows since for all $l$ and $j$, $C_j \not \subseteq S_l$. Therefore there exists a $\ww$ for which the error in \eqref{eq:kopt} is zero and {\em{\kopt}} will return a weight vector with low risk.
	
	Conversely, assume that {\em{\kopt}} returned a $\ww\in\reals^{kd}$ with risk less than $\frac{1}{4k^5d}$ on $\cD_{X,Y}$ above. Denote by $\ww = (\ww_1,\ww_2,...,\ww_k)$, where $\ww_l \in \reals^d$.  We will show that this implies that there exist $k$ splitting sets. 
	\ignore{
		Note that generally in this case we are not guaranteed that $f$ is a \textit{$k$-non-overlap} network, however since we will show existence of splitting sets, this will follow from the previous argument.}
	For all $\xx',y'$ in the training set it holds that:\footnote{The LHS is true because for a non-negative random variable $X$, $E[X]\geq p(x)x$ for all $x$, and in our case $p(x)\geq {1\over {kd}}$.}
	\be
	\frac{\left(f(\xx';\ww) -y'\right)^2}{kd} \leq \mathbb{E}_{\cD_{X,Y}}[(f(\xx;\ww)-y)^2] < \frac{1}{4k^5d} \nonumber
	\ee
	\\	
	This implies that for all $i$ and $j$, 
	\be
	|f(\dd(\eee_i);\ww) - \frac{1}{k}| < \frac{1}{2k^2}  \ \ , \ \ 
	|f(\dd(\sum_{i\in C_j}{\eee_i});\ww)| < \frac{1}{2k^2}
	\ee
	Define sets $S_l = \{i \mid \ww_l^T\eee_i > \frac{1}{2k}\}$ for $1 \leq l \leq k$ and WLOG assume they are disjoint by arbitrarily assigning points that belong to more than one set, to one of the sets they belong to. We will next show that these $S_l$ are splitting. Namely, it holds that $\bigcup_l{S_l}=S$ and no subset $C_j$ is a subset of some $S_l$.
	
	Since $f(\dd(\eee_i);\ww) = \frac{\sum_{l=1}^{k}{\sigma(\ww_l^T\eee_i)}}{k} > \frac{1}{k} - \frac{1}{2k^2} > \frac{1}{2k}$ for all $i$, it follows that for each $i \in S$ there exists $1 \leq l \leq k$ such that $\ww_l^T\eee_i > \frac{1}{2k}$. Therefore, by the definition of $S_l$ we deduce that $\bigcup_l{S_l}=S$. To show the second property, assume by contradiction that for some $j$ and $m$, $C_j \subseteq S_m$. Then $\ww_m^T(\sum_{i\in C_j}{\eee_i}) > \frac{|C_j|}{2k}$, which implies that $f(\dd(\sum_{i\in C_j}{\eee_i});\ww) = \frac{\sum_{l=1}^{k}{\sigma(\ww_l^T(\sum_{i\in C_j}{\eee_i}))}}{k} > \frac{|C_j|}{2k^2} \geq \frac{1}{2k^2}$, a contradiction. This concludes our proof. 
\end{proof}

To conclude, we have shown that {\ourarch} are hard to learn if one does not make any assumptions about the training data. In fact we have shown that finding a $\ww$ with loss at most ${1\over 4 k^5 d}$ is hard. In the next section, we show that certain distributional assumptions make the problem tractable.

\section{{\ourarch} can be Learned for Gaussian Inputs}
\label{non_overlapping_gaussian}

In this section we assume that the input features $\xx$ are generated via a Gaussian distribution $\cG$, as in \secref{preliminaries}. We will show that in this case, gradient descent will converge with high probability to the global optimum of $\ell(\ww)$  (\eqref{non_overlap_loss_eq}) in polynomial time.

\ignore{
	Following the result of the previous section, we see that we need to restrict the definition of the problem \textit{k-nonoverlap-training-approximation} in order to show tractability of learning a \textit{non-overlap} network. Indeed, here we restrict the distribution $\mathcal{D}$ of the training data - we now assume that the entries of the data are standard i.i.d. gaussian random variables. In this case we show that a randomly initialized gradient descent will converge to the unique global minimum of the loss function. Moreover, we show that with high probability gradient descent converges to a point close to the global minimum in polynomial number of steps. Consequently, this implies that gradient descent solves the \textit{k-nonoverlap-training-approximation} problem under the Gaussian assumption on $\mathcal{D}$. We only sketch proof ideas in this section, the full details can be found in the supplementary material.
}

In order to analyze convergence of gradient descent on $\ell$, we need a characterization of all the critical and non-differentiable points. We show that $\ell$ has a non-differentiable point and a degenerate saddle point.\footnote{A saddle point is degenerate if the Hessian at the point has only non-negative eigenvalues and at least one zero eigenvalue.} Therefore, recent methods for showing global convergence of gradient-based optimizers on non-convex objectives \citep{LeeSJR16,ge2015escaping} cannot be used in our case, because they assume all saddles are strict \footnote{A saddle point is strict if the Hessian at the point has at least one negative eigenvalue.} and the objective function is continuously differentiable everywhere.

The characterization is given in the following lemma. The proof relies on the fact that $\ell(\mathbf{w})$ depends only on $\left\|\ww\right\|$,$\left\|\ww^*\right\|$ and $\theta_{\ww,\ww^*}$, and therefore w.l.o.g. it can be assumed that  $\ww^*$ lies on one of the axes. Then by a symmetry argument, in order to prove properties of the gradient and the Hessian, it suffices to calculate partial derivatives with respect to at most three variables.
\begin{lem}
	\label{LossPropertiesLem_conv}
	Let $\ell(\ww)$ be defined as in \eqref{non_overlap_loss_eq}. Then the following holds:
	\begin{enumerate}
		\item $\ell(\ww)$ is differentiable if and only if $\ww \neq \mathbf{0}$.
		\item For $k>1$, $\ell(\ww)$ has three critical points: 
		\begin{enumerate}
			\item  A local maximum at $\ww = \mathbf{0}$.
			\item  A unique global minimum at $\ww=\ww^*$. 
			\item  A degenerate saddle point at  $\ww = -(\frac{k^2-k}{k^2 + (\pi -1) k})\ww^*$.
		\end{enumerate}	
		
		For $k=1$, $\ww = \mathbf{0}$ is not a local maximum and the unique global minimum $\ww^*$ is the only differentiable critical point.
		
	\end{enumerate}
	
\end{lem}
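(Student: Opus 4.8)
The plan is to exploit the fact, noted just before the lemma, that $\ell$ depends on $\ww$ only through $\|\ww\|$, $\|\ww^*\|$ and the angle $\theta_{\ww,\ww^*}$, so I may rotate coordinates and take $\ww^*=s\,\eee_1$ with $s=\|\ww^*\|>0$. Differentiating \eqref{non_overlap_loss_eq} and substituting the gradient of $g$ from Lemma~\ref{kernel_gradient} gives, for $\ww\neq\mathbf 0$,
$$\nabla\ell(\ww)=\frac{2}{k^2}\Big[\big(\gamma-\tfrac{\beta s}{\|\ww\|}-\tfrac{k s\sin\theta_{\ww,\ww^*}}{2\pi\|\ww\|}\big)\ww-\tfrac{k(\pi-\theta_{\ww,\ww^*})}{2\pi}\,\ww^*\Big].$$
Whenever $\ww$ and $\ww^*$ are linearly independent, a differentiable critical point would force the $\ww^*$-coefficient $\tfrac{k(\pi-\theta)}{2\pi}$ to vanish, i.e.\ $\theta=\pi$, contradicting independence; hence every differentiable critical point lies on the line $\ww=t\,\ww^*$. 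Substituting $\theta=0$ (for $t>0$) and $\theta=\pi$ (for $t<0$) into the resulting scalar equation yields exactly $t=1$ (using $\gamma=\beta+k/2$), giving the global minimum $\ww=\ww^*$, and $t=-\beta/\gamma=-\frac{k^2-k}{k^2+(\pi-1)k}$, giving the claimed saddle. For $k=1$ we have $\beta=0$, so the second root collapses to the origin and $\ww^*$ is the only differentiable critical point, as asserted.

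For part 1, away from the origin $\|\ww\|^2$ and $\|\ww\|$ are smooth and $g(\ww,\ww^*)$ is differentiable by Lemma~\ref{kernel_gradient}, so $\ell$ is differentiable. At the origin I would isolate the positively homogeneous degree-one part of $\ell$, namely $p(\ww)=-\frac{s}{k^2}\|\ww\|\big(2\beta+\tfrac1\pi h(\theta_{\ww,\ww^*})\big)$ with $h(\theta):=\sin\theta+(\pi-\theta)\cos\theta$; since the smooth $\gamma\|\ww\|^2$ term does not affect differentiability at $\mathbf 0$, $\ell$ is differentiable there iff $p$ is linear, which fails because $h$ genuinely depends on direction. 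Equivalently, the one-sided directional derivatives along $\uu$ and $-\uu$ fail to sum to zero.

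To classify the origin I expand $\ell(r\uu)-\ell(\mathbf 0)=-\frac{r}{k^2}\big(\tfrac{ks}{\pi}h(\theta_\uu)+2\beta s\big)+O(r^2)$ for a unit vector $\uu$. A short computation gives $h'(\theta)=-(\pi-\theta)\sin\theta\le0$, so $h$ decreases on $[0,\pi]$ from $\pi$ to $h(\pi)=0$, and the bracket is at least $2\beta s$. For $k>1$ ($\beta>0$) it is strictly positive in every direction, so $\ell$ strictly decreases away from $\mathbf 0$ and the origin is a strict local maximum; for $k=1$ ($\beta=0$) the bracket vanishes at $\theta_\uu=\pi$, where the leading term becomes $\tfrac{\gamma}{k^2}r^2>0$, so $\ell$ increases in that direction and $\mathbf 0$ is not a local maximum.

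The real obstacle is showing the second critical point is a \emph{degenerate} saddle. I would use rotational symmetry about the $\ww^*$-axis: at a point on that axis the Hessian is diagonal, with a single radial entry and an $(m-1)$-fold perpendicular entry (reflection symmetry kills all cross terms and rotational symmetry forces the perpendicular block to be a scalar multiple of the identity). The radial entry comes from the one-dimensional restriction $\phi(t)=\ell(t\ww^*)=\frac{s^2}{k^2}(\gamma t^2+2\beta t)$ for $t<0$, giving curvature $\tfrac{2\gamma}{k^2}>0$. For the perpendicular entry I perturb $\ww=-c\,\eee_1+\epsilon\,\eee_2$ with $c=\beta s/\gamma$; the delicate point is that near $\theta=\pi$ one has $h(\theta)=\tfrac13\delta^3+O(\delta^4)$ with $\delta=\theta-\pi\asymp|\epsilon|/c$, so $g(\ww,\ww^*)$ contributes only at order $|\epsilon|^3$ and the $\epsilon^2$ terms from $\gamma\|\ww\|^2$ and $-2\beta\|\ww\|s$ cancel \emph{exactly} because $c=\beta s/\gamma$. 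Hence the perpendicular curvature is $0$, the Hessian is positive semidefinite with $m-1$ zero eigenvalues, and the surviving negative cubic term shows $\ell$ strictly decreases along every perpendicular direction, making the point a genuine non-minimizing degenerate saddle. Finally, uniqueness of the global minimum follows since $\ell\ge0$ with $\ell(\ww^*)=0$, $\ell$ is coercive (the $\gamma\|\ww\|^2/k^2$ term dominates), and the only critical or non-differentiable points are the three enumerated, of which only $\ww^*$ attains value $0$.
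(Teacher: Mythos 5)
Your proposal is correct in substance and follows the same overall strategy as the paper's proof: reduce by rotation invariance so that $\ww^*$ lies on an axis, read the gradient off Lemma~\ref{kernel_gradient}, force differentiable critical points onto the line $\vspan\{\ww^*\}$ by the linear-independence argument, solve the scalar equations at $\theta=0$ and $\theta=\pi$ (your roots $t=1$ and $t=-\beta/\gamma$ match the paper's exactly), and classify $\ww=\zero$ by the per-direction quadratic $a x^2-bx$ in $x=\|\ww\|$. Where you genuinely depart from the paper is the degenerate-saddle computation. The paper grinds out the Hessian entries at $\ww=-\tfrac{\beta}{\gamma}\ww^*$ by explicit difference quotients and L'H\^opital's rule, obtaining a diagonal matrix with one positive entry $2\gamma/k^2$ and zeros elsewhere. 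You instead invoke the $O(m-1)$ stabilizer symmetry to force the diagonal-plus-scalar-block structure, and then compute the perpendicular entry by a Taylor expansion exploiting $h(\theta)=\sin\theta+(\pi-\theta)\cos\theta=\tfrac13(\pi-\theta)^3+O((\pi-\theta)^4)$ near $\theta=\pi$, so that $g(\ww,\ww^*)$ enters only at order $|\epsilon|^3$ while the $\epsilon^2$ contributions of $\gamma\|\ww\|^2$ and $-2\beta\|\ww\|\|\ww^*\|$ cancel precisely because the critical radius is $\beta\|\ww^*\|/\gamma$. This is cleaner and more explanatory than the paper's computation, and it buys you something the paper argues separately: the surviving negative cubic term shows directly that the point is not a local minimum (the paper instead notes that decreasing $\theta$ at fixed norm decreases the loss). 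Your non-differentiability argument at the origin (a positively homogeneous degree-one function is differentiable at $\zero$ only if linear, which fails since $h$ depends on direction) is also a tidier route than the paper's one-sided L'H\^opital computation, and is valid.

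Two caveats you should fix. First, a normalization slip in your closing uniqueness argument: \eqref{non_overlap_loss_eq} is the population risk \emph{up to an additive constant} (the $g(\ww_i^*,\ww_j^*)$ terms are dropped), so it is false that $\ell\geq 0$ with $\ell(\ww^*)=0$; in fact $\ell(\zero)=0$ while $\ell(\ww^*)=-\gamma\|\ww^*\|^2/k^2<0$. This clause is harmless only because the rest of your argument does not need it: coercivity gives existence of a global minimizer, any minimizer must be among the three candidate points ($\zero$, the saddle, $\ww^*$), and your expansions show $\ell$ strictly decreases in some direction at both $\zero$ and the saddle, so $\ww^*$ is the unique global minimizer. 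Second, your symmetry argument for the Hessian block structure tacitly assumes the mixed second partials $\partial^2\ell/\partial w_1\partial w_j$ exist at the saddle; reflection symmetry only tells you they vanish \emph{if} they exist. The paper verifies existence by an explicit L'H\^opital limit, and your proof should either do the same or note that $\partial\ell/\partial w_1$ restricted to the perturbation line is an even, differentiable function of $\epsilon$ with an $O(\epsilon^2)$ increment, so the limit defining the cross partial exists and equals zero. (There is also a trivial typo: your homogeneous part should read $p(\ww)=-\tfrac{s}{k^2}\|\ww\|\bigl(2\beta+\tfrac{k}{\pi}h(\theta_{\ww,\ww^*})\bigr)$, with the factor $k$; your subsequent expansion uses the correct constant.)
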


\ignore{We note that recent methods for showing global convergence of non-convex gradient descent \citep{ge2015escaping,LeeSJR16}  cannot be used in our case because our $\ell(\ww)$ is not differentiable as required by these results. The dependence of the convergence rate on $\epsilon$ is similar to \citep{ge2015escaping}), but since our focus here was to show poly-time in ${1\over\epsilon}$ we leave further improvements to future work.}

We next consider a simple gradient descent update rule for minimizing $\ell(\ww)$ and analyze its convergence. Let $\lambda > 0$ denote the step size. Then the update at iteration $t$ is simply:
\be
\ww_{t+1} = \ww_t - \lambda\nabla \ell(\ww_t)
\label{eq:gd_updates}
\ee

Our main result, stated formally below, is that the above update is guaranteed to converge to an $\epsilon$ accurate solution after $O({1\over\epsilon^2})$ iterations. We note that the dependence of the convergence rate on $\epsilon$ is similar to standard results on convergence of gradient descent to stationary points \citep[e.g., see discussion in][]{allen2016variance}.

\begin{thm}
	\label{non_overlap_poly_conv}
	Assume $\left \|\ww^* \right\| = 1$.\footnote{Assumed for simplicity, otherwise $\left\|\ww^*\right\|$ is a constant factor.} For any $\delta > 0$ and $0 < \epsilon < \frac{\delta\sin\pi\delta}{k}$, there exists $0 < \lambda < 1$ \footnote{$\lambda$ can be found explicitly.} such that with probability at least $1-\delta$, gradient descent initialized randomly from the unit sphere with learning rate $\lambda$ will get to a point $\ww$ such that $\ell(\ww) \leq O(\epsilon)$ \footnote{$O(\cdot)$ hides a linear factor in $d$.} in $O(\frac{1}{\epsilon^2})$ iterations.
\end{thm}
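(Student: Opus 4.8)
The plan is to exploit the fact, already visible in Lemma~\ref{kernel_gradient}, that $\nabla\ell(\ww)$ always lies in $\vspan(\ww,\ww^*)$: the radial part is proportional to $\ww$ and the remaining part is proportional to $\ww^*$. Consequently every gradient-descent iterate stays in the two-dimensional plane $P=\vspan(\ww_0,\ww^*)$, and I would reduce the whole analysis to polar coordinates $(r_t,\theta_t)$ in $P$, where $r_t=\|\ww_t\|$ and $\theta_t=\theta_{\ww_t,\ww^*}$. Using \eqref{non_overlap_loss_eq} together with Lemma~\ref{LossPropertiesLem_conv}, I would write $\nabla\ell$ explicitly as a radial component (driving $r$ toward $\|\ww^*\|=1$) and a tangential component proportional to $\sin\theta$ (driving $\theta$ toward $0$), and record the induced updates for $r_t$ and $\theta_t$. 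This turns a high-dimensional non-convex problem into the analysis of a planar flow with the three known critical configurations: the minimum at $\theta=0,\,r=1$, the local maximum at $r=0$, and the degenerate saddle at $\theta=\pi$.

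Next I would establish an invariant compact region $\cA=\{(r,\theta): r\in[r_{\min},r_{\max}],\ \theta\in[0,\pi(1-\delta)]\}$ with $r_{\min}>0$, and argue that, for a sufficiently small step size $\lambda$, gradient descent started in $\cA$ never leaves it. The two ingredients are (i) monotonicity of the angle — since the tangential component of the gradient has a definite sign for $\theta\in(0,\pi)$, $\theta_t$ decreases and in particular never crosses into the saddle's basin near $\theta=\pi$ — and (ii) boundedness of the norm, which follows because the radial dynamics contracts $r$ toward $1$ and hence keeps it bounded away from $0$ and from $\infty$. The point of staying away from $r=0$ is that $\ell$ is non-differentiable there and its Hessian blows up; on $\cA$, by contrast, $\ell$ has a bounded Hessian and hence a Lipschitz gradient with some constant $L$, so I would fix $\lambda\le 1/L$.

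With smoothness on $\cA$ in hand, the convergence is standard: the descent lemma gives $\ell(\ww_{t+1})\le \ell(\ww_t)-\tfrac{\lambda}{2}\|\nabla\ell(\ww_t)\|^2$, and telescoping over $T$ steps yields $\min_{t<T}\|\nabla\ell(\ww_t)\|^2\le \tfrac{2(\ell(\ww_0)-\ell(\ww^*))}{\lambda T}$. Taking $T=O(1/\epsilon^2)$ produces an iterate with $\|\nabla\ell(\ww_t)\|\le\epsilon$. The final and crucial step is to convert a small gradient into a small loss: on $\cA$ the origin and the saddle are excluded, so the only near-stationary points are near $\ww^*$. Quantitatively I would lower-bound $\|\nabla\ell\|$ by the loss using the tangential component, where $\sin\theta\ge\sin(\pi(1-\delta))=\sin\pi\delta$ for $\theta\le\pi(1-\delta)$; this is precisely where the hypothesis $\epsilon<\frac{\delta\sin\pi\delta}{k}$ comes from, and it guarantees $\ell(\ww_t)\le O(\epsilon)$. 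Finally, a random $\ww_0$ on the unit sphere satisfies $\theta_0\le\pi(1-\delta)$ with probability at least $1-\delta$: the worst case is $m=2$, where the angle to a fixed vector is uniform on $[0,\pi]$ with density $1/\pi$, so the bad arc $(\pi(1-\delta),\pi]$ has probability at most $\delta$, and for larger $m$ this tail is even smaller. This is exactly the event needed to place $\ww_0\in\cA$.

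I expect the main obstacle to be establishing the invariant region for the \emph{discrete} iteration rather than for the continuous flow: with a finite step size one must rule out the angle overshooting past $\pi$ or the norm collapsing toward the non-differentiable point at the origin, and the degeneracy of the saddle together with the non-smoothness at $0$ means the off-the-shelf saddle-avoidance results of \citep{LeeSJR16,ge2015escaping} do not apply. Controlling these requires the explicit planar formulas for the updates and a careful choice of $\lambda$, together with making the constants line up so that the gradient-to-loss bound holds at the stated $\epsilon$ threshold.
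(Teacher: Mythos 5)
Your proposal follows essentially the same route as the paper's proof: reduction to the plane spanned by $\ww_0$ and $\ww^*$, monotone decrease of $\theta_t$ (avoiding the saddle at $\theta=\pi$), a lower bound keeping $\|\ww_t\|$ away from the non-differentiable origin, a Lipschitz-gradient bound on the resulting region feeding the standard descent lemma to get $\|\nabla\ell(\ww_t)\|\le\epsilon$ in $O(1/\epsilon^2)$ steps, and then the condition $\epsilon<\frac{\delta\sin\pi\delta}{k}$ to rule out large-angle near-stationary points and conclude proximity to $\ww^*$ (whence $\ell(\ww_t)\le d\|\ww_t-\ww^*\|^2=O(\epsilon)$), together with the same probability-$(1-\delta)$ initialization argument. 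The only substantive imprecision is describing the radial dynamics as contracting toward $1$ — the paper's actual norm bound is the cruder $\min\{\|\ww_0\|\sin\theta_0,\ \tilde\Omega(\sin^2\theta_0),\ \|\ww^*\|/8\}$ obtained by case analysis on $\theta_t\gtrless\pi/2$ — but this does not change the structure or validity of the argument.
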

The complete proof is provided in Appendix \ref{supp_non_overlapping_gaussian}. Here we provide a high level overview. In particular, we first explain why gradient descent will stay away from the two {\em bad} points mentioned in Lemma \ref{LossPropertiesLem_conv}.

First we note that the gradient of $\ell(\ww)$ at $\ww_t$ is given by: 
\be
\nabla \ell(\ww_t) = -c_1(\ww_t,\ww^*)\ww_t - c_2(\ww_t,\ww^*)\ww^* ~,
\ee
where $c_1$ and $c_2$ are two functions such that $c_1 \geq -1$ and $c_2\geq 0$. Thus the gradient is a sum of a vector in the direction of $\ww_t$ and a vector in the direction of $\ww^*$. At iteration $t+1$ we have:

\be
\ww_{t+1} = (1+\lambda c_1(\ww_t,\ww^*))\ww_t + \lambda c_2(\ww_t,\ww^*)\ww^*
\label{eq:w_t_plus_one}
\ee
It follows that for $\lambda < 1$ the angle between $\ww_t$ and $\ww^*$ will decrease in each iteration. Therefore, if $\ww_0$ has an angle with $\ww^*$ that is not $\pi$, we will never converge to the saddle point in Lemma \ref{LossPropertiesLem_conv}.  

Next, assuming $\|\ww_0\|>0$ and that the angle between $\ww_0$ and $\ww^*$ is at most $(1-\delta)\pi$ (which occurs with probability $1-\delta$), it can be shown that the norm of $\ww_t$ is always bounded away from zero by a constant $M = \tilde{\Omega}(1)$.\footnote{\label{foot:tilde}$\tilde{\Omega}$ and $\tilde{O}$ hide factors of $\left\|\ww^*\right\|$, $\theta_{\ww_0,\ww^*}$, $k$ and $\delta$.} The proof is quite technical and follows from the fact that $\ww = \mathbf{0}$ is a local maximum.\footnote{The proof holds even for $k=1$ where $\ww = \mathbf{0}$ is not a local maximum.}

The fact that $\ww_t$ stays away from the {\em problematic} points allows us to show that $\ell(\ww)$ has a Lipschitz continuous gradient on the line between $\ww_{t}$ and $\ww_{t+1}$, with constant $L = \tilde{O}(1)$.\textsuperscript{\ref{foot:tilde}} By standard optimization analysis \citep{nesterov2004introductory} it follows that after $T=O({1\over \epsilon^2})$ iterations we will have
$\|\nabla l(\ww_{t})\|\leq O(\epsilon)$ for some $0 \leq t \leq T$. This in turn can be used to show that $\ww_t$ is $O(\sqrt{\epsilon})$-close to $\ww^*$. Finally, since $\ell(\ww) \leq d{\left\|\ww-\ww^*\right\|}^2$, it follows that $\ww_t$ approximates the global minimum to within $O(\epsilon)$ accuracy.


\ignore{
	Next, we observe that if the norm of $\ww_t$ is sufficiently small, then $-\nabla_{\ww_t}$ points in the direction of $\ww_t$. This allows us to show that in every iteration of gradient descent, $\ww_t$ is bounded away from $0$ by a constant $M = \tilde{\Omega}(1)$. It follows that $\nabla l$ is Lipschitz continuous with constant $L = \tilde{O}(1)$ at all points on the two dimensional plane defined by $\ww_0$ and $\ww^*$ and that are bounded away from $0$ by $M$. Here $\tilde{O}$ and $\tilde{\Omega}$ hide factors which depend on $\left \| \ww_0 \right \| ,\left \| \ww^* \right \|$ and  $\theta_0$.  Note that for all $t$, $\ww_t$ is in the plane defined by $\ww_0$ and $\ww^*$. Finally, by standard optimization analysis it follows that $\lim_{t \to \infty}{  \nabla l(\ww_{t})} = 0$, from which we conclude that gradient descent almost surely converges to $\ww^*$. The result is given in the following theorem.
}


\ignore{
	We will now sketch the proof of the convergence of gradient descent to global minimum almost surely. The full details can be found in the supplementary material. First, we assume that the angle between $\ww_0$ and $\ww^*$ is not equal to $\pi$. Note that the set $\{\ww_0 \mid \theta_0 = \pi\}$ is of measure 0. 
	The gradient of $l$ at $\ww_t$ is given by
	\begin{equation}
	\begin{split}
	\nabla l(\ww_t) &= \frac{1}{n^2} \Bigg[\Bigg(n + \frac{n^2-n}{\pi} -  \frac{n\left \| \ww^* \right \|}{\pi\left \| \ww_t \right \|} \sin \theta \\ &- \frac{n^2-n}{\pi}\frac{\left \| \ww^* \right \|}{\left \| \ww_t \right \|} \Bigg) \ww_t - \frac{n}{\pi}\Big(\pi - \theta \Big) \ww^* \Bigg]
	\end{split}
	\end{equation}
	
	We see that the gradient can be partitioned into two components - one which is parallel to $\ww_t$ and one which is parallel to $\ww^*$, which we denote by $\nabla_{\ww_t}$ and $\nabla_{\ww^*}$, respectively. Since $-\nabla_{\ww^*}$ points in the direction of $\ww^*$, for sufficiently small $\lambda$, the angle between $\ww_t$ and $\ww^*$ decreases in each iteration of gradient descent. It follows that for $n>1$ gradient descent will almost surely not converge to the saddle point given in Lemma \ref{LossPropertiesLem_conv}. 
	
	Next, we observe that if the norm of $\ww_t$ is sufficiently small, then $-\nabla_{\ww_t}$ points in the direction of $\ww_t$. This allows us to show that in every iteration of gradient descent, $\ww_t$ is bounded away from $0$ by a constant $M = \tilde{\Omega}(1)$. It follows that $\nabla l$ is Lipschitz continuous with constant $L = \tilde{O}(1)$ at all points on the two dimensional plane defined by $\ww_0$ and $\ww^*$ and that are bounded away from $0$ by $M$. Here $\tilde{O}$ and $\tilde{\Omega}$ hide factors which depend on $\left \| \ww_0 \right \| ,\left \| \ww^* \right \|$ and  $\theta_0$.  Note that for all $t$, $\ww_t$ is in the plane defined by $\ww_0$ and $\ww^*$. Finally, by standard optimization analysis  it follows that $\lim_{t \to \infty}{  \nabla l(\ww_{t})} = 0$, from which we conclude that gradient descent almost surely converges to $\ww^*$. The result is given in the following theorem.\citep{Garey:1990}
	
	\begin{thm}
		\label{non_overlapping_thm}
		Assume GD is initialized at $\ww_0$ such that $\theta_0 \neq \pi$ and runs with a constant learning rate $0 < \lambda < \min\{\frac{2}{L},1\}$ where $L = \tilde{O}(1)$, then GD will converge to the global minimum.
	\end{thm}
	
	We note that it can be shown that if $\ww_0$ is initialized such that $\theta_0 = \pi$ then for $n=1$ gradient descent will converge to $0$ and for $n > 1$ it will converge to the saddle point given in Lemma \ref{LossPropertiesLem_conv}. 
	
	Next, we give convergence rate guarantees. We show that with high probability gradient descent converges to a point close to the global minimum in polynomial number of steps. This follows from the proof of Theorem \ref{non_overlapping_thm}, where it is shown that $\lim_{t \to \infty}{  \nabla l(\ww_{t})} = 0$ at a polynomial rate. Moreover, if $\theta_t$ is bounded away from $\pi$ (this can be guaranteed if $\ww_0$ is initialized such that $\theta_0$ is bounded away from $\pi$) and the gradient is sufficiently small, then $\ww_t$ is close to $\ww^*$.

	\begin{thm}
		\label{non_overlap_poly_conv}
		Assume $l_1 \leq \left \|\ww^* \right\| \leq l_2$. For any $\delta > 0$ and $\epsilon < \frac{l_1\delta\sin\pi\delta}{k}$, there exists $0 < \lambda < 1$ such that with probability at least $1-\delta$, gradient descent initialized randomly from a ring of inner radius $r$ and outer radius $R$ centered at $0$ and with learning rate $\lambda$ will get to a point which is $O(\sqrt{\epsilon})$-close to the global minimum in $O(\frac{1}{\epsilon^2})$ iterations.
	\end{thm}
}

Theorem \ref{non_overlap_poly_conv} implies that gradient descent converges to a point $\ww$ such that $\ell(\ww) \leq \frac{1}{d^2}$ in time $O(poly(d))$ where $d$ is the input dimension.\footnote{Note that the complexity of a gradient descent iteration is polynomial in $d$.} The following corollary thus follows. 
\begin{cor}
	Gradient descent solves the {\kopt} problem under the Gaussian assumption on $\mathcal{D}$ with high probability and in polynomial time.
\end{cor}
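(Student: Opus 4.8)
The plan is to derive this corollary as an essentially immediate consequence of Theorem~\ref{non_overlap_poly_conv}, with the only real work being to match the accuracy that the theorem delivers against the accuracy that {\kopt} demands. First I would observe that under the Gaussian assumption the {\kopt} objective coincides with the population risk: the inputs are drawn from $\cG$ and the labels satisfy $y=f(\xx;\ww^*)$, so $\expect{\cD_{X,Y}}{(f(\xx;\ww)-y)^2}=\ell(\ww)$ with $\ell$ as in \eqref{non_overlap_loss_eq}. In particular the instance is always realizable by the true filter $\ww^*$, so we are always in the non-trivial branch of the {\kopt} definition and it suffices to produce a $\ww$ of small population risk.

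Next I would fix the target confidence by choosing $\delta$ (a small constant, or even an inverse-polynomially small quantity, so that $1-\delta$ is high probability), and then pick $\epsilon$ small enough that the guarantee $\ell(\ww)\le O(\epsilon)$ of Theorem~\ref{non_overlap_poly_conv} falls strictly below the threshold $\frac{1}{4k^5 d}$ of \eqref{eq:kopt}. Making the hidden linear factor in $d$ explicit as $\ell(\ww)\le C d\,\epsilon$ for a constant $C$, it suffices to take $\epsilon<\frac{1}{4Ck^5 d^2}$. I would then check that this $\epsilon$ respects the admissible range $0<\epsilon<\frac{\delta\sin\pi\delta}{k}$ required by the theorem, which holds because the chosen $\epsilon$ is of order $1/(k^5 d^2)$ while the upper limit is of order $1/k$, so the constraint is satisfied for all $d$ large.

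With these choices Theorem~\ref{non_overlap_poly_conv} guarantees that, with probability at least $1-\delta$, gradient descent initialized uniformly on the unit sphere reaches a weight vector $\ww$ with $\ell(\ww)<\frac{1}{4k^5 d}$ after $O(1/\epsilon^2)=O(k^{10}d^4)$ iterations; since each iteration is computable in time polynomial in the input dimension $d$, the total running time is polynomial. Such a $\ww$ satisfies \eqref{eq:kopt}, and hence gradient descent solves {\kopt} under the Gaussian assumption with high probability in polynomial time. I do not expect any genuinely hard step: the only care needed is the bookkeeping of accuracy and iteration counts, namely ensuring that the fixed target accuracy $\frac{1}{4k^5 d}$ lies inside the theorem's admissible $\epsilon$-range once the hidden factor of $d$ in $\ell(\ww)\le O(\epsilon)$ is accounted for, and that the resulting $O(1/\epsilon^2)$ iteration bound stays polynomial in $d$.
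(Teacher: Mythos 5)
Your proposal is correct and follows essentially the same route as the paper: the paper likewise derives the corollary directly from Theorem~\ref{non_overlap_poly_conv}, observing that an inverse-polynomial accuracy target (below the $\frac{1}{4k^5d}$ threshold in the definition of {\kopt}) is reached in polynomially many gradient descent iterations, each of cost polynomial in $d$. Your bookkeeping is in fact more explicit than the paper's one-line argument, since you track the hidden linear factor of $d$ in the $\ell(\ww)\leq O(\epsilon)$ guarantee and verify that the chosen $\epsilon$ lies in the admissible range $0<\epsilon<\frac{\delta\sin\pi\delta}{k}$.
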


\ignore{
	
	\ignore{
		We first describe the setting in detail. We assume there are $n \leq \frac{d}{m}$ neurons and matrices $A,A^* \in \mathbb{R}^{n \times d}$ that correspond to a non-overlapping filter of size $m$, i.e., for all $1 \leq i \leq n$ $A_i = (\mathbf{0}_{(i-1)m},\ww,\mathbf{0}_{d-im})$, $A^*_i = (\mathbf{0}_{(i-1)m},\ww^*,\mathbf{0}_{d-im})$ where $\mathbf{0}_{l} = (0,0,...,0) \in \mathbb{R}^l$, $\ww$ is a vector of $m$ parameters and $\ww^* \in \mathbb{R}^m$.
		
		For each $i \neq j$ we have $\left \|{A}_i \right \| = \left \|A_j \right \| = \left \|\ww \right \|$, $\left \|A^*_i \right \| = \left \|A^*_j \right \| = \left \|\ww^* \right \|$. In addition, for all $1 \leq i \leq n$ the angle between $A_i$ and $A^*_i$ is the angle between $\ww$ and $\ww^*$ which we denote by $\theta$, and for each $i \neq j$ $A^*_i \perp A_j$, $A_i \perp A^*_j$, $A^*_i \perp A^*_j$.
	}
	Since the last term in \eqref{general_loss_function} is a constant, the loss function in this case is given by
	\begin{equation}
	\label{non_overlap_loss_eq}
	\begin{split} 
	\ell(\ww)&=\frac{1}{n^2}\Big[\sum_{i,j}{g(A_i,A_j)}- 2\sum_{i,j}{g(A_i,A^*_j)} + c(\ww^*)\Big]\\ &= \frac{1}{n^2} \Big[\big(\frac{n}{2} + \frac{n^2-n}{2\pi}\big){\left \|\ww \right \|}^2 - 2ng(\ww,\ww^*) \\ &-(n^2-n)\frac{\left \|\ww \right \|\left \|\ww^* \right \|}{\pi} +  c(\ww^*)\Big]
	\end{split}
	\end{equation}
	
	where $g$ is given in \eqref{kernel_function} and $c(\ww^*)$ is a constant that depends on $\ww^*$.
}


\section{Empirical Illustration of Tractability Gap}
\label{nonoverlap_experiments}

The results in the previous sections showed that {\ourarch} optimization is hard in the general case, but tractable for Gaussian inputs. Here we empirically demonstrate both the easy and hard cases. The training data for the two cases
will be generated by using the same $\ww^*$ but different distributions over $\xx$.

To generate the ``hard'' case, we begin with a set splitting problem. In particular, we consider a set $S$ with $40$ elements and a collection ${\cal C}$ of  $760$ subsets of $S$, each of size $20$. We choose $C_j$ such that there exists subsets $S_1$,$S_2$ that split the subsets $C_j$. We use the reduction in \secref{hardness_section} to convert this into a {\ourarch} optimization problem. This results in a training set of size $800$.  

\begin{wrapfigure}{r}{6.0cm}
	\includegraphics[width=6.0cm]{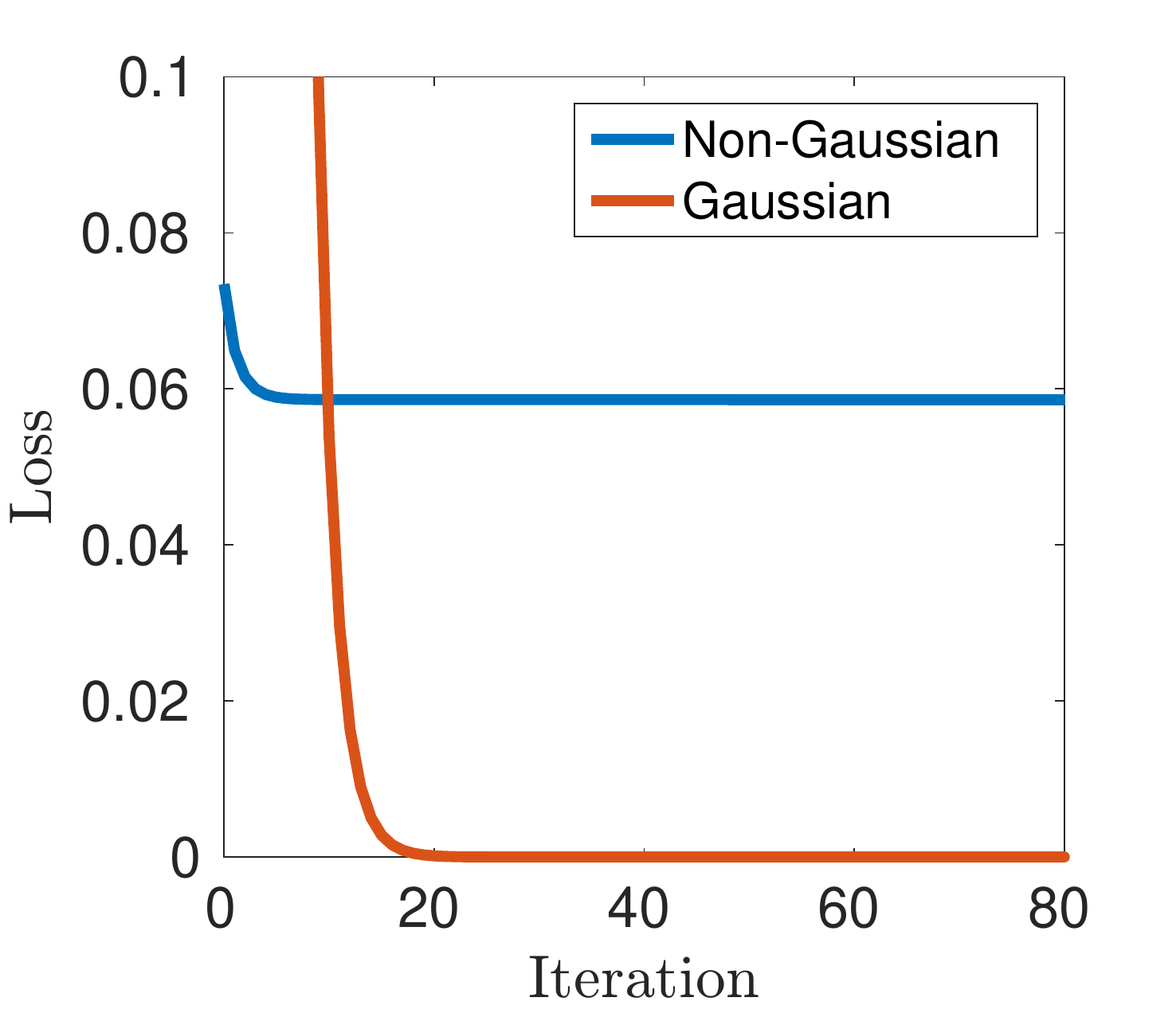}		
	\caption{Training loss of Adagrad on the Gaussian and Non-Gaussian datasets. See \secref{nonoverlap_experiments} for details.}
	\label{non_overlap_exp}
\end{wrapfigure}

Since we know the $\ww^*$ that solves the set splitting problem, we can use it to label data from a different distribution. Motivated by \secref{non_overlapping_gaussian} we use a Gaussian distribution $\cG$ as defined earlier and generate a training set of the same size (namely $800$) and labels given by the \textit{no-overlap} network with weight $\ww^*$.

\ignore{In both, $\ww^*$ is used to label the features, but the feature distribution is different, and corresponds to the two following cases: 
	
	\begin{itemize}
		\item Non Gaussian - Here the features $\xx$ are sampled from a uniform distribution over the XX features used in the set splitting reduction
		\item Gaussian - Here the features $\xx$ are sampled from a Gaussian distribution $\cG$. 
	\end{itemize}} 
	
	For these two learning problems we used AdaGrad \citep{duchi2011adaptive} to optimize the \textit{empirical} risk (plain gradient descent also converges, but AdaGrad requires less tuning of step size). For both datasets we used a random normal initializer and for each we chose the best performing learning rate schedule. The training error for each setting as a function of the number of epochs is shown in \figref{non_overlap_exp}. It is clear that in the non-Gaussian case, AdaGrad gets trapped at a sub-optimal point, whereas the Gaussian case is solved optimally.\footnote{We note that the value of $0.06$ attained by the non-Gaussian case is quite high, since the zero weight vector in this case has loss of order $0.1$.} In the Gaussian case AdaGrad converged to $\ww^*$. Therefore, given the Gaussian dataset we were able to recover the true weight vector $\ww^*$, whereas given the data constructed via the reduction we were not, even though both datasets were of the same size. We conclude that these empirical findings are in line with our theoretical results.

	\ignore{
		In this section we show that our theory of \textit{non-overlap} networks predicts cases where learning a neural network is hard and when it is easy. The hardness result in Section~\ref{hardness_section} suggests that given the data constructed in the reduction, it will not be possible to efficiently learn the weights of the filter of the \textit{non-overlap} network that produced the labels (this is the filter constructed in the first part of the proof, i.e., when there exists splitting sets). However, given a different data set of the \textit{same} size and labeled by the \textit{same} network, will it then be possible to learn the network parameters? 
		
		Inspired by the results in Section~\ref{non_overlapping_gaussian} we considered a dataset of size 800 with standard i.i.d. gaussian entries and a reduction dataset of size 800 in which we created 40 points that correspond to elements of a set $S$ and 760 points that correspond to subsets $c_j$. We chose $S$ and $c_j$ such that there exists disjoint sets $S_1,S_2 \subseteq S$ that split the subsets $c_j$. Both datasets are labeled by the \textit{non-overlap} network with a filter that corresponds to the splitting sets $S_1$ and $S_2$ (See Section~\ref{hardness_section}). We then compared the performance of AdaGrad when optimizing the average squared loss of a \textit{non-overlap} network given the reduction data and given the gaussian data. For both datasets we used a random normal initializer for the weights and for each we chose the best performing learning rate schedule.
	}
	
\section{Networks with Overlapping Filters}
\label{sec:nets_with_overlap}

Thus far we showed that the non-overlapping case becomes tractable under Gaussian inputs. A natural question is then what happens when overlaps are allowed (namely, the stride is
smaller than the filter size). Will gradient descent still find a global optimum? Here we show that this is in fact {\em not} the case, and that with probability greater than $\frac{1}{4}$ gradient descent will get stuck in a sub-optimal region. In \secref{overlapping_section} we analyze this setting for a two dimensional example and provide bounds on the level of suboptimality. In \secref{general_conv_experiments} we report on an empirical study of optimization 
for networks with overlapping filters. Our results suggest that by restarting gradient descent a constant number of times, it will converge to the global minimum with high probability.  Complete proofs of the results are provided in 
Appendix \ref{supp:overlapping_gaussian}. 

\ignore{
	In this section we study the general case under the gaussian data assumption, where we do not assume anything on the overlapping of the filters. In Section~\ref{overlapping_section} we show that when the filters are overlapping, the previous result for the non-overlapping case does not hold in general. Namely, under the assumptions of the previous section, there is a ConvNet with overlapping filters such that a randomly initialized gradient descent will get stuck in a sub-optimal region with probability greater than $\frac{1}{4}$. We further provide a tight lower bound on the loss of any point in this sub-optimal region. In Section~\ref{general_conv_experiments} we empirically study the tractability of optimization in the general setting. Our results suggest that by restarting gradient descent a constant number of times, it will converge to the global minimum with high probability.  
}

\subsection{Suboptimality of Gradient Descent for $\reals^2$}
\label{overlapping_section}

We consider an instance where there are $k = d - 1$ neurons and matrices $W,W^* \in \reals^{k \times d}$ correspond to an overlapping filter of size $2$ with stride $1$, i.e., for all $1 \leq i \leq k$ $\ww_i = (\mathbf{0}_{i-1},\ww,\mathbf{0}_{d-i-1})$, $\ww_i^* = (\mathbf{0}_{i-1},\ww^*,\mathbf{0}_{d-i-1})$ where $\mathbf{0}_{l} = (0,0,...,0) \in \reals^l$, $\ww = (w_1,w_2)$ is a vector of $2$ parameters and $\ww^* =(-w^*,w^*) \in \reals^2$, $w^* > 0$. Define the following vectors $\ww_r = (w_1,w_2,0)$, $\ww_l=(0,w_1,w_2)$, $\ww^*_r = (-w^*,w^*,0)$, $\ww^*_l = (0,-w^*,w^*)$ and denote by $\theta_{\ww,\vv}$ the angle between two vectors $\ww$ and $\vv$. 

One might wonder why the analysis of the overlapping case should be any different than the non-overlapping case. However, even for a filter of size two, as above, the loss function and consequently the gradient, are more complex in the overlapping case. Indeed, the loss function in this case is given by:

\begin{equation} 
\label{overlapping_loss}
\begin{split}
\ell(\ww) &= \alpha({\left\|\ww\right\|}^2 + {\left\|\ww^*\right\|}^2) - \beta g(\ww,\ww^*) \\ &+ (\beta-2)(g(
\ww_r,\ww_l) - g(\ww_l,\ww^*_r) \\&- g(\ww_r,\ww^*_l) + g(\ww^*_r,\ww^*_l)) - \gamma\left\|\ww\right\|\left\|\ww^*\right\|
\end{split}
\end{equation}

where $\alpha = \frac{1}{k^2}\big(\frac{k}{2} + \frac{k^2-3k+2}{2\pi}\big)$, $\beta = 2k$ and $\gamma=\frac{k^2-3k+2}{\pi}$.

\begin{wrapfigure}{r}{8.0cm}
	\includegraphics[width=8.0cm]{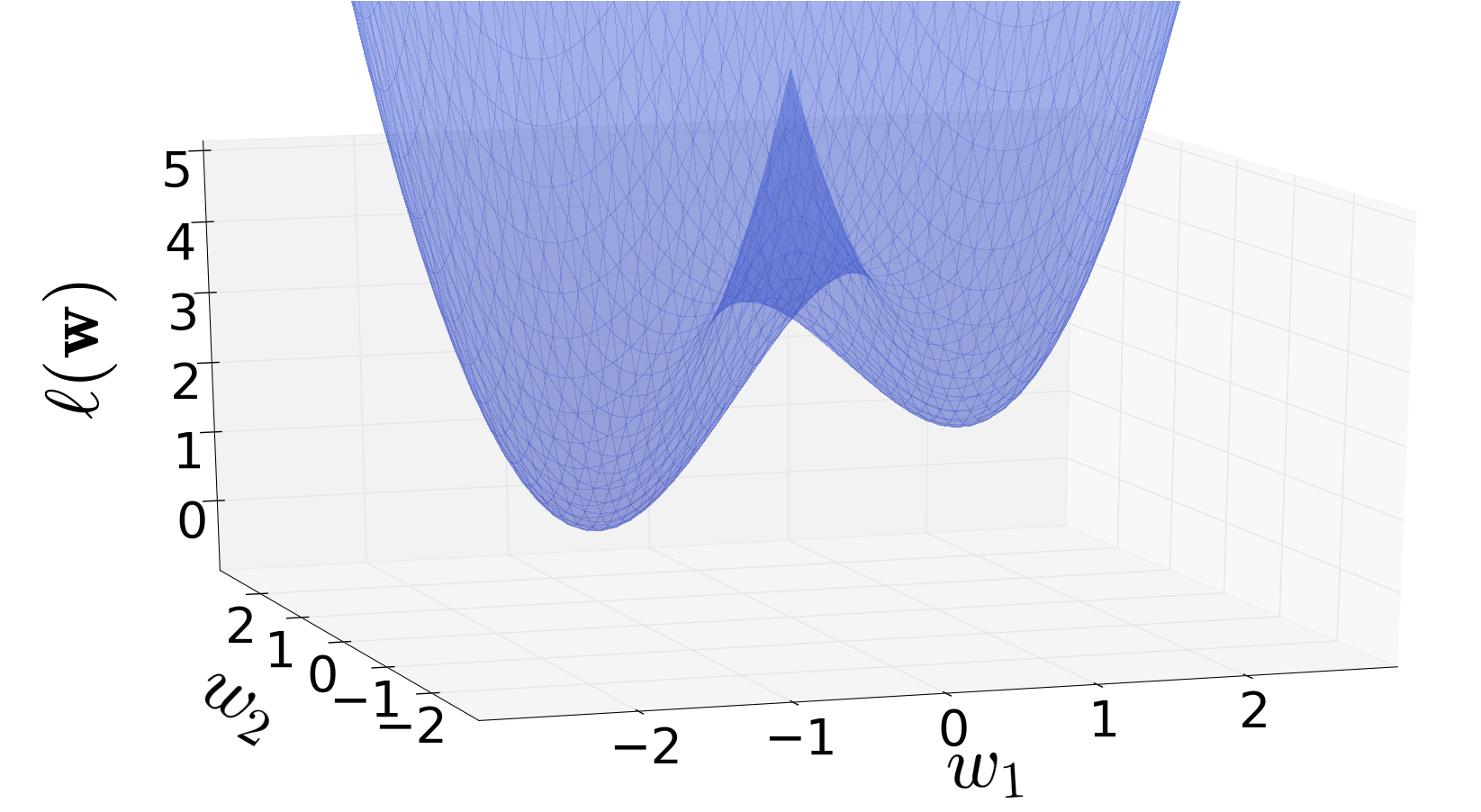}		
	\caption{The population risk for a network with overlapping filters, with a two dimensional filter $\ww^*=[-1,1]$, $k=4$, $d=5$, and Gaussian inputs. 
	} 
	\label{overlap_3d}
\end{wrapfigure} 

\ignore{
	\begin{equation} 
	\label{overlapping_loss}
	\begin{split}\label{key}
	l(\ww) &= \frac{1}{n^2}\Bigg[\big(\frac{n}{2} + \frac{n^2-3n+2}{2\pi}\big){\left\|\ww\right\|}^2 + 2(n-1)g(
	\ww_r,\ww_l) \\ &- \frac{\left\|\ww\right\|\left\|\ww^*\right\|}{\pi}\big(n^2-3n+2\big) 
	- 2ng(\ww,\ww^*) \\ &- 2(n-1)g(\ww_l,\ww^*_r) - 2(n-1)g(\ww_r,\ww^*_l) \\
	&+ \big(\frac{n}{2} + \frac{n^2-3n+2}{2\pi}\big){\left\|\ww^*\right\|}^2 + 2(n-1)g(\ww^*_r,\ww^*_l)\Bigg]
	\end{split}
	\end{equation}
}

Compared to the objective in \eqref{non_overlap_loss_eq} which depends only on $\left\|\ww\right\|$, $\left\|\ww\right\|$ and $\theta_{\ww,\ww^*}$, we see that the objective in \eqref{overlapping_loss} has new terms such as $g(\ww_r,\ww_l^*)$ which has a more complicated dependence on the weight vectors $\ww^*$ and $\ww$. This does not only have implications on the analysis, but also on the geometric properties of the loss function and the dynamics of gradient descent. In particular, in Figure~\ref{overlap_3d} we see that the objective has a large sub-optimal region which is not the case when the filters are non-overlapping.

As in the previous section we consider gradient descent updates as in \eqref{eq:gd_updates}. The following Proposition shows that if $\ww$ is initialized in the interior of the fourth quadrant of $\reals^2$, then it will stay there for all remaining iterations. The proof is a straightforward inspection of the components of the gradient, and is provided in the supplementary.

\begin{prop}
	\label{fourth_quad_prop}
	For any $\lambda \in (0,\frac{1}{3})$, if $\ww_t$ is in the interior of the fourth quadrant of $\reals^2$ then so is $\ww_{t+1}$.
\end{prop}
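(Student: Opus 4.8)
The plan is to compute the two partial derivatives of $\ell$ from \eqref{overlapping_loss}, inspect their signs in the fourth quadrant $\{w_1>0,\,w_2<0\}$, and feed the resulting bounds into the coordinate-wise update $w_{1,t+1}=w_{1,t}-\lambda\,\partial\ell/\partial w_1$ and $w_{2,t+1}=w_{2,t}-\lambda\,\partial\ell/\partial w_2$ coming from \eqref{eq:gd_updates}. First I would differentiate each $g$-term using Lemma \ref{kernel_gradient}. The terms $g(\ww,\ww^*)$, $g(\ww_l,\ww^*_r)$ and $g(\ww_r,\ww^*_l)$ each have a single argument depending on $\ww$, so the lemma applies directly (via the chain rule through $\ww_l=(0,w_1,w_2)$ and $\ww_r=(w_1,w_2,0)$). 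The one term needing care is $g(\ww_r,\ww_l)$, where both arguments depend on $\ww$; here I would apply Lemma \ref{kernel_gradient} to each argument separately and combine the two contributions, using the symmetry $g(\uu,\vv)=g(\vv,\uu)$ and the fact that $\|\ww_r\|=\|\ww_l\|=\|\ww\|$.

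Having assembled $\partial\ell/\partial w_1$ and $\partial\ell/\partial w_2$, I would write each as $A_1 w_1+B_1$ and $A_2 w_2+B_2$, where $A_i$ collects the factors multiplying the respective coordinate and $B_i$ the remaining ``free'' terms. The update then reads $w_{1,t+1}=(1-\lambda A_1)w_1-\lambda B_1$ and $w_{2,t+1}=(1-\lambda A_2)w_2-\lambda B_2$. To conclude it suffices to establish two facts in the interior of the fourth quadrant: (a) $A_1,A_2\le 3$, so that $\lambda<\tfrac13$ forces $1-\lambda A_i>0$; and (b) the sign facts $B_1\le 0$ and $B_2\ge 0$. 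Given (a) and (b), $w_{1,t+1}=(1-\lambda A_1)w_1-\lambda B_1\ge(1-3\lambda)w_1>0$ and $w_{2,t+1}=(1-\lambda A_2)w_2-\lambda B_2\le(1-3\lambda)w_2<0$, which is exactly the claim. Part (a) reduces to a direct bound on the angle-dependent coefficients multiplying $w_1$ and $w_2$, with $\lambda<\tfrac13$ providing the slack needed to absorb it.

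The crux is part (b), and this is where the geometry of the fourth quadrant enters. Writing $\theta=\theta_{\ww,\ww^*}$, $\psi_1=\theta_{\ww_l,\ww^*_r}$ and $\psi_2=\theta_{\ww_r,\ww^*_l}$, a direct computation of the cosines gives $\cos\theta=(w_2-w_1)/(\sqrt2\,\|\ww\|)<0$, $\cos\psi_1=w_1/(\sqrt2\,\|\ww\|)>0$ and $\cos\psi_2=-w_2/(\sqrt2\,\|\ww\|)>0$ throughout the quadrant, so $\theta>\tfrac{\pi}{2}>\psi_1,\psi_2$ and hence $\pi-\theta<\pi-\psi_1,\ \pi-\psi_2$. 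In $B_1$ the ``attractive'' contribution from $g(\ww,\ww^*)$ is a positive multiple of $\pi-\theta$ (it pulls $\ww$ toward $\ww^*$, which lies in the second quadrant and hence outside the fourth), while the contribution from $g(\ww_l,\ww^*_r)$ is a negative multiple of $\pi-\psi_1$; the above angle comparison, together with the additional nonpositive term from $g(\ww_r,\ww_l)$ (whose free part is proportional to $(\pi-\theta_{\ww_r,\ww_l})\,w_2$ with $w_2<0$), lets the repulsive terms dominate and forces $B_1\le 0$. The symmetric argument with $\psi_2$ in place of $\psi_1$ yields $B_2\ge 0$. I expect the main obstacle to be exactly this domination step: one must track the coefficients carefully enough that the inequality holds uniformly over the quadrant --- including the limiting directions $w_2\to 0^-$ and $w_1\to 0^+$ --- and verify it for every $k\ge 2$ rather than only asymptotically in $k$.
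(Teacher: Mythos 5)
Your overall strategy coincides with the paper's: compute the gradient via Lemma \ref{kernel_gradient} (with separate care for $g(\ww_r,\ww_l)$, whose two arguments both depend on $\ww$), split the update into a part that rescales the current iterate plus ``free'' terms coming from the $\ww^*$-dependent expressions and from the cross vector $\ww_p=(w_2,w_1)$, use $\lambda<\tfrac13$ to keep the rescaling factor positive, and use fourth-quadrant geometry to control the signs of the free terms. The paper phrases this vectorially (the update is a positive multiple of $\ww_t$ plus vectors lying in the closed fourth-quadrant cone), while you phrase it coordinate-wise; that difference is immaterial, and your target claims (a) and (b) are in fact true. Your bound in (a) also holds comfortably, since the coefficient of $\ww$ in the gradient is at most $\frac{1}{k}+\frac{k-1}{\pi k}<1$.

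The gap is in the justification of (b), which you yourself flag as the crux. The angle facts you actually derive --- $\theta_{\ww,\ww^*}>\tfrac{\pi}{2}>\theta_{\ww_l,\ww^*_r},\theta_{\ww_r,\ww^*_l}$, hence $\pi-\theta_{\ww,\ww^*}<\pi-\theta_{\ww_l,\ww^*_r}$ --- do not suffice, because the attractive term enters with coefficient $k$ while each repulsive term enters with coefficient only $k-1$: what must be shown is $k(\pi-\theta_{\ww,\ww^*})\le (k-1)(\pi-\theta_{\ww_l,\ww^*_r})$ (and the analogous inequality with $\theta_{\ww_r,\ww^*_l}$ for the second coordinate), and a strict comparison of the angles alone cannot absorb the factor $k/(k-1)$; if both angles were merely close to $\tfrac{\pi}{2}$, the inequality would fail for every fixed $k$. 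The cross term from $g(\ww_r,\ww_l)$ cannot rescue this, since its contribution to the first coordinate is proportional to $w_2$ and vanishes in the limit $w_2\to 0^-$. What closes the argument --- and what the paper's proof supplies --- are the quantitative bounds: in the fourth quadrant $w_1-w_2=|w_1|+|w_2|\ge\|\ww\|$, so your own formula $\cos\theta_{\ww,\ww^*}=(w_2-w_1)/(\sqrt2\,\|\ww\|)$ in fact gives $\cos\theta_{\ww,\ww^*}\le-\tfrac{1}{\sqrt2}$, i.e.\ $\theta_{\ww,\ww^*}\ge\tfrac{3\pi}{4}$, while $\cos\theta_{\ww_l,\ww^*_r},\cos\theta_{\ww_r,\ww^*_l}\ge 0$ gives $\theta_{\ww_l,\ww^*_r},\theta_{\ww_r,\ww^*_l}\le\tfrac{\pi}{2}$. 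Then $k(\pi-\theta_{\ww,\ww^*})\le\tfrac{k\pi}{4}\le\tfrac{(k-1)\pi}{2}\le(k-1)(\pi-\theta_{\ww_l,\ww^*_r})$, where the middle inequality is exactly $k\le 2(k-1)$, i.e.\ $k\ge 2$. With this sharpening your coordinate-wise argument goes through; as written, the domination step is asserted from a premise that is too weak to imply it.
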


Note that in our example the global optimum $\ww^*$ is in the second quadrant (it's easy to show that it is also unique). Hence, if initialized at the fourth quadrant, gradient descent will remain in a sub-optimal region. The sub-optimality can be clearly seen in Figure~\ref{overlap_3d}. In the proposition below we formalize this observation by giving a tight lower bound on the values of $\ell(\ww)$ for $\ww$ in the fourth quadrant. Specifically, we show that the sub-optimality scales with $O(\frac{1}{k^2})$. The proof idea is to express all angles between all the vectors that appear in \eqref{overlapping_loss} via a single angle parameter $\theta$ between $\ww$ in the fourth quadrant and the positive $x$-axis. Then it is possible to prove the relatively simpler one dimensional inequality that depends on $\theta$.

\begin{prop}
	\label{tight_bound_prop}
	Let $h(k) = \frac{k^2-3k+2}{\pi} + \frac{\sqrt{3}(k-1)}{\pi} + \frac{2(k-1)}{3}$, then for all $\ww$ in the fourth quadrant $l(\ww) \geq \frac{2h(k)+1}{k^2(2h(k)+2)}{\left\|\ww^*\right\|}^2$ and this lower bound is attained by $\tilde{\ww}=-\frac{h(k)}{h(k)+1}\ww^*$.
\end{prop}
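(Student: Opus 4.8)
The plan is to reduce the two-variable minimization of $\ell$ over the fourth quadrant to a one-dimensional problem, as the paper's preview suggests. First I would parametrize $\ww$ in the interior of the fourth quadrant by its norm $r=\left\|\ww\right\|>0$ and the angle $\theta\in(0,\tfrac{\pi}{2})$ it makes with the positive $x$-axis, i.e. $\ww=r(\cos\theta,-\sin\theta)$. With $\ww^*=(-w^*,w^*)$ fixed and $s=\left\|\ww^*\right\|=\sqrt{2}\,w^*$, all of $\ww,\ww^*,\ww_r,\ww_l,\ww_r^*,\ww_l^*$ have norm equal to either $r$ or $s$, so the only quantities left to determine are the pairwise angles. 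Computing inner products gives $\cos\theta_{\ww,\ww^*}=-\tfrac{\cos\theta+\sin\theta}{\sqrt2}$, $\cos\theta_{\ww_r,\ww_l}=-\cos\theta\sin\theta$, $\cos\theta_{\ww_l,\ww_r^*}=\tfrac{\cos\theta}{\sqrt2}$, $\cos\theta_{\ww_r,\ww_l^*}=\tfrac{\sin\theta}{\sqrt2}$, while $\theta_{\ww_r^*,\ww_l^*}=\tfrac{2\pi}{3}$ is constant. This realizes the reduction to a single angle $\theta$.

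Next I would substitute these into $\ell$ via the Cho--Saul Lemma, writing $g(\uu,\vv)=\tfrac{1}{2\pi}\left\|\uu\right\|\left\|\vv\right\|G(\theta_{\uu,\vv})$ with $G(\psi)=\sin\psi+(\pi-\psi)\cos\psi$. Since $g$ is homogeneous of degree one in each argument, every term of \eqref{overlapping_loss} is degree two in $(r,s)$, and they collect into a quadratic $\ell=A(\theta)\,r^2+B(\theta)\,rs+C\,s^2$, where $A(\theta)=\alpha+\tfrac{\beta-2}{2\pi}G(\theta_{\ww_r,\ww_l})$, $B(\theta)$ gathers the $-\beta g(\ww,\ww^*)$, $-(\beta-2)[g(\ww_l,\ww_r^*)+g(\ww_r,\ww_l^*)]$ and $-\gamma\,rs$ contributions, and $C=\alpha+\tfrac{\beta-2}{2\pi}G(\tfrac{2\pi}{3})$ is constant. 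Since $G'(\psi)=-(\pi-\psi)\sin\psi\le0$ and $G(\pi)=0$, we have $G\ge0$ on $[0,\pi]$, so $A(\theta)>0$ and $B(\theta)<0$ for $k\ge2$. Thus for each fixed $\theta$ the quadratic is strictly convex in $r$, with interior minimizer $r^*(\theta)=-\tfrac{B(\theta)}{2A(\theta)}\,s>0$ and minimal value $m(\theta)=\big(C-\tfrac{B(\theta)^2}{4A(\theta)}\big)s^2$.

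It then remains to minimize $m(\theta)$ over $\theta\in(0,\tfrac{\pi}{2})$, equivalently to maximize $B(\theta)^2/A(\theta)$. Here I would first note the symmetry $\theta\mapsto\tfrac{\pi}{2}-\theta$: it swaps $\theta_{\ww_l,\ww_r^*}\leftrightarrow\theta_{\ww_r,\ww_l^*}$ and fixes every other angle, and since $g(\ww_l,\ww_r^*)$ and $g(\ww_r,\ww_l^*)$ enter \eqref{overlapping_loss} with equal coefficients, $\ell$ (hence $m$) is invariant under this reflection. Consequently $\theta=\tfrac{\pi}{4}$ is automatically a critical point of $m$. The main obstacle is to upgrade this to a global statement, namely to prove the one-dimensional inequality $m(\theta)\ge m(\tfrac{\pi}{4})$ on the whole interval. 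Because $A$ and $B$ depend transcendentally on $\theta$ (through $G$ and through the $\arccos$ of the cosines above), this requires either showing $B^2/A$ is unimodal with maximum at $\tfrac{\pi}{4}$ (e.g. by signing the derivative on each side of $\tfrac{\pi}{4}$) or establishing the inequality directly; verifying it uniformly in $k$ is the delicate step.

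Finally I would evaluate at $\theta=\tfrac{\pi}{4}$, where $\ww$ is antiparallel to $\ww^*$, so $\theta_{\ww,\ww^*}=\pi$ and $g(\ww,\ww^*)=G(\pi)=0$, while $\theta_{\ww_r,\ww_l}=\theta_{\ww_r^*,\ww_l^*}=\tfrac{2\pi}{3}$ and $\theta_{\ww_l,\ww_r^*}=\theta_{\ww_r,\ww_l^*}=\tfrac{\pi}{3}$. Plugging $G(\tfrac{\pi}{3})=\tfrac{\sqrt3}{2}+\tfrac{\pi}{3}$ and $G(\tfrac{2\pi}{3})=\tfrac{\sqrt3}{2}-\tfrac{\pi}{6}$ into the coefficients, a short computation shows $A(\tfrac{\pi}{4})=C$ and, after clearing the overall $\tfrac1{k^2}$ normalization, $-B(\tfrac{\pi}{4})=\tfrac{h(k)}{k^2}$ and $2A(\tfrac{\pi}{4})=\tfrac{h(k)+1}{k^2}$ with $h(k)$ exactly as in the statement. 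Hence $r^*=\tfrac{-B}{2A}\,s=\tfrac{h(k)}{h(k)+1}\,s$, which together with antiparallelism yields $\tilde{\ww}=-\tfrac{h(k)}{h(k)+1}\ww^*$, and the minimal value is $m(\tfrac{\pi}{4})=\tfrac{4AC-B^2}{4A}\,s^2=\tfrac{(h(k)+1)^2-h(k)^2}{2k^2(h(k)+1)}\,s^2=\tfrac{2h(k)+1}{k^2(2h(k)+2)}\left\|\ww^*\right\|^2$, as claimed. The only non-routine ingredient is the global one-dimensional inequality of the third paragraph; the rest is bookkeeping of angles and a convex quadratic minimization.
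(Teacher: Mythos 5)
Your setup is sound and matches the paper's strategy in spirit: the reduction to a single angle, the quadratic-in-norm structure, and the endpoint computation are all correct. In particular your evaluation at $\theta=\tfrac{\pi}{4}$ — giving $-B(\tfrac{\pi}{4})=\tfrac{h(k)}{k^2}$, $2A(\tfrac{\pi}{4})=\tfrac{h(k)+1}{k^2}$, the minimizer $\tilde{\ww}=-\tfrac{h(k)}{h(k)+1}\ww^*$, and the value $\tfrac{2h(k)+1}{k^2(2h(k)+2)}\left\|\ww^*\right\|^2$ — checks out exactly. However, there is a genuine gap: the global one-dimensional inequality $m(\theta)\geq m(\tfrac{\pi}{4})$, which you correctly identify as ``the delicate step,'' is the mathematical heart of the proposition, and you do not prove it. The reflection symmetry $\theta\mapsto\tfrac{\pi}{2}-\theta$ only makes $\tfrac{\pi}{4}$ a critical point of $m$; a priori it could be a local \emph{maximum}, so nothing short of a global argument closes the proof. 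In the paper this step occupies most of the proof: Lemma \ref{al_ar_lem} (easy, monotonicity of $G$) and Lemma \ref{trigo_lem}, which is over a page of delicate estimates — two-sided $\arccos$ bounds, sign analysis of first and second derivatives, and a case split of the interval into $[0,\tfrac{3}{4}]$ and $[\tfrac{3}{4},\tfrac{\pi}{4}]$ — to show the cross-term function $f(\theta)$ is maximized at $\tfrac{\pi}{4}$ uniformly in $k\geq 2$.

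A structural remark that would help you complete the argument: the paper performs the two optimizations in the opposite order, and this is what makes the hard step tractable. Rather than minimizing over $r$ first (which produces the ratio $B(\theta)^2/A(\theta)$, a transcendental quotient that is awkward to control), the paper bounds the angular quantities \emph{separately and uniformly in the norm}: the term $g(\ww_r,\ww_l)$, which enters with a positive coefficient, is bounded \emph{below} by its value at $\theta_{\ww_r,\ww_l}=\tfrac{2\pi}{3}$; and the sum of cross terms, which enters with a negative coefficient, is bounded \emph{above} by its value at $\theta=\tfrac{\pi}{4}$ (Lemma \ref{trigo_lem}). Both bounds are tight at the same point, so after substituting them one is left with an honest quadratic in $\alpha=\left\|\ww\right\|/\left\|\ww^*\right\|$ with \emph{constant} coefficients, whose minimization is routine and tight. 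In your formulation, these same two lemmas would give $A(\theta)\geq A(\tfrac{\pi}{4})$ and $|B(\theta)|\leq|B(\tfrac{\pi}{4})|$, hence $B(\theta)^2/A(\theta)\leq B(\tfrac{\pi}{4})^2/A(\tfrac{\pi}{4})$; so the decoupled bounds are strictly stronger than, and immediately imply, the unimodality-type statement you left open. Without some version of Lemma \ref{trigo_lem}, though, your proposal establishes only that $\tfrac{\pi}{4}$ is a critical point with the claimed value, not that it is the global minimum over the quadrant.
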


The above two propositions result in the following characterization of the sub-optimality of gradient descent for $\ww\in\reals^2$ and overlapping filters. 

\begin{thm}
	Define $h(k)$ as in Proposition \ref{tight_bound_prop}. Then with probability $\geq \frac{1}{4}$, a randomly initialized gradient descent with learning rate $\lambda \in (0,\frac{1}{3})$ will get stuck in a sub-optimal region, where each point in this region has loss at least $\frac{2h(k)+1}{k^2(2h(k)+2)}{\left\|\ww^*\right\|}^2$ and this bound is tight.
\end{thm}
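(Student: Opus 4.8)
The plan is to obtain the theorem by assembling Propositions~\ref{fourth_quad_prop} and~\ref{tight_bound_prop} with a short probabilistic argument on the initialization; the two propositions already carry all the analytic content. The first states that the open fourth quadrant is forward-invariant under the gradient-descent map for $\lambda\in(0,\frac13)$, and the second gives a lower bound on $\ell$ that holds uniformly over that quadrant and is attained inside it. What remains is to bound below the probability that a random initialization enters the fourth quadrant, and then to chain the three facts together.

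First I would handle the initialization event. Writing $\ww_0=(w_1,w_2)$, let $Q=\{w_1>0,\,w_2<0\}$ be the open fourth quadrant. For an initialization drawn from the unit circle (or any rotationally symmetric law on $\reals^2$), the four open quadrants are interchanged by the reflections $w_1\mapsto-w_1$ and $w_2\mapsto-w_2$ and hence carry equal probability, while the two coordinate axes have measure zero; therefore $\probarg{\ww_0\in Q}=\frac14$. Conditioned on $\ww_0\in Q$, Proposition~\ref{fourth_quad_prop} together with a trivial induction gives $\ww_t\in Q$ for every $t\ge 0$, since $\lambda\in(0,\frac13)$. By Proposition~\ref{tight_bound_prop}, every point of $Q$ satisfies $\ell(\ww)\ge \frac{2h(k)+1}{k^2(2h(k)+2)}\|\ww^*\|^2$, so this bound holds along the entire trajectory. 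In particular the iterates can never approach the global optimum $\ww^*$, which lies in the second quadrant with $\ell(\ww^*)=0$, so gradient descent is trapped in the sub-optimal region $Q$; combining with the $\frac14$ probability of the initialization event gives the probability statement.

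Given that the heavy lifting is done in the two propositions, the only genuine subtlety in this assembly is verifying that the minimizer of Proposition~\ref{tight_bound_prop} actually lies in the forward-invariant quadrant $Q$ --- otherwise the lower bound, while valid, might not be achievable along trajectories confined to $Q$ and the tightness claim would be empty. This I would check by inspecting the sign pattern of $\tilde{\ww}=-\frac{h(k)}{h(k)+1}\ww^*$: since $\ww^*=(-w^*,w^*)$ with $w^*>0$ and the scalar $-\frac{h(k)}{h(k)+1}$ is negative, the coordinates of $\tilde{\ww}$ are $\left(\frac{h(k)}{h(k)+1}w^*,\,-\frac{h(k)}{h(k)+1}w^*\right)$, which has positive first and negative second coordinate and so lies in $Q$. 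Tightness then follows immediately: the uniform lower bound of Proposition~\ref{tight_bound_prop} is attained at a point of the trapping region and hence cannot be improved.
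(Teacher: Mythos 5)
Your proposal is correct and is essentially the paper's own argument: the paper treats the theorem as an immediate consequence of Propositions \ref{fourth_quad_prop} and \ref{tight_bound_prop}, combined with exactly the symmetry argument you give for the $\frac{1}{4}$ initialization probability and forward invariance of the fourth quadrant. Your explicit check that $\tilde{\ww}=-\frac{h(k)}{h(k)+1}\ww^*$ lies in the fourth quadrant (so the lower bound is attained inside the trapping region and tightness is not vacuous) is a detail the paper leaves implicit, and it is verified correctly.
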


\subsection{Empirical study of Gradient Descent for $m>2$}
\label{general_conv_experiments}
In \secref{overlapping_section} we showed that already for $m=2$, networks with $\ww\in\reals^m$ and filter overlaps exhibit more complex behavior than those without overlap. This leaves open the question of what happens in the general 
case under the Gaussian assumption, for various values of $d,m$ and overlaps.  We leave the theoretical analysis of this question to future work, but here report on empirical findings that hint at what the solution should look like.

\ignore{
	One intriguing open problem that follows our theoretical results is whether there is a convolutional neural network such that gradient descent will converge to the global minimum with exponentially small probability. Note that in our setting there is a unique global minimum. The proof is given in the supplementary material.
	\begin{prop}
		\label{unique_min_prop}
		Under our model assumptions, the loss function of a convolutional neural network has a unique global minimum.
	\end{prop}
	
}
We experimented with a range of $d,m$ and overlap values (see Appendix \ref{supp:experimental_setup} for details of the experimental setup). For each value of $d$, $m$ and overlap we sampled $90$ values of $\ww^*$ from various uniform input distributions with different supports and several pre-defined deterministic values. This resulted in more than 1200 different sampled $\ww^*$. For each such $\ww^*$ we ran gradient descent multiple times, each initialized randomly from a different $\ww_0$. Using the results from these runs, we could estimate the probability of sampling a $\ww_0$ that would converge to the \textit{unique} global minimum. Viewed differently, this is the probability mass of the basin of attraction of the global optimum. We note that the uniqueness of the global minimum follows easily from equating the population risk (\eqref{population_risk}) to 0 and the full proof is deferred to Appendix \ref{supp:min_prop_section}.

Our results are that across all values of $d,m$, overlap and $\ww^*$, the probability mass of the basin of attraction is at least $\frac{1}{17}$. The practical implication is that multiple restarts of gradient descent (in this case a few dozen) will find the global optimum with high probability.  We leave formal analysis of this intriguing fact for future work. 


\section{Discussion}
\label{sec:discussion} 
The key theoretical question in deep learning is why it succeeds in finding good models despite the non-convexity of the training loss. It is clear
that an answer must characterize specific settings where deep learning provably works. Despite considerable recent effort, such a case has not been shown. Here
we provide the first analysis of a non-linear architecture where gradient descent is globally optimal, for a certain input distribution, namely Gaussian. Thus our specific
characterization is both in terms of architecture (no-overlap networks, single hidden layer, and average pooling) and input distribution. We show that  
learning in no-overlap architectures is hard, so that some input distribution restriction is necessary for tractability. Note however, that it is certainly possible
that other, non-Gaussian, distributions also result in tractability. Some candidates would be sub-Gaussian and log-concave distributions.

Our derivation addressed the population risk, which for the Gaussian case can be calculated in closed form. In practice, one minimizes an empirical risk. Our experiments in \secref{nonoverlap_experiments} suggest that optimizing the empirical risk in the Gaussian case is tractable. It would be interesting to prove this formally. It is likely that measure concentration results can be used to get similar results to those we had for the population risk \citep[e.g., see][for use of such tools]{mei2016landscape,xu2016global}.

Convolution layers are among the basic building block of neural networks. Our work is among the first to analyze optimization for these. The architecture we study is similar in structure to convolutional networks, in the sense
of using parameter tying and pooling. However, most standard convolutional layers have overlap and use max pooling. In \secref{sec:nets_with_overlap} we provide initial results for the case of overlap, showing there is hope for proving 
optimality for gradient descent with random restarts. Analyzing max pooling would be very interesting and is left for future work. 

Finally, we note that distribution dependent tractability has been shown for intersection of halfspaces \citep{klivans2009baum}, which is a non-convolutional architecture. However, these results do not use gradient descent. It would be very interesting to use our techniques to try and understand gradient descent for the population risk in these settings.    

\bibliography{non_overlap}
\bibliographystyle{icml2017}

\appendix
\section{Proof of Lemma 3.2}
\label{prelim_supp}

First assume that $\theta_{\mathbf{u},\mathbf{v}} \neq 0,\pi$ . Then by straightforward calculation we have
\begin{equation}
\begin{split}
\frac{\partial g}{\partial u_i} &= \frac{1}{2\pi}\left \| \mathbf{v} \right \| \frac{u_i}{\left \| \mathbf{u} \right \|}\Bigg(\sqrt{1-\Big(\frac{\mathbf{u} \cdot \mathbf{v}}{\left \| \mathbf{u} \right \| \left \| \mathbf{v} \right \|}\Big)^2} + \Big(\pi - \arccos\Big(\frac{\mathbf{u} \cdot \mathbf{v}}{\left \| \mathbf{u} \right \| \left \| \mathbf{v} \right \|}\Big)\Big)\frac{\mathbf{u} \cdot \mathbf{v}}{\left \| \mathbf{u} \right \| \left \| \mathbf{v} \right \|} \Bigg) \\ &+ \frac{1}{2\pi}\left \| \mathbf{u} \right \| \left \| \mathbf{v} \right \| \Biggl( \Bigg(\ -\frac{\frac{\mathbf{u} \cdot \mathbf{v}}{\left \| \mathbf{u} \right \| \left \| \mathbf{v} \right \|}}{\sqrt{1-\Big(\frac{\mathbf{u} \cdot \mathbf{v}}{\left \| \mathbf{u} \right \| \left \| \mathbf{v} \right \|}\Big)^2}} \Bigg) \Bigg(\frac{v_i}{\left \| \mathbf{u} \right \| \left \| \mathbf{v} \right \|} -  \frac{u_i}{{\left \| \mathbf{u} \right \|}^2 } \frac{\mathbf{u} \cdot \mathbf{v}}{\left \| \mathbf{u} \right \| \left \| \mathbf{v} \right \|} \Bigg) \\ &+ \Bigg( \frac{\frac{\mathbf{u} \cdot \mathbf{v}}{\left \| \mathbf{u} \right \| \left \| \mathbf{v} \right \|}}{\sqrt{1-\big(\frac{\mathbf{u} \cdot \mathbf{v}}{\left \| \mathbf{u} \right \| \left \| \mathbf{v} \right \|}\big)^2}} \Bigg(\frac{v_i}{\left \| \mathbf{u} \right \| \left \| \mathbf{v} \right \|} -  \frac{u_i}{{\left \| \mathbf{u} \right \|}^2 } \frac{\mathbf{u} \cdot \mathbf{v}}{\left \| \mathbf{u} \right \| \left \| \mathbf{v} \right \|} \Bigg) \Bigg) \\ &+ \Big(\pi - \arccos\Big(\frac{\mathbf{u} \cdot \mathbf{v}}{\left \| \mathbf{u} \right \| \left \| \mathbf{v} \right \|}\Big)\Big) \Bigg(\frac{v_i}{\left \| \mathbf{u} \right \| \left \| \mathbf{v} \right \|} -  \frac{u_i}{{\left \| \mathbf{u} \right \|}^2 } \frac{\mathbf{u} \cdot \mathbf{v}}{\left \| \mathbf{u} \right \| \left \| \mathbf{v} \right \|} \Bigg) \Biggl) \\ &= \frac{1}{2\pi}\left \| \mathbf{v} \right \| \frac{u_i}{\left \| \mathbf{u} \right \|}\Bigg(\sqrt{1-\Big(\frac{\mathbf{u} \cdot \mathbf{v}}{\left \| \mathbf{u} \right \| \left \| \mathbf{v} \right \|}\Big)^2} + \Big(\pi - \arccos\Big(\frac{\mathbf{u} \cdot \mathbf{v}}{\left \| \mathbf{u} \right \| \left \| \mathbf{v} \right \|}\Big)\Big)\frac{\mathbf{u} \cdot \mathbf{v}}{\left \| \mathbf{u} \right \| \left \| \mathbf{v} \right \|} \Bigg) \\ &+ \frac{1}{2\pi}\left \| \mathbf{u} \right \| \left \| \mathbf{v} \right \| \Big(\pi - \arccos\Big(\frac{\mathbf{u} \cdot \mathbf{v}}{\left \| \mathbf{u} \right \| \left \| \mathbf{v} \right \|}\Big)\Big) \Bigg(\frac{v_i}{\left \| \mathbf{u} \right \| \left \| \mathbf{v} \right \|} -  \frac{u_i}{{\left \| \mathbf{u} \right \|}^2 } \frac{\mathbf{u} \cdot \mathbf{v}}{\left \| \mathbf{u} \right \| \left \| \mathbf{v} \right \|} \Bigg) \Biggl) \\ &= \frac{1}{2\pi}\left \| \mathbf{v} \right \| \frac{u_i}{\left \| \mathbf{u} \right \|} \sqrt{1-\Big(\frac{\mathbf{u} \cdot \mathbf{v}}{\left \| \mathbf{u} \right \| \left \| \mathbf{v} \right \|}\Big)^2} + \frac{1}{2\pi}\Big(\pi - \arccos\Big(\frac{\mathbf{u} \cdot \mathbf{v}}{\left \| \mathbf{u} \right \| \left \| \mathbf{v} \right \|}\Big) \Big) v_i \\ &= \frac{1}{2\pi}\left \| \mathbf{v} \right \| \frac{u_i}{\left \| \mathbf{u} \right \|} \sin \theta_{\uu,\vv} + \frac{1}{2\pi}\Big(\pi - \theta_{\uu,\vv} \Big) v_i
\end{split}
\end{equation}

Hence,

\begin{equation}
\label{gradRelu}
\frac{\partial g}{\partial \mathbf{u}} = \frac{1}{2\pi}\left \| \mathbf{v} \right \| \frac{\mathbf{u}}{\left \| \mathbf{u} \right \|} \sin \theta_{\uu,\vv} + \frac{1}{2\pi}\Big(\pi - \theta_{\uu,\vv} \Big) \mathbf{v}
\end{equation}

Now we assume that $\mathbf{u}$ is parallel to $\mathbf{v}$. We first show that $g$ is differentiable in this case. Without loss of generality we can assume that $\uu$ and $\vv$ lie on the $u_1$ axis. This follows since $g$ is a function of $\left\|\mathbf{u}\right\|$,  $\left\|\mathbf{v}\right\|$ and $\theta_{\mathbf{u},\mathbf{v}}$ and therefore $g(\cdot,\mathbf{v})$ has a directional derivative in direction $\mathbf{d}$ at $\mathbf{u}$ if and only if $g(\cdot,R\mathbf{v})$ has a directional derivative in direction $R\mathbf{d}$ at $R\mathbf{u}$ where $R$ is a rotation matrix. Hence $g(\cdot,\mathbf{v})$ is differentiable at $\mathbf{u}$ if and only if $g(\cdot,R\mathbf{v})$ is differentiable at $R\mathbf{u}$. Furthermore, if $\mathbf{v}$ and $\mathbf{u}$ are on the $u_1$ axis, then by symmetry the partial derivatives with respect to \textit{other} axes at $\mathbf{u}$ are all equal, hence we only need to consider the partial derivative with respect to the $u_1$ and $u_2$ axes.

Let $\vv=(1,0,...,0)$ and $\uu = (u,0,...,0)$ where $u \neq 0$. In order to show differentiability, we will prove that $g(\mathbf{u},\mathbf{v})$ has continuous partial derivatives at $\uu$ (by equality (\ref{gradRelu}) the partial derivatives are clearly continuous at points that are not on the $u_1$ axis. Define $\mathbf{u}_\epsilon = (u,\epsilon, 0,...,0)$. Then $$\frac{\partial g}{\partial u_2}(\mathbf{u},\mathbf{v}) = \lim_{\epsilon \to 0}{\frac{\frac{1}{2\pi}\left \| \mathbf{u}_\epsilon \right \| \left \| \mathbf{v} \right \|\Bigg(\sin \theta_{\uu_\epsilon,\vv} + \Big(\pi - \theta_{\uu_\epsilon,\vv} \Big) \cos \theta_{\uu_\epsilon,\vv} \Bigg) - g(\mathbf{u},\mathbf{v})}{\epsilon}}$$
By L'hopital's rule and the calculation of equality (\ref{gradRelu}) we get $$\frac{\partial g}{\partial u_2}(\mathbf{u},\mathbf{v}) = \lim_{\epsilon \to 0} { \frac{1}{2\pi}\left \| \mathbf{v} \right \| \frac{\epsilon}{\left \| \mathbf{u}_\epsilon \right \|} \sin \theta_\epsilon} = 0$$
Furthermore, by equality (\ref{gradRelu}) we see that $\lim_{\mathbf{u}' \to \mathbf{u}}{\frac{\partial g}{\partial u_2}(\mathbf{u}',\mathbf{v})} = 0$ since $\lim_{\mathbf{u}' \to \mathbf{u}}{\sin \theta_{\uu',\vv}} = 0$.

For a fixed $\theta_{\mathbf{u},\mathbf{v}}$ equal to $0$ or $\pi$, $\frac{\partial g}{\partial u_1}(\mathbf{u},\mathbf{v})$ is the same as $\frac{\partial g}{\partial \left\|\mathbf{u}\right\|}(\mathbf{u},\mathbf{v})$. Hence, 
$$\frac{\partial g}{\partial u_1}(\mathbf{u},\mathbf{v}) = \frac{1}{2\pi}\left \| \mathbf{v} \right \|\Bigg(\sin \theta_{\mathbf{u},\mathbf{v}} + \Big(\pi - \theta_{\mathbf{u},\mathbf{v}} \Big) \cos \theta_{\mathbf{u},\mathbf{v}} \Bigg) = \left\{\begin{matrix}\frac{1}{2} \text{  if $u > 0$} \\ \text{0  if $u < 0$} \end{matrix}\right.$$  
and the partial derivative is continuous since $$\lim_{\mathbf{u}' \to \mathbf{u}}{\frac{\partial g}{\partial u_1}(\mathbf{u}',\mathbf{v})} = \left\{\begin{matrix}\frac{1}{2} \text{  if $u > 0$} \\ \text{0  if $u < 0$} \end{matrix}\right.$$

Finally, we see that for the case where $\mathbf{u}$ and $\mathbf{v}$ are parallel, the values we got for the partial derivatives coincide with equation \eqref{gradRelu}. This concludes the proof. 

\section{Proof of Proposition \ref{prop:set-splitting}}
\label{nonoverlap_hardness_supp}

We will prove the claim by induction on $k$. For the base case we will show that \textit{Set-Splitting-by-2-Sets} is NP-complete. We will prove this via a reduction from a variant of the 3-SAT problem with the restriction of equal number of variables and clauses, which we denote \textit{Equal-3SAT}. We will first prove that \textit{Equal-3SAT} is NP-complete.

\begin{lem}
	\textit{Equal-3SAT} is NP-complete.
\end{lem}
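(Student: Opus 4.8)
The plan is to establish the two requirements for NP-completeness separately. Membership in NP is immediate: a satisfying assignment is a polynomial-size certificate checkable in linear time, and the syntactic side-condition (that the number of variables equals the number of clauses) is verifiable directly from the input. The substance is therefore the NP-hardness, which I would prove by a polynomial-time reduction from ordinary 3SAT (known to be NP-complete). Given a 3-CNF formula $\phi$ with $n$ variables and $m$ clauses, I would build a formula $\psi$ that is satisfiable iff $\phi$ is and whose number of variables equals its number of clauses, so that $\phi\in\text{3SAT}\iff\psi\in\text{Equal-3SAT}$.

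The core idea is to \emph{pad} $\phi$ with gadgets that (i) preserve satisfiability and (ii) drive the imbalance $D=m-n$ to $0$. Two elementary satisfiability-preserving moves suffice, matching the gadgets mentioned in the footnote of Section~3: adding an isolated fresh variable (equivalently a unit clause $(w)$ on a fresh $w$) raises only the variable count, while adding a clause on fresh variables such as $(x\vee y\vee z)$ raises the variable count faster than the clause count. Concretely, if $m>n$ I would append $m-n$ fresh variables so both counts equal $m$; if $m<n$ I would append $n-m$ always-true clauses (e.g. tautologies over existing variables) so both counts equal $n$; and if $m=n$ I take $\psi=\phi$. Each appended gadget is independently satisfiable and involves only fresh variables or is a tautology, so it can neither create nor destroy satisfiability, and since at most $O(m+n)$ gadgets are added, $\psi$ is produced in polynomial time.

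The main obstacle is making the padding respect the exact clause convention one starts from while still hitting the count \emph{exactly}, in both sign and parity of $m-n$. If one insists that every clause have three \emph{distinct} literals and that every variable occur in some clause, then isolated variables and tautological clauses $(x\vee\neg x\vee y)$ are disallowed, and the clean per-gadget $\pm1$ steps above are lost. The robust fix is to pad in blocks of three fresh variables $a,b,c$: among the eight distinct-literal clauses on $\{a,b,c\}$, all but $(\neg a\vee\neg b\vee\neg c)$ are simultaneously satisfied by $a=b=c=\text{True}$, so adding any $k\in\{1,\dots,7\}$ of them preserves satisfiability and shifts $D=m-n$ by $k-3\in\{-2,\dots,4\}$. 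Chaining such blocks realizes any integer shift of $D$ of either sign and parity, forcing $D=0$ exactly. Verifying this bookkeeping, and checking that it is insensitive to the precise 3SAT variant (exactly-three versus at-most-three literals, repeated-literal policy), is the only real work; the satisfiability equivalence and the polynomial time bound follow immediately. This lemma then supplies the base case $k=2$ for the induction establishing \propref{prop:set-splitting}.
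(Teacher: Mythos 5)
Your proposal is correct and takes essentially the same route as the paper: a polynomial-time reduction from 3SAT that pads the given formula with satisfiability-preserving gadgets on fresh variables until the numbers of variables and clauses coincide. The difference lies in the gadgets. The paper shifts the imbalance by $\pm 1$ using a clause $(x\vee y)$ on two fresh variables (two variables, one clause) and two identical unit clauses $(z)$ on one fresh variable (one variable, two clauses); these are simpler than your blocks, but they implicitly assume the relaxed 3SAT convention in which clauses may contain fewer than three literals and repeated clauses are allowed. Your block gadget---three fresh variables $a,b,c$ plus $k\in\{1,\dots,7\}$ of the seven distinct-literal clauses satisfied by the all-true assignment, shifting $m-n$ by $k-3\in\{-2,\dots,4\}$---is what the argument needs under the strict convention (exactly three distinct literals per clause, no repeated clauses, every variable occurring in some clause), and chaining blocks indeed realizes any required shift with polynomially many blocks, so your reduction is sound. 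One small slip in your first-pass description: adding a unit clause $(w)$ on a fresh variable is not ``equivalent'' to adding an isolated fresh variable, since it increases both the variable and clause counts by one and hence leaves the imbalance unchanged; this does not affect your final argument, because the block construction supersedes that step.
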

\begin{proof}
	This can be shown via a reduction from 3SAT. Given a formula $\phi$ with $n$ variables and $m$ clauses we can increase $n-m$ by $1$ by adding a new clause of the form ($x\vee y$) for new variables $x$ and $y$. Furthermore, we can decrease $n-m$ by $1$ by adding two new identical clauses of the form ($z$) for a new variable $z$. In each case the formula with the new clause(s) is satisfiable if and only if $\phi$ is. Therefore given a formula $\phi$ we can construct a new formula $\psi$ with equal number of variables and clauses such that $\phi$ is satisfiable if and only if $\psi$ is. 
\end{proof}

We will now give a reduction from \textit{Equal-3SAT} to \textit{Set-Splitting-by-2-Sets}.
\begin{lem}
	\textit{Set-Splitting-by-2-Sets} is NP-complete.
\end{lem}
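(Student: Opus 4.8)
The plan is to establish NP-completeness in two steps: membership in NP, and a polynomial-time reduction from \textit{Equal-3SAT}, which was just shown to be NP-complete. Membership is immediate, since a candidate splitting $(S_1,S_2)$ is a polynomial-size certificate and verifying that $S_1,S_2$ are disjoint, that $S_1\cup S_2=S$, and that no $C_j$ is contained in $S_1$ or $S_2$ takes polynomial time.

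For hardness I would reuse the classical reduction from $3$SAT to Set-Splitting, the one creating $2n+1$ elements and $m+n$ subsets alluded to earlier; the only genuinely new ingredient is checking that the \textit{Equal-3SAT} restriction makes the output meet the cardinality bound $|\mathcal{C}|\le|S|$ demanded by \textit{Set-Splitting-by-2-Sets}. Concretely, given a formula $\phi$ with variables $x_1,\dots,x_n$ and clauses $c_1,\dots,c_m$ with $m=n$, take $S=\{x_i,\bar x_i : 1\le i\le n\}\cup\{s\}$ for a fresh element $s$, so $|S|=2n+1$. Let $\mathcal{C}$ contain a consistency gadget $\{x_i,\bar x_i\}$ for each variable $i$, and a clause gadget $\{l_{j1},l_{j2},l_{j3},s\}$ for each clause $c_j=(l_{j1}\vee l_{j2}\vee l_{j3})$. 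Then $|\mathcal{C}|=n+m=2n\le 2n+1=|S|$, so the instance is a legal \textit{Set-Splitting-by-2-Sets} input; this is precisely the place where $m=n$ is used.

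It then remains to verify that the reduction preserves the answer. In the forward direction, from a satisfying assignment I would place every true literal in $S_1$ and every false literal together with $s$ in $S_2$: each consistency gadget is split because exactly one of $x_i,\bar x_i$ is true, each clause gadget avoids $S_1$ because $s\in S_2$, and avoids $S_2$ because the clause contains a true literal lying in $S_1$. In the backward direction, given a valid splitting I would assume without loss of generality (by the symmetry of the two parts) that $s\in S_2$, and set $x_i$ true exactly when $x_i\in S_1$. The consistency gadgets force $x_i$ and $\bar x_i$ into opposite parts, so the assignment is well defined, and the requirement $\{l_{j1},l_{j2},l_{j3},s\}\not\subseteq S_2$ combined with $s\in S_2$ forces some literal of $c_j$ into $S_1$, i.e. to be true, so $\phi$ is satisfied.

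The reduction itself is standard, so the real content — and the reason the base case of the induction is phrased via \textit{Equal-3SAT} rather than ordinary $3$SAT — lies in the cardinality constraint: unrestricted $3$SAT would yield $|\mathcal{C}|=m+n$ with $m$ arbitrarily large relative to $n$, violating $|\mathcal{C}|\le|S|=2n+1$, whereas forcing $m=n$ keeps the number of subsets at or below the number of elements. I expect this bookkeeping, together with the symmetry argument that lets us fix $s\in S_2$, to be the only delicate points; the rest is the routine correctness check above.
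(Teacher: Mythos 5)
Your proposal is correct and follows essentially the same route as the paper: the classical 3SAT-to-Set-Splitting reduction (variable gadgets $\{x,\bar x\}$ and clause gadgets consisting of the clause's literals plus one fresh element) applied to \emph{Equal-3SAT} precisely so that $|\mathcal{C}|=n+m=2n\le 2n+1=|S|$, with the same forward and backward correctness arguments (your choice of $s\in S_2$ versus the paper's $n\in S_1$ is just a relabeling of the two parts). The only cosmetic difference is that you also spell out NP membership, which the paper leaves implicit, and your notation assumes clauses with exactly three literals, though the argument is unaffected for shorter clauses.
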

\begin{proof}
	The following reduction is exactly the reduction from 3SAT to Splitting-Sets and we include it here for completeness. Let $\phi$ be a formula with set of variables $V$ and equal number of variables and clauses. We construct the sets $S$ and $\cal C$ as follows. Define $$S=\{\bar{x}\mid x\in V\} \cup V \cup \{n\}$$ where $\bar{x}$ is the negation of variable $x$ and $n$ is a new variable not in $V$. For each clause $c$ with set of variables or negations of variables $V_c$ that appear in the clause (for example, if $c=(\bar{x} \vee y)$ then $V_c = \{\bar{x},y\}$) construct a set $S_c = V_c \cup \{n\}$. Furthermore, for each variable $x \in V$ construct a set $S_x=\{x,\bar{x}\}$. Let $\cal C$ be the family of subsets $S_c$ and $S_x$ for all clauses $c$ and $x\in V$. Note that $|\cal C|$$ \leq |S|$ which is required by the definition of \textit{Set-Splitting-by-2-Sets}.
	
	Assume that $\phi$ is satisfiable and let $A$ be the satisfying assignment. Define $S_1 = \{x | A(x)=true\} \cup \{\bar{x} | A(x) = false\} $  and $S_2 = \{x | A(x)=false\} \cup \{\bar{x} | A(x) = true\} \cup \{n\}$. Note that $S_1 \cup S_2 = S$. Assume by contradiction that there exists a set $T \in \cal C$ such that $T \subseteq S_1$ or $T \subseteq S_2$. If $T \subseteq S_1$ then $T$ is not a set $S_c$ for some clause $c$ because $n \notin S_1$. However, by the construction of $S_1$ a variable and its negation cannot be in $S_1$. Hence $T \subseteq S_1$ is impossible. If $T \subseteq S_2$ then as in the previous claim $T$ cannot be a set $S_x$ for a variable $x$. Hence $T=S_c$ for some clause $c$. However, this implies that $A(c) = false$, a contradiction.
	
	Conversely, assume there exists splitting sets $S_1$ and $S_2$ and w.l.o.g. $n \in S_1$. We note that it follows that no variable $x$ and its negation $\bar{x}$ are both contained in one of the sets $S_1$ or $S_2$. Define the following assignment $A$ for $\phi$. For all $x \in V$ if $x \in S_1$ let $A(x) = false$, otherwise let $A(x) = true$. Note that $A$ is a well defined assignment. Assume by contradiction that there is a clause $c$ in $\phi$ which is not satisfiable. Since $S_2$ splits $S_c$ it follows that there exists a variable $x$ such that it or its negation $\bar{x}$ are in $S_2$ (recall that $n \in S_1$). If $x \in S_2$ then $A(x) = true$ and if $\bar{x} \in S_2$ then $A(\bar{x}) = true$ since $x \in S_1$. In both cases $c$ is satisfiable, a contradiction.
\end{proof}

This proves the base case. We will now prove the induction step by giving a reduction from \textit{Set-Splitting-by-k-Sets} to \textit{Set-Splitting-by-(k+1)-Sets}. Given $S=\{1,2,...,d\}$ and $\cal C$ $=\{C_j\}_j$ such that $|\cal C|$ $ \leq (k-1)d$, define $S'=\{1,2,...,d+1\}$ and $\cal C'$ $= \cal C$ $\cup \{D_j\}_j$ where $D_j = \{j,d+1\}$ for all $1\leq j \leq d$. Note that $|\cal C'|$ $\leq kd < k(d+1)$.
Assume that there are $S_1,...,S_k$ that split the sets in $\cal C$. Then if we define $S_{k+1} = \{d+1\}$, it follows that $\bigcup_{i=1}^{k+1}{S_i}=S$ and $S_1,...,S_k,S_{k+1}$ are disjoint and split the sets in $\cal C'$.

Conversely, assume that $S_1,...,S_k,S_{k+1}$ split the sets in $\cal C'$. Let w.l.o.g. $S_{k+1}$ be the set that contains $d+1$. Then for all $1 \leq j \leq d$ we have $D_j \not\subseteq S_{k+1}$. It follows that for all $1 \leq j \leq d$, $j \notin S_{k+1}$, or equivalently, $S_{k+1} = \{d+1\}$. Hence, $\bigcup_{i=1}^k{S_i}=S$ and $S_1,...,S_k$ are disjoint and split the sets in $\cal C$, as desired.

\section {Missing Proofs for Section \ref{non_overlapping_gaussian}} \label{supp_non_overlapping_gaussian}

\subsection{Proof of Lemma \ref{LossPropertiesLem_conv}}
\label{section_B1}
\begin{enumerate}
	\item For $\mathbf{w} \neq \mathbf{0}$, the claim follows from Lemma \ref{kernel_gradient}. As in the proof of Lemma \ref{kernel_gradient} we can assume w.l.o.g. that 
	$\mathbf{w}=(0,0,...,0)$ and $\mathbf{w}^*=(1,0,...,0)$. Let $f(\mathbf{w},\mathbf{w}^*) = 2kg(\mathbf{w},\mathbf{w}^*) + (k^2-k)\frac{\left \|\mathbf{w} \right \|\left \|\mathbf{w}^* \right \|}{\pi}$. It suffices to show that $\frac{\partial f}{\partial u_2}(\mathbf{w},\mathbf{w}^*)$ does not exist. Indeed, let $\mathbf{w}_\epsilon = (0,\epsilon,0,...,0)$ then by L'hopital's rule $$\lim_{\epsilon \to 0^+}{\frac{f(\mathbf{w}_\epsilon,\mathbf{w}^*) - f(\mathbf{w},\mathbf{w}^*)}{\epsilon}} = \lim_{\epsilon \to 0^+}{\frac{k}{\pi}\left \| \mathbf{w}^* \right \| \frac{\epsilon}{| \epsilon |} \sin \theta_{\ww_\epsilon,\ww^*}} + (k^2-k)\frac{\left \|\mathbf{w}^* \right \|}{\pi} = \frac{k}{\pi} + \frac{k^2-k}{\pi}$$
	and $$\lim_{\epsilon \to 0^-}{\frac{f(\mathbf{w}_\epsilon,\mathbf{w}^*) - f(\mathbf{w},\mathbf{w}^*)}{\epsilon}} = \lim_{\epsilon \to 0^-}{\frac{k}{\pi}\left \| \mathbf{w}^* \right \| \frac{\epsilon}{| \epsilon |} \sin \theta_{\ww_\epsilon,\ww^*}} - (k^2-k)\frac{\left \|\mathbf{w}^* \right \|}{\pi} = -\frac{k}{\pi} - \frac{k^2-k}{\pi}$$
	Hence the left and right partial derivatives with respect to variable $u_2$ are not equal, and thus $\frac{\partial f}{\partial u_2}(\mathbf{w},\mathbf{w}^*)$ does not exist.
	\item We first show that $\ww = \mathbf{0}$ is a local maximum if and only if $k>1$. Indeed, by considering the loss function as a function of the variable $x=\left\|\ww\right\|$, for any \textit{fixed} angle $\theta_{\ww,\ww^*}$ we get a quadratic function of the form $\ell(x)=ax^2-bx$, where $a > 0$ and $b \geq 0$. Since $f(\theta)=\sin\theta + (\pi - \theta)\cos\theta$ is a non-negative function for $0 \leq \theta \leq \pi$ and $f(\theta)=0$ if and only if $\theta=\pi$, it follows that $b=0$ if and only if $k=1$ \textit{and} $\theta_{\ww,\ww^*} = \pi$. Therefore if $k > 1$, then for all fixed angles $\theta_{\ww,\ww^*}$, the minimum of $\ell(x)$ is attained at $x > 0$, which implies that $\ww = \mathbf{0}$ is a local maximum. If $k = 1$ and $\theta_{\ww,\ww^*} = \pi$ the minimum of $\ell(x)$ is attained at $x=0$, and thus $\mathbf{w} = \mathbf{0}$ is not a local maximum in this case.  
	
	We will now find the other critical points of $\ell$. By Lemma 
	3.2 we get 
	
	\begin{equation}
	\label{eq:loss_gradient}
	\begin{split}
	\nabla \ell(\mathbf{w}) &= \frac{1}{k^2} \Bigg[\big(k + \frac{k^2-k}{\pi}\big)\mathbf{w} - \frac{k}{\pi}\left \| \mathbf{w}^* \right \| \frac{\mathbf{w}}{\left \| \mathbf{w} \right \|} \sin \theta_{\ww,\ww^*} - \frac{k}{\pi}\Big(\pi - \theta_{\ww,\ww^*} \Big) \mathbf{w}^* - \frac{k^2-k}{\pi}\left \| \mathbf{w}^* \right \|\frac{\mathbf{w}}{\left \| \mathbf{w} \right \|}\Bigg] \\ &= 
	\frac{1}{k^2} \Bigg[\Bigg(k + \frac{k^2-k}{\pi} -  \frac{k\left \| \mathbf{w}^* \right \|}{\pi\left \| \mathbf{w} \right \|} \sin \theta_{\ww,\ww^*} - \frac{k^2-k}{\pi}\frac{\left \| \mathbf{w}^* \right \|}{\left \| \mathbf{w} \right \|} \Bigg) \mathbf{w} - \frac{k}{\pi}\Big(\pi - \theta_{\ww,\ww^*} \Big) \mathbf{w}^* \Bigg]
	\end{split}
	\end{equation}
	
	and assume it vanishes.
	
	Denote $\theta\triangleq\theta_{\ww,\ww^*}$. If $\theta = 0$ then let $\mathbf{w}=\alpha \mathbf{w}^*$ for some $\alpha > 0$. It follows that $$k + \frac{k^2-k}{\pi} - \frac{k^2-k}{\pi}\frac{1}{\alpha} - \frac{k}{\alpha} = 0$$ or equivalently $\alpha = 1$, and thus $\mathbf{w}=\mathbf{w}^*$.
	
	If $\theta = \pi$ then $\left \| \mathbf{w} \right \| = \frac{k^2-k}{k^2 + (\pi -1) k}\left \| \mathbf{w}^* \right \|$ and thus $\mathbf{w} = -(\frac{k^2-k}{k^2 + (\pi -1) k})\mathbf{w}^*$. By setting $\theta=\pi$ in the loss function, one can see that $\mathbf{w} = -(\frac{k^2-k}{k^2 + (\pi -1) k})\mathbf{w}^*$ is a one-dimensional local minimum, whereas by fixing $\left\|\mathbf{w}\right\|$ and decreasing $\theta$, the loss function decreases. It follows that $\mathbf{w} = -(\frac{k^2-k}{k^2 + (\pi -1) k})\mathbf{w}^*$ is a saddle point. If $\theta \neq 0,\pi$ then $\mathbf{w}$ and $\mathbf{w}^*$ are linearly independent and thus $\frac{k}{\pi}\Big(\pi - \theta \Big) = 0$ which is a contradiction.
	
	It remains to show that $\mathbf{u} = -\gamma(k)\mathbf{w}^*$ where $\gamma(k) = \frac{k^2-k}{k^2 + (\pi -1) k}$ is a degenerate saddle point. We will show that the Hessian at $\bs{u}$ denoted by $\nabla^2\ell(\mathbf{u})$, has only nonnegative eigenvalues and at least one zero eigenvalue. Let $\tilde{\ell}(\mathbf{w}) \triangleq \ell(\mathbf{w},R\mathbf{w}^*)$, where the second entry denotes the ground truth weight vector and $R$ is a rotation matrix. 
	Denote by $f_{\mathbf{d}_1,\mathbf{d}_2}$ the second directional derivative of a function $f$ in directions $\mathbf{d}_1$ and $\mathbf{d}_2$. Similarly to the proof of Lemma \ref{kernel_gradient}, since $\ell$ depends only on $\left\|\mathbf{w}\right\|$, $\left\|\mathbf{w}^*\right\|$ and $\theta_{\mathbf{w},\mathbf{w}^*}$, we notice that $$\ell_{\mathbf{d}_1,\mathbf{d}_2}(\mathbf{w}) = \tilde{\ell}_{R\mathbf{d}_1,R\mathbf{d}_2}(R\mathbf{w})$$ or equivalently $$\mathbf{d}_1^T\nabla^2\ell(\mathbf{w})\mathbf{d}_2 =  (R\mathbf{d}_1)^T\nabla^2\tilde{\ell}(R\mathbf{w})R\mathbf{d}_2 = \mathbf{d}_1^TR^T\nabla^2\tilde{\ell}(R\mathbf{w})R\mathbf{d}_2$$
	
	for any $\mathbf{w}$ and directions $\mathbf{d}_1$ and $\mathbf{d}_2$. It follows that $$\nabla^2\ell(\mathbf{w}) = R^T\nabla^2\tilde{\ell}(R\mathbf{w})R$$ for all $\mathbf{w}$. Since $R$ is an orthogonal matrix, we have that $\nabla^2\ell(\mathbf{w})$ and $\nabla^2\tilde{\ell}(R\mathbf{w})$ are similar matrices and thus have the same eigenvalues. Therefore, we can w.l.o.g. rotate $\mathbf{w}^*$ such that it will be on the $w_1$ axis.
	
By symmetry we have $$\frac{\partial \ell}{\partial w_1 \partial w_i}(\mathbf{u}) = \frac{\partial \ell}{\partial w_1 \partial w_j}(\mathbf{u}),\,\,\frac{\partial \ell}{\partial w_i \partial w_1}(\mathbf{u}) = \frac{\partial \ell}{\partial w_j \partial w_1}(\mathbf{u})$$ and $$\frac{\partial \ell}{\partial w_i^2}(\mathbf{u}) = \frac{\partial \ell}{\partial w_j^2}(\mathbf{u}),\,\,\frac{\partial \ell}{\partial w_i \partial w_j}(\mathbf{u}) = \frac{\partial \ell}{\partial w_s \partial w_t}(\mathbf{u})$$ 
	for $i\neq j,s \neq t$ such that $i,j,s,t \neq 1$. It follows that we only need to consider second partial derivatives with respect to 3 axes $w_1$,$w_2$ and $w_3$. Denote $\mathbf{u}_{\epsilon} = (-\gamma(k),\epsilon,0,...,0)$ and $\ww^*=(1,0,...,0)$ and $\beta(k) = \frac{k^2-k}{\pi}$ and note that $\gamma(k) = \frac{\beta(k)}{\beta(k)+k}$. Then by equation \eqref{eq:loss_gradient} we have 
	\begin{equation}
	\begin{split}
	\frac{\partial \ell}{\partial w_2^2}(\mathbf{u}) &= \lim_{\epsilon \to 0}{\frac{\nabla \ell (\mathbf{u}_{\epsilon})_x - \nabla\ell(\mathbf{u})_x}{\epsilon}} \\ &= \lim_{\epsilon \to 0}{\frac{\frac{1}{k^2} \Bigg[\big(k + \beta(k)\big)\epsilon - \frac{k}{\pi}\left \| \mathbf{w}^* \right \| \frac{\epsilon}{\left \| \mathbf{u}_{\epsilon} \right \|} \sin \theta_{\mathbf{u}_{\epsilon},\mathbf{w}^*} - \beta(k)\left \| \mathbf{w}^* \right \|\frac{\epsilon}{\left \| \mathbf{u}_{\epsilon} \right \|}\Bigg]}{\epsilon}} \\ &=
	\frac{1}{k^2}\big(k+\beta(k)-\frac{\beta(k)}{\gamma(k)}\big) = 0
	\end{split}
	\end{equation}
	
	Furthermore,
	
	\begin{equation}
	\begin{split}
	\frac{\partial \ell}{\partial w_1 \partial w_2}(\mathbf{u}) &= \lim_{\epsilon \to 0}{\frac{\nabla \ell (\mathbf{u}_{\epsilon})_y - \nabla\ell(\mathbf{u})_y}{\epsilon}} \\ &= \lim_{\epsilon \to 0}{\frac{\frac{1}{k^2} \Bigg[-\big(k + \beta(k)\big)\gamma(k) + \frac{k}{\pi}\left \| \mathbf{w}^* \right \| \frac{\gamma(k)}{\left \| \mathbf{u}_{\epsilon} \right \|} \sin \theta_{\mathbf{u}_{\epsilon},\mathbf{w}^*} + \beta(k)\left \| \mathbf{w}^* \right \|\frac{\gamma(k)}{\left \| \mathbf{u}_{\epsilon} \right \|} - \frac{k}{\pi}(\pi - \theta_{\mathbf{u}_{\epsilon},\mathbf{w}^*})\Bigg]}{\epsilon}}
	\end{split}
	\end{equation}
	
	
	where $\theta_{\mathbf{u}_{\epsilon},\mathbf{w}^*} = \arccos(\frac{-\gamma(k)}{\sqrt{\epsilon^2 + \gamma^2(k)}})$.	
	
	By L'Hopital's rule we have
	
	\begin{equation}
	\begin{split}
	\frac{\partial \ell}{\partial w_1 \partial w_2}(\mathbf{u}) &= \lim_{\epsilon \to 0}{ -\frac{\gamma(k)\epsilon\sin\theta_{\mathbf{u}_{\epsilon},\mathbf{w}^*}}{\pi k{\left\|\mathbf{u}_\epsilon\right\|}^3} + \frac{\gamma(k)\cos\theta_{\mathbf{u}_{\epsilon},\mathbf{w}^*}\frac{\partial \theta_{\mathbf{u}_{\epsilon},\mathbf{w}^*}}{\partial w_2}}{\pi k \left\|\mathbf{u}_\epsilon\right\|}} -\frac{\beta(k)\gamma(k)\epsilon}{{\left\|\mathbf{u}_\epsilon\right\|}^3} + \frac{\frac{\partial \theta_{\mathbf{u}_{\epsilon},\mathbf{w}^*}}{\partial w_2}}{\pi k}\\ &= \frac{1}{\pi k} \lim_{\epsilon \to 0}{\frac{\partial \theta_{\mathbf{u}_{\epsilon},\mathbf{w}^*}}{\partial w_2}\Big(\frac{\gamma(k)\cos\theta_{\mathbf{u}_{\epsilon},\mathbf{w}^*}}{\left\|\mathbf{u}_\epsilon\right\|} + 1\Big)}
	\end{split}
	\end{equation}
	
	Since $$\frac{\partial \theta_{\mathbf{u}_{\epsilon},\mathbf{w}^*}}{\partial w_2}(\mathbf{u}_{\epsilon}) = -\frac{1}{\frac{|\epsilon|}{\sqrt{\epsilon^2+\gamma^2(k)}}} \frac{\epsilon\gamma(k)}{(\epsilon^2+\gamma^2(k))^{\frac{3}{2}}} = -\frac{\epsilon\gamma(k)}{(\epsilon^2+\gamma^2(k))|\epsilon|}$$
	it follows that  
	
	\begin{equation}
	\begin{split}
	\left|\frac{\partial \ell}{\partial w_1 \partial w_2}(\mathbf{u})\right| &=   \frac{1}{\pi k} \lim_{\epsilon \to 0}{\left|\frac{\partial \theta_{\mathbf{u}_{\epsilon},\mathbf{w}^*}}{\partial w_2}\right|\left|\frac{\gamma(k)\cos\theta_{\mathbf{u}_{\epsilon},\mathbf{w}^*}}{\left\|\mathbf{u}_\epsilon\right\|} + 1\right|} \\ &\leq \frac{1}{\gamma(k)\pi k} \lim_{\epsilon \to 0}{\left|\frac{\gamma(k)\cos\theta_{\mathbf{u}_{\epsilon},\mathbf{w}^*}}{\left\|\mathbf{u}_\epsilon\right\|} + 1\right|} = 0
	\end{split}
	\end{equation}
\end{enumerate}
and thus $\frac{\partial \ell}{\partial w_1 \partial w_2}(\mathbf{u}) = 0$.

Taking derivatives of the gradient with respect to $w_1$ is easier because the expressions in \eqref{eq:loss_gradient} that depend on $\theta_{\mathbf{w},\mathbf{w}^*}$ and $\frac{\mathbf{w}}{\left\|\mathbf{w}\right\|}$ are constant. Therefore,
$$\frac{\partial \ell}{\partial w_1^2}(\mathbf{u}) = \frac{k + \beta(k)}{k^2}$$
and 
$$\frac{\partial \ell}{\partial w_2 \partial w_1}(\mathbf{u}) = 0$$

Finally let $\tilde{\mathbf{u}}_{\epsilon} = (0,-\gamma(k),\epsilon,0,...,0)$ then it is easy to see that $$\frac{\partial \ell}{\partial {w_2} \partial w_3}(\mathbf{u}) = \lim_{\epsilon \to 0}{\frac{\nabla \ell (\tilde{\mathbf{u}}_{\epsilon})_{w_2} - \nabla\ell(\mathbf{u})_{w_2}}{\epsilon}} = 0$$.

Therefore, overall we see that $\nabla^2\ell(\mathbf{u})$ is a diagonal matrix with zeros and $\frac{k + \beta(k)}{k^2} > 0$ on the diagonal, which proves our claim.

\subsection {Proof of Theorem 5.2}

For the following lemmas let $\mathbf{w}_{t+1} = \mathbf{w}_t - \lambda\nabla \ell(\mathbf{w}_t)$, $\theta_t$ be the angle between $\mathbf{w}_t$ and $\mathbf{w}^*$ ($t \geq 0$) 	and define $\tilde{\lambda}=\alpha(k)\lambda$ where $\alpha(k) = \frac{1}{k} + \frac{k^2-k}{\pi k^2}$. Note that $\alpha(k) \leq 1$ for all $k \geq 1$ The following lemma shows that for $\lambda < 1$, the angle between $\mathbf{w}_t$ and $\mathbf{w}^*$ decreases in each iteration. 

\begin{lem}
	\label{increasingTheta_conv}
	If $0 < \theta_t < \pi$ and $\lambda < 1$ then $\theta_{t+1} < \theta_t$.
\end{lem}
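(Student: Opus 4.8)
The plan is to rewrite the update as a combination of $\ww_t$ and $\ww^*$ with strictly positive coefficients, and then reduce the angle comparison to a one-dimensional monotonicity statement inside the plane spanned by these two vectors.

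First I would read off the coefficients from the closed form of the gradient in \eqref{eq:loss_gradient}. Writing $\nabla\ell(\ww_t) = -c_1\ww_t - c_2\ww^*$ as in the main text, the update \eqref{eq:w_t_plus_one} becomes $\ww_{t+1} = a\,\ww_t + b\,\ww^*$ with $a = 1 + \lambda c_1$ and $b = \lambda c_2$, where
\[
c_1 = -\frac{1}{k^2}\Big(k + \tfrac{k^2-k}{\pi} - \tfrac{k\|\ww^*\|}{\pi\|\ww_t\|}\sin\theta_t - \tfrac{k^2-k}{\pi}\tfrac{\|\ww^*\|}{\|\ww_t\|}\Big), \qquad c_2 = \frac{\pi-\theta_t}{k\pi}.
\]
Since $0 < \theta_t < \pi$ forces $\sin\theta_t \geq 0$ and $\pi - \theta_t > 0$, the two subtracted terms in $c_1$ are non-negative, so $-c_1 \leq \frac{1}{k^2}(k + \frac{k^2-k}{\pi}) = \alpha(k)$. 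Using $\alpha(k) \leq 1$ and $\lambda < 1$ I then obtain $a = 1 + \lambda c_1 \geq 1 - \lambda\alpha(k) \geq 1 - \lambda > 0$, while $b = \lambda c_2 > 0$ strictly because $\pi - \theta_t > 0$. The only facts I need from this step are the two sign conditions $a > 0$ and $b > 0$.

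Since $\theta_t \in (0,\pi)$, the vectors $\ww_t$ and $\ww^*$ are linearly independent and $\ww_{t+1}$ lies in their span. I would place orthonormal coordinates in this plane with $\ww^*$ along the positive $x$-axis, so that $\ww_t = \|\ww_t\|(\cos\theta_t, \sin\theta_t)$ with $\sin\theta_t > 0$, and hence $\ww_{t+1} = \big(a\|\ww_t\|\cos\theta_t + b\|\ww^*\|,\ a\|\ww_t\|\sin\theta_t\big)$. Because $a > 0$, the vector $a\ww_t$ has polar angle exactly $\theta_t$; meanwhile $\ww_{t+1}$ has the \emph{same} $y$-coordinate $a\|\ww_t\|\sin\theta_t > 0$ but an $x$-coordinate larger by $b\|\ww^*\| > 0$.

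It then remains to invoke the elementary fact that for a point in the open upper half-plane at a fixed height $y > 0$, the polar angle $\theta \in (0,\pi)$ is a strictly decreasing function of the $x$-coordinate, which follows since $\cot\theta = x/y$ and $\cot$ is strictly decreasing on $(0,\pi)$. Applying this to $a\ww_t$ and $\ww_{t+1}$ (same positive $y$, strictly larger $x$) yields $\theta_{t+1} < \theta_t$, as claimed. The only genuine computation is the bound $-c_1 \leq \alpha(k)$, which together with $\alpha(k)\leq 1$ secures the positivity of $a$; I expect this to be the crux, since it is the single place that uses the specific structure of the loss, whereas the concluding half-plane argument is purely geometric and immediate.
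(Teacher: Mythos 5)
Your proof is correct and follows essentially the same route as the paper's: both decompose the update as $\ww_{t+1} = (1+\lambda c_1)\ww_t + \lambda c_2\ww^*$, use $\alpha(k) = \frac{1}{k}+\frac{k^2-k}{\pi k^2} \leq 1$ together with $\lambda<1$ to keep the $\ww_t$-coefficient positive (so that component does not change the angle), and note that adding a positive multiple of $\ww^*$ strictly decreases the angle. Your cotangent argument merely makes rigorous the final geometric step that the paper states without proof.
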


\begin{proof}
	This follows from the fact that adding $$-\frac{\lambda}{k^2}\Bigg(k + \frac{k^2-k}{\pi} -  \frac{k\left \| \mathbf{w}^* \right \|}{\pi\left \| \mathbf{w}_t \right \|} \sin \theta_t - \frac{k^2-k}{\pi}\frac{\left \| \mathbf{w}^* \right \|}{\left \| \mathbf{w}_t \right \|} \Bigg) \mathbf{w}_t$$ to $\mathbf{w}_t$ does not change $\theta_t$ for $\lambda<1$, since $\frac{k + \frac{k^2-k}{\pi}}{k^2} \leq 1$ for $k \geq 1$. In addition, adding $\frac{\lambda}{\pi k}\Big(\pi - \theta \Big) \mathbf{w}^*$ decreases $\theta_t$.
\end{proof}

We will need the following two lemmas to establish a lower bound on $\left\|\mathbf{w}_t\right\|$.
\begin{lem}
	\label{greaterThanPi/2_conv}
	If $\frac{\pi}{2} < \theta_t < \pi$ then $\left \| \mathbf{w}_{t+1} \right \| \geq \frac{\sin{\theta_{t}}}{\sin{\theta_{t+1}}} \min\{\left \| \mathbf{w}_t \right \|,\frac{\left \| \mathbf{w}^* \right \| \sin \theta_t}{\alpha(k)\pi}\} $.
\end{lem}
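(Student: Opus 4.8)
The plan is to follow the gradient step inside the two-dimensional plane $\vspan\{\mathbf{w}_t,\mathbf{w}^*\}$ and reduce the claim to elementary trigonometry. First I would make the update explicit. Expanding $\mathbf{w}_{t+1}=\mathbf{w}_t-\lambda\nabla\ell(\mathbf{w}_t)$ with the gradient from \eqref{eq:loss_gradient} and collecting the coefficients of $\mathbf{w}_t$ and $\mathbf{w}^*$ gives
\[
\mathbf{w}_{t+1}=a\,\mathbf{w}_t+b\,\mathbf{w}^*,\qquad b=\frac{\lambda(\pi-\theta_t)}{\pi k},\qquad a=1-\tilde{\lambda}+\frac{\lambda\|\mathbf{w}^*\|}{\pi k\|\mathbf{w}_t\|}\big(\sin\theta_t+k-1\big),
\]
where I have used $\frac{1}{k^2}\big(k+\frac{k^2-k}{\pi}\big)=\alpha(k)$ to produce the factor $\tilde{\lambda}=\alpha(k)\lambda$. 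Since $\theta_t<\pi$ we have $b>0$, and since $\lambda<1$ and $\alpha(k)\le 1$ we have $a\ge 1-\tilde{\lambda}>0$; hence $\mathbf{w}_{t+1}$ is a strictly positive combination of $\mathbf{w}_t$ and $\mathbf{w}^*$, which is consistent with $\theta_{t+1}<\theta_t$ from Lemma \ref{increasingTheta_conv}.

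Next I would apply the law of sines to the triangle with vertices $\mathbf{0}$, $b\mathbf{w}^*$ and $\mathbf{w}_{t+1}$. The side from $b\mathbf{w}^*$ to $\mathbf{w}_{t+1}$ is exactly $a\mathbf{w}_t$, which points along $+\mathbf{w}_t$ because $a>0$, so the interior angle at $b\mathbf{w}^*$ is $\pi-\theta_t$, while the angle at $\mathbf{0}$ is $\theta_{t+1}$. Matching each angle with its opposite side yields
\[
\|\mathbf{w}_{t+1}\|=\frac{\sin\theta_t}{\sin\theta_{t+1}}\,a\|\mathbf{w}_t\|.
\]
This reduces the lemma to the scalar estimate $a\|\mathbf{w}_t\|\ge\min\{\|\mathbf{w}_t\|,\,r\}$ with $r:=\frac{\|\mathbf{w}^*\|\sin\theta_t}{\alpha(k)\pi}$.

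I would prove the scalar estimate by splitting on $\|\mathbf{w}_t\|$ versus $r$. If $\|\mathbf{w}_t\|\le r$, then substituting $\alpha(k)=\frac{\pi+k-1}{\pi k}$ shows that $a\ge 1$ is equivalent to $\frac{\|\mathbf{w}^*\|}{\|\mathbf{w}_t\|}(\sin\theta_t+k-1)\ge\pi+k-1$; using $\frac{\|\mathbf{w}^*\|}{\|\mathbf{w}_t\|}\ge\frac{\alpha(k)\pi}{\sin\theta_t}$ this follows from $(k-1)(1-\sin\theta_t)\ge 0$, so $a\|\mathbf{w}_t\|\ge\|\mathbf{w}_t\|$. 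If instead $\|\mathbf{w}_t\|>r$, I would write $a\|\mathbf{w}_t\|=(1-\tilde{\lambda})\|\mathbf{w}_t\|+\frac{\lambda\|\mathbf{w}^*\|}{\pi k}(\sin\theta_t+k-1)$, bound $\|\mathbf{w}_t\|>r$ in the first summand, and simplify the resulting expression to $r+\frac{\lambda\|\mathbf{w}^*\|}{\pi k}(k-1)(1-\sin\theta_t)\ge r$. In both cases $a\|\mathbf{w}_t\|\ge\min\{\|\mathbf{w}_t\|,r\}$, which combined with the displayed identity proves the lemma.

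I expect the main obstacle to be bookkeeping rather than a conceptual difficulty: one must collect the coefficient $a$ carefully so that the normalization $\alpha(k)$ (and hence $\tilde{\lambda}$) appears cleanly, and then notice that the single elementary inequality $(k-1)(1-\sin\theta_t)\ge 0$ closes both cases. The one place demanding geometric care is the law-of-sines step, where I must justify that $a>0$ and $b>0$ place $\mathbf{w}_{t+1}$ strictly inside the cone spanned by $\mathbf{w}_t$ and $\mathbf{w}^*$, so that the interior angle at $b\mathbf{w}^*$ equals $\pi-\theta_t$; this positivity of $a$ is exactly where the hypothesis $\lambda<1$ enters.
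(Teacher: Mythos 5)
Your proposal is correct and follows essentially the same route as the paper: you split the update into the same two pieces $a\mathbf{w}_t$ (the paper's $\mathbf{u}_t$) and $b\mathbf{w}^*$, prove the same norm identity $\left\|\mathbf{w}_{t+1}\right\| = \frac{\sin\theta_t}{\sin\theta_{t+1}}\,a\left\|\mathbf{w}_t\right\|$ (your law-of-sines argument is equivalent to the paper's observation that adding a positive multiple of $\mathbf{w}^*$ preserves the component perpendicular to $\mathbf{w}^*$), and close with the same two-case comparison of $\left\|\mathbf{w}_t\right\|$ against $\frac{\left\|\mathbf{w}^*\right\|\sin\theta_t}{\alpha(k)\pi}$, reducing in both cases to $(k-1)(1-\sin\theta_t)\geq 0$ just as the paper's bound uses $\sin\theta_t\leq 1$.
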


\begin{proof}
	Let $$ \mathbf{u}_t = \mathbf{w}_t -\frac{\lambda}{k^2}\Bigg(k + \frac{k^2-k}{\pi} -  \frac{k\left \| \mathbf{w}^* \right \|}{\pi\left \| \mathbf{w}_t \right \|} \sin \theta_t - \frac{k^2-k}{\pi}\frac{\left \| \mathbf{w}^* \right \|}{\left \| \mathbf{w}_t \right \|} \Bigg) \mathbf{w}_t$$
	
	Notice that if $\left \| \mathbf{w}_t \right \| \leq  \frac{\left \| \mathbf{w}^* \right \| \sin \theta_t}{\alpha(k)\pi}  $ then
	\begin{equation}
	\begin{split}
	\left \| \mathbf{u}_t \right \| &= (1-\tilde{\lambda})\left \| \mathbf{w}_t \right \| + \frac{\lambda \left \| \mathbf{w}^* \right \|}{\pi k} \sin \theta_t + \frac{\lambda(k^2-k)\left \| \mathbf{w}^* \right \|}{\pi k^2} \\ &\geq (1-\tilde{\lambda})\left \| \mathbf{w}_t \right \| + \frac{\lambda k\left \| \mathbf{w}^* \right \|\sin \theta_t}{\pi k^2}  + \frac{\lambda(k^2-k)\left \| \mathbf{w}^* \right \|\sin \theta_t}{\pi k^2}  \\ &=  (1-\tilde{\lambda})\left \| \mathbf{w}_t \right \| + \frac{\tilde{\lambda}\left \| \mathbf{w}^* \right \|\sin \theta_t}{\alpha(k)\pi}  \geq  \left \| \mathbf{w}_t \right \|
	\end{split}
	\end{equation}
	
	Similarly, if $\left \| \mathbf{w}_t \right \| \geq  \frac{\left \| \mathbf{w}^* \right \| \sin \theta_t}{\alpha(k)\pi} $ then $\left \| \mathbf{u}_t \right \| \geq \frac{\left \| \mathbf{w}^* \right \| \sin \theta_t}{\alpha(k)\pi} $. Furthermore, by a simple geometric observation we see that $\left \| \mathbf{w}_{t+1} \right \| \cos (\theta_{t+1} - \frac{\pi}{2}) = \left \| \mathbf{u}_t \right \| \cos (\theta_{t} - \frac{\pi}{2})$ if $\theta_{t+1} > \frac{\pi}{2}$ and $\left \| \mathbf{w}_{t+1} \right \| \cos (\frac{\pi}{2} - \theta_{t+1}) = \left \| \mathbf{u}_t \right \| \cos (\theta_{t} - \frac{\pi}{2})$ if $\theta_{t+1} \leq \frac{\pi}{2}$. This is equivalent to $\left \| \mathbf{w}_{t+1} \right \| = \frac{\sin{\theta_{t}}}{\sin{\theta_{t+1}}} \left \| \mathbf{u}_t \right \|$. It follows that $\left \| \mathbf{w}_{t+1} \right \| \geq \frac{\sin{\theta_{t}}}{\sin{\theta_{t+1}}} \min\{\left \| \mathbf{w}_t \right \|,\frac{\left \| \mathbf{w}^* \right \| \sin \theta_t}{\alpha(k)\pi}\}$ as desired.
\end{proof}

\begin{lem}
	\label{lessThanPi/2_conv}
	If $0 < \theta_t \leq \frac{\pi}{2}$ and $0 < \lambda < \frac{1}{2}$ then $\left \| \mathbf{w}_{t+1} \right \| \geq \min\{\left \| \mathbf{w}_t \right \|,\frac{\left \| \mathbf{w}^* \right \|}{8}\}$
\end{lem}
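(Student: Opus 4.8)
The plan is to reuse the decomposition of the gradient step introduced in the proof of Lemma~\ref{greaterThanPi/2_conv}. Writing the update \eqref{eq:gd_updates} with the gradient \eqref{eq:loss_gradient}, I would split the step into its $\mathbf{w}_t$-parallel part and its $\mathbf{w}^*$-part, $\mathbf{w}_{t+1} = \mathbf{u}_t + \frac{\lambda}{\pi k}(\pi-\theta_t)\mathbf{w}^*$, where $\mathbf{u}_t$ is exactly the positively scaled copy of $\mathbf{w}_t$ defined there, with $\left\|\mathbf{u}_t\right\| = (1-\tilde\lambda)\left\|\mathbf{w}_t\right\| + \frac{\lambda}{\pi k}\left\|\mathbf{w}^*\right\|\sin\theta_t + \frac{\lambda(k^2-k)}{\pi k^2}\left\|\mathbf{w}^*\right\|$ and $\tilde\lambda = \alpha(k)\lambda \le \lambda$. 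The scalar multiplying $\mathbf{w}_t$ in $\mathbf{u}_t$ is positive, since $1-\tilde\lambda \ge 1-\lambda > \frac12$, so $\mathbf{u}_t$ genuinely points in the direction of $\mathbf{w}_t$.

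Next I would project onto $\hat{\mathbf{w}}_t = \mathbf{w}_t/\left\|\mathbf{w}_t\right\|$. Because $0<\theta_t\le\frac\pi2$ gives $\cos\theta_t\ge0$, every contribution is nonnegative and Cauchy--Schwarz yields $\left\|\mathbf{w}_{t+1}\right\| \ge \mathbf{w}_{t+1}\cdot\hat{\mathbf{w}}_t = \left\|\mathbf{u}_t\right\| + \frac{\lambda}{\pi k}(\pi-\theta_t)\left\|\mathbf{w}^*\right\|\cos\theta_t$. Retaining the $\cos\theta_t$ term is important, as the cruder bound $\left\|\mathbf{w}_{t+1}\right\|\ge\left\|\mathbf{u}_t\right\|$ is too lossy when $\theta_t$ is small and $k=1$. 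Collecting the coefficient of $\left\|\mathbf{w}^*\right\|$ gives $\left\|\mathbf{w}_{t+1}\right\| \ge (1-\tilde\lambda)\left\|\mathbf{w}_t\right\| + \lambda\left\|\mathbf{w}^*\right\| B(\theta_t,k)$, where $B(\theta_t,k)=\frac{1}{\pi k}\big(\sin\theta_t+(\pi-\theta_t)\cos\theta_t\big)+\frac{k-1}{\pi k}$.

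The key estimate is then $B(\theta_t,k)\ge\frac1\pi$ on $(0,\frac\pi2]$, uniformly in $k$. I would establish it by noting that $f(\theta)=\sin\theta+(\pi-\theta)\cos\theta$ has derivative $f'(\theta)=-(\pi-\theta)\sin\theta\le0$, hence $f$ is decreasing on $[0,\frac\pi2]$ and $f(\theta_t)\ge f(\frac\pi2)=1$; substituting gives $B(\theta_t,k)\ge\frac{1}{\pi k}+\frac{k-1}{\pi k}=\frac1\pi$. Using $\alpha(k)\le1$ and $\lambda<\frac12$, this collapses to the clean bound $\left\|\mathbf{w}_{t+1}\right\| \ge (1-\lambda)\left\|\mathbf{w}_t\right\| + \frac{\lambda}{\pi}\left\|\mathbf{w}^*\right\|$.

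I would finish with a two-case comparison against $\frac{\left\|\mathbf{w}^*\right\|}{8}$. If $\left\|\mathbf{w}_t\right\|\le\frac{\left\|\mathbf{w}^*\right\|}{8}$, then $\frac\lambda\pi\left\|\mathbf{w}^*\right\|\ge\frac8\pi\lambda\left\|\mathbf{w}_t\right\|>2\lambda\left\|\mathbf{w}_t\right\|$ (as $\frac8\pi>2$), so $\left\|\mathbf{w}_{t+1}\right\|>(1-\lambda)\left\|\mathbf{w}_t\right\|+2\lambda\left\|\mathbf{w}_t\right\|>\left\|\mathbf{w}_t\right\|=\min\{\left\|\mathbf{w}_t\right\|,\frac{\left\|\mathbf{w}^*\right\|}{8}\}$; if $\left\|\mathbf{w}_t\right\|>\frac{\left\|\mathbf{w}^*\right\|}{8}$, then $\left\|\mathbf{w}_{t+1}\right\|>\frac{\left\|\mathbf{w}^*\right\|}{8}+\lambda\left\|\mathbf{w}^*\right\|\big(\frac1\pi-\frac18\big)\ge\frac{\left\|\mathbf{w}^*\right\|}{8}$ since $\frac1\pi>\frac18$. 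The main obstacle is the projection-plus-$B$ step: one must keep the $\cos\theta_t$ contribution and extract the $k$-independent constant $B\ge\frac1\pi$ from the monotonicity of $f$, because this is precisely what fails in the near-parallel regime; the concluding case analysis is then routine arithmetic with the explicit constants $\frac8\pi>2$ and $\frac1\pi>\frac18$.
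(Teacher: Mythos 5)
Your proof is correct, and it takes a genuinely different, more unified route than the paper's. The paper starts from the same decomposition $\mathbf{w}_{t+1} = \mathbf{u}_t + \frac{\lambda}{\pi k}(\pi-\theta_t)\mathbf{w}^*$ (with $\mathbf{u}_t$ a positive multiple of $\mathbf{w}_t$), but then discards the angular information in the second term, keeping only $\left\|\mathbf{w}_{t+1}\right\| \geq \left\|\mathbf{u}_t\right\|$. For $k \geq 2$ this suffices, because $\left\|\mathbf{u}_t\right\|$ already contains the $\theta_t$-independent term $\frac{\lambda(k-1)}{\pi k}\left\|\mathbf{w}^*\right\|$; but for $k=1$ that term vanishes, and the paper is forced into a separate, more laborious analysis: it splits into $\theta_t < \frac{\pi}{3}$ and $\theta_t \geq \frac{\pi}{3}$, and in the subcase $\theta_t<\frac{\pi}{3}$, $\left\|\mathbf{w}_t\right\|<\frac{\left\|\mathbf{w}^*\right\|}{4}$ it expands $\left\|\mathbf{w}_{t+1}\right\|^2$ by the law of cosines using $\cos\theta_t>\frac{1}{2}$ to get $\left\|\mathbf{w}_{t+1}\right\|^2 \geq (1+3\lambda^2)\left\|\mathbf{w}_t\right\|^2$. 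Your projection onto $\hat{\mathbf{w}}_t$ (legitimate, since you verify the scalar multiplying $\mathbf{w}_t$ in $\mathbf{u}_t$ is positive) retains the $(\pi-\theta_t)\cos\theta_t$ contribution in all cases, and the monotonicity of $f(\theta)=\sin\theta+(\pi-\theta)\cos\theta$ on $[0,\pi]$ (a fact the paper itself uses elsewhere, e.g.\ in the proof of Lemma~\ref{al_ar_lem}) collapses everything into the single $k$-uniform recursion $\left\|\mathbf{w}_{t+1}\right\| \geq (1-\lambda)\left\|\mathbf{w}_t\right\| + \frac{\lambda}{\pi}\left\|\mathbf{w}^*\right\|$. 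This buys you three things the paper's argument does not have: no case split on $k$ or on $\theta_t$; a stronger conclusion, since your recursion in fact yields $\left\|\mathbf{w}_{t+1}\right\| \geq \min\{\left\|\mathbf{w}_t\right\|,\frac{\left\|\mathbf{w}^*\right\|}{\pi}\}$ with $\frac{1}{\pi}>\frac{1}{8}$; and validity for any $0<\lambda<1$, whereas the paper's $k=1$ branch genuinely uses $\lambda<\frac{1}{2}$ (via $1-\lambda\geq\frac{1}{2}$). The only thing the paper's version buys in exchange is that it reuses the norm estimates of Lemma~\ref{greaterThanPi/2_conv} as black boxes rather than re-deriving the coefficient of $\left\|\mathbf{w}^*\right\|$, at the cost of the case analysis.
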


\begin{proof}
	First assume that $k\geq 2$. Let $\uu_t$ be as in Lemma \ref{greaterThanPi/2_conv}, then $$\left \| \mathbf{u}_t \right \| \geq (1-\tilde{\lambda})\left \| \mathbf{w}_t \right \| + \frac{\tilde{\lambda}(k^2-k)\left \| \mathbf{w}^* \right \|}{\alpha(k)\pi k^2}$$
	It follows that if $\left\|\mathbf{w}_t\right\| \geq \frac{(k^2-k)\left \| \mathbf{w}^* \right \|}{\alpha(k)\pi k^2} \geq \frac{\left\|\mathbf{w}^*\right\|}{2\pi}$ then $\left\|\mathbf{u}_t\right\| \geq \frac{\left\|\mathbf{w}^*\right\|}{2\pi}$. Otherwise if $\left\|\mathbf{w}_t\right\| \leq \frac{(k^2-k)\left \| \mathbf{w}^* \right \|}{\alpha(k)\pi k^2}$ then $\left\|\mathbf{u}_t\right\| \geq \left\|\mathbf{w}_t\right\|$. Since $\mathbf{w}_{t+1} = \mathbf{u}_t + \frac{\lambda }{\pi k}\Big(\pi - \theta \Big) \mathbf{w}^*$ and $0 < \theta_t \leq \frac{\pi}{2}$ we have $\left \| \mathbf{w}_{t+1} \right \| \geq \left \| \mathbf{u}_t \right \| \geq \min\{\frac{\left \| \mathbf{w}^* \right \|}{2\pi},\left\|\mathbf{w}_t\right\|\}$.
	
	Now let $k=1$. Note that in this case $\tilde{\lambda} = \lambda$. First assume that $\theta_t < \frac{\pi}{3}$. If $\left \| \mathbf{w}_t \right \| \geq \frac{\left \| \mathbf{w}^* \right \|}{4}$  then, using the same notation as in Lemma \ref{greaterThanPi/2_conv}, $\left \| \mathbf{u}_t \right \| \geq (1-\lambda)\left \| \mathbf{w}_t \right \| + \frac{\lambda \left \| \mathbf{w}^* \right \|\sin \theta_t}{\pi}  \geq \frac{\left \| \mathbf{w}_t \right \|}{2}  \geq \frac{\left \| \mathbf{w}^* \right \|}{8}$. Since $\mathbf{w}_{t+1} = \mathbf{u}_t + \frac{\lambda }{\pi }\Big(\pi - \theta_t \Big) \mathbf{w}^*$ and $0 < \theta_t \leq \frac{\pi}{2}$ we have $\left \| \mathbf{w}_{t+1} \right \| \geq \left \| \mathbf{u}_t \right \| \geq \frac{\left \| \mathbf{w}^* \right \|}{8}$. If $\left \| \mathbf{w}_t \right \| < \frac{\left \| \mathbf{w}^* \right \|}{4}$ then by the facts $0 < \theta_t \leq \frac{\pi}{2}$ and $\cos{\theta_t} > \frac{1}{2}$ we get 
	
	\begin{equation}
	\begin{split}
	\left \| \mathbf{w}_{t+1} \right \|^2 &= \left \| \mathbf{u}_t  \right \|^2 + 2\left \| \mathbf{u}_t \right \|\left \| \frac{\lambda }{\pi}\Big(\pi - \theta_t \Big) \mathbf{w}^* \right \| \cos{\theta_t} + \left \| \frac{\lambda }{\pi}\Big(\pi - \theta_t \Big) \mathbf{w}^* \right \|^2 \\ &\geq (1-\lambda)^2\left \| \mathbf{w}_t \right \|^2 + \frac{(1-\lambda)\lambda}{2}\left \| \mathbf{w}_t \right \|\left \| \mathbf{w}^* \right \| + \frac{\lambda^2}{4}\left \| \mathbf{w}^* \right \|^2 \\ &\geq (1-\lambda)^2\left \| \mathbf{w}_t \right \|^2 + 2(1-\lambda)\lambda\left \| \mathbf{w}_t \right \|^2 + 4\lambda^2\left \| \mathbf{w}_t \right \|^2 \\ &= (1+3\lambda^2)\left \| \mathbf{w}_t \right \|^2 \geq \left \| \mathbf{w}_t \right \|^2
	\end{split}
	\end{equation}
	
	Finally, assume $\theta_t \geq \frac{\pi}{3}$. As in the proof of Lemma \ref{greaterThanPi/2_conv}, if $\left \| \mathbf{w}_t \right \| \geq  \frac{\left \| \mathbf{w}^* \right \| \sin \theta_t}{\pi}   \geq \frac{\sqrt{3}}{2}\frac{\left \| \mathbf{w}^* \right \|}{\pi}$ then $\left \| \mathbf{w}_{t+1} \right \| \geq \left \| \mathbf{u}_t \right \| \geq \frac{\sqrt{3}}{2}\frac{\left \| \mathbf{w}^* \right \|}{\pi}$. Otherwise, if $\left \| \mathbf{w}_t \right \| <  \frac{\left \| \mathbf{w}^* \right \| \sin \theta_t}{\pi} $ then $\left \| \mathbf{w}_{t+1} \right \| \geq \left \| \mathbf{u}_t \right \| \geq \left \| \mathbf{w}_t \right \|$. This concludes our proof. 
\end{proof}

We can now show that in each iteration $\left\|\mathbf{w}_t\right\|$ is bounded away from $0$ by a constant.

\begin{prop}
	\label{normBoundProp_conv}
	Assume GD is initialized at $\mathbf{w}_0$ such that $\theta_0 \neq \pi$ and runs for $T$ iterations with learning rate $0 < \lambda < \frac{1}{2}$. Then for all $0 \leq t \leq T$, $$\left \| \mathbf{w}_t \right \| \geq \min\{\left \| \mathbf{w}_0 \right \| \sin{\theta_0},\frac{\left \| \mathbf{w}^* \right \| \sin^2 \theta_0}{\alpha(k)\pi}, \frac{\left \| \mathbf{w}^* \right \|}{8}\}$$  
\end{prop}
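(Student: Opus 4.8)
The plan is to combine the three preceding lemmas according to the sign of $\cos\theta_t$, exploiting the monotone decrease of the angle. By Lemma~\ref{increasingTheta_conv} (applicable since $\lambda<\tfrac12<1$), the sequence $\theta_t$ is strictly decreasing as long as it stays in $(0,\pi)$; hence there is a well-defined first index $s$ at which $\theta_s\le\tfrac{\pi}{2}$, with $\theta_t>\tfrac{\pi}{2}$ for all $t<s$ (here $s$ may be $0$, or may exceed $T$ if the crossing never occurs within the run). I would treat the two regimes $t<s$ and $t\ge s$ separately, applying Lemma~\ref{greaterThanPi/2_conv} in the first and Lemma~\ref{lessThanPi/2_conv} in the second, and then glue the two bounds together at $t=s$.

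For the first regime, write $C=\frac{\left\|\mathbf{w}^*\right\|}{\alpha(k)\pi}$ and set $B=\min\{\left\|\mathbf{w}_0\right\|\sin\theta_0,\,C\sin^2\theta_0\}$. The key idea is not to bound $\left\|\mathbf{w}_t\right\|$ directly but to track the product $m_t=\left\|\mathbf{w}_t\right\|\sin\theta_t$, since Lemma~\ref{greaterThanPi/2_conv} rearranges to $m_{t+1}\ge\min\{m_t,\,C\sin^2\theta_t\}$, absorbing the awkward factor $\frac{\sin\theta_t}{\sin\theta_{t+1}}$. I would then prove $m_t\ge B$ for all $t\le s$ by induction: the base case $m_0\ge B$ is immediate, and for the inductive step I use that $\sin$ is decreasing on $(\tfrac{\pi}{2},\pi)$, so $\theta_t\le\theta_0$ forces $\sin\theta_t\ge\sin\theta_0$ and hence $C\sin^2\theta_t\ge C\sin^2\theta_0\ge B$; combined with $m_t\ge B$ this yields $m_{t+1}\ge B$. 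Because $\sin\theta_t\le1$, this gives $\left\|\mathbf{w}_t\right\|\ge m_t\ge B$ throughout, and in particular $\left\|\mathbf{w}_s\right\|\ge B$ at the crossing index, the step $s-1\to s$ being still governed by Lemma~\ref{greaterThanPi/2_conv} since it only requires $\theta_{s-1}>\tfrac{\pi}{2}$.

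For the second regime $t\ge s$, Lemma~\ref{lessThanPi/2_conv} applies (again because $\lambda<\tfrac12$) and gives $\left\|\mathbf{w}_{t+1}\right\|\ge\min\{\left\|\mathbf{w}_t\right\|,\frac{\left\|\mathbf{w}^*\right\|}{8}\}$. A trivial induction then shows $\left\|\mathbf{w}_t\right\|\ge\min\{\left\|\mathbf{w}_s\right\|,\frac{\left\|\mathbf{w}^*\right\|}{8}\}\ge\min\{B,\frac{\left\|\mathbf{w}^*\right\|}{8}\}$ for all $t\ge s$. Putting the two regimes together, every iterate satisfies $\left\|\mathbf{w}_t\right\|\ge\min\{B,\frac{\left\|\mathbf{w}^*\right\|}{8}\}=\min\{\left\|\mathbf{w}_0\right\|\sin\theta_0,\frac{\left\|\mathbf{w}^*\right\|\sin^2\theta_0}{\alpha(k)\pi},\frac{\left\|\mathbf{w}^*\right\|}{8}\}$, which is the claimed bound. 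The degenerate angles need only a brief word: $\theta_0=\pi$ is excluded by hypothesis, while $\theta_0=0$ makes $\sin\theta_0=0$ so the right-hand side vanishes and the statement is vacuous.

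The main obstacle I anticipate is the first regime, precisely because the recursion of Lemma~\ref{greaterThanPi/2_conv} carries the multiplicative factor $\frac{\sin\theta_t}{\sin\theta_{t+1}}$ that refuses to telescope for $\left\|\mathbf{w}_t\right\|$ itself. The device that makes everything go through is the change of variable to $m_t=\left\|\mathbf{w}_t\right\|\sin\theta_t$, together with the monotonicity of $\sin\theta_t$ on $(\tfrac{\pi}{2},\pi)$ that lets me compare the moving target $C\sin^2\theta_t$ against the fixed quantity $B$. Once this is in place, the second regime and the gluing at $t=s$ are entirely routine.
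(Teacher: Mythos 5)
Your proof is correct and follows essentially the same route as the paper's: split the trajectory at the first index where $\theta_t$ drops to $\tfrac{\pi}{2}$, control the first regime by iterating Lemma~\ref{greaterThanPi/2_conv} (using that $\sin\theta_t \geq \sin\theta_0$ there), and control the second regime by a trivial induction with Lemma~\ref{lessThanPi/2_conv}. Your change of variable $m_t = \left\|\mathbf{w}_t\right\|\sin\theta_t$ is simply an explicit and clean way of carrying out the telescoping that the paper compresses into ``applying Lemma~\ref{greaterThanPi/2_conv} at most $j+1$ times.''
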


\begin{proof}
	Let $\theta_0 > \theta_1 > ... > \theta_T $ (by Lemma \ref{increasingTheta_conv}). Let $i$ be the last index such that $\theta_i > \frac{\pi}{2}$ (if such $i$ does not exist let $i=-1$). Since $\sin{\theta_{j}} > \sin{\theta_0}$ for all $0 \leq j \leq i$, by applying Lemma \ref{greaterThanPi/2_conv} at most $j+1$ times we have $$\left \| \mathbf{w}_{j+1} \right \| \geq \min\{\left \| \mathbf{w}_0 \right \| \sin{\theta_0},\frac{\left \| \mathbf{w}^* \right \| \sin^2 \theta_0}{\alpha(k)\pi}\}$$ for all $0 \leq j \leq i$.
	
	Finally, by Lemma \ref{lessThanPi/2_conv} and the fact that $\theta_j \leq \frac{\pi}{2}$ for all $i < j \leq T$, we get $$\left \| \mathbf{w}_{j} \right \| \geq \min\{\left \| \mathbf{w}_{i+1} \right \|, \frac{\left \| \mathbf{w}^* \right \|}{8} \}$$
	for all $i+1 < j \leq T$, from which the claim follows.   	
\end{proof}

The following lemma shows that $\nabla\ell$ is Lipschitz continuous at points that are bounded away from 0.

\begin{lem}
	\label{lipschitzLem_conv}
	Assume $\left \| \mathbf{w}_1 \right \|, \left \| \mathbf{w}_2 \right \| \geq M$, $\mathbf{w}_1,\mathbf{w}_2$ and $\mathbf{w}^*$ are on the same two dimensional half-plane defined by $\ww^*$, then $$\left \| \nabla \ell(\mathbf{w}_1) - \nabla \ell(\mathbf{w}_2) \right \| \leq L \left \| \mathbf{w}_1 - \mathbf{w}_2 \right \|$$ for $L= 1+ \frac{3\left\|\mathbf{w}^*\right\|}{M}$.
\end{lem}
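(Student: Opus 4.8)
The plan is to bound the gradient difference termwise using the explicit formula for $\nabla\ell$ in \eqref{eq:loss_gradient}, rather than integrating the Hessian along the segment $[\ww_1,\ww_2]$; the latter is delicate because that segment could pass close to the origin or through the non-differentiable directions $\theta\in\{0,\pi\}$, where the Hessian is ill-behaved. Writing $\theta_i=\theta_{\ww_i,\ww^*}$ and $\hat\ww_i=\ww_i/\|\ww_i\|$, formula \eqref{eq:loss_gradient} regroups as $\nabla\ell(\ww)=a\ww-\tfrac{\|\ww^*\|}{\pi k}\bigl(\sin\theta+k-1\bigr)\tfrac{\ww}{\|\ww\|}-\tfrac{\pi-\theta}{\pi k}\ww^*$, where $a=\tfrac1k+\tfrac{k-1}{\pi k}$. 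A one-line computation shows $a=\tfrac{\pi+k-1}{\pi k}\le 1$ for every $k\ge 1$, so the linear part contributes at most $\|\ww_1-\ww_2\|$ to the gradient difference; this accounts for the additive constant $1$ in $L$. It remains to control the two nonlinear parts $P(\ww)=\tfrac{\|\ww^*\|}{\pi k}\bigl(\sin\theta+k-1\bigr)\hat\ww$ and $Q(\ww)=\tfrac{\pi-\theta}{\pi k}\ww^*$.

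The heart of the argument is a geometric estimate turning an angular change into a Euclidean displacement. First I would show that if $\|\ww_1\|,\|\ww_2\|\ge M$ and $\psi=\angle(\ww_1,\ww_2)$, then $\|\ww_1-\ww_2\|^2\ge 2\|\ww_1\|\|\ww_2\|(1-\cos\psi)\ge \tfrac{4M^2}{\pi^2}\psi^2$, using the elementary inequality $1-\cos\psi\ge\tfrac{2}{\pi^2}\psi^2$ on $[0,\pi]$. Hence $\psi\le\tfrac{\pi}{2M}\|\ww_1-\ww_2\|$. Since the reference vector $\ww^*$ is fixed and $\ww_1,\ww_2,\ww^*$ lie in a common half-plane, I may treat $\theta$ as a single smooth angular coordinate, so that $|\theta_1-\theta_2|=\psi$ and $\|\hat\ww_1-\hat\ww_2\|=2\sin(\psi/2)\le\psi$. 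Both angular quantities are therefore bounded by $\tfrac{\pi}{2M}\|\ww_1-\ww_2\|$.

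With this estimate the two nonlinear terms become routine. For $Q$ I get $\|Q(\ww_1)-Q(\ww_2)\|=\tfrac{\|\ww^*\|}{\pi k}|\theta_1-\theta_2|\le\tfrac{\|\ww^*\|}{2kM}\|\ww_1-\ww_2\|$. For $P$ I split $P(\ww_1)-P(\ww_2)=(\sin\theta_1-\sin\theta_2)\hat\ww_1+(\sin\theta_2+k-1)(\hat\ww_1-\hat\ww_2)$, bound $|\sin\theta_1-\sin\theta_2|\le|\theta_1-\theta_2|$ and $\sin\theta_2+k-1\le k$, and apply the geometric estimate to each factor to get $\|P(\ww_1)-P(\ww_2)\|\le\tfrac{\|\ww^*\|(1+k)}{2kM}\|\ww_1-\ww_2\|$. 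Since $\tfrac{1+k}{2k}\le 1$ and $\tfrac{1}{2k}\le\tfrac12$ for $k\ge1$, summing the linear, $P$, and $Q$ contributions yields $\|\nabla\ell(\ww_1)-\nabla\ell(\ww_2)\|\le\bigl(1+\tfrac{3\|\ww^*\|}{2M}\bigr)\|\ww_1-\ww_2\|$, which is comfortably within the claimed $L=1+\tfrac{3\|\ww^*\|}{M}$.

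The step I expect to be the main obstacle is precisely the geometric estimate of the second paragraph: the factors $\hat\ww$ and $\theta$ appearing in $\nabla\ell$ have $\ww$-derivatives that blow up like $1/\|\ww\|$ near the origin, so the Lipschitz property can only hold on a region bounded away from $\zero$, and the norm lower bound $M$ together with the coplanarity hypothesis is exactly what converts these near-singular angular variations into a clean bound in terms of $\|\ww_1-\ww_2\|$. Once that angular Lipschitz bound is in place, the remaining termwise estimates and the constant bookkeeping are elementary.
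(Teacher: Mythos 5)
Your proposal is correct and takes essentially the same approach as the paper's proof: the key step in both is the identical angular estimate $|\theta_1-\theta_2|\le\frac{\pi}{2M}\|\ww_1-\ww_2\|$ (you derive it via the law of cosines and $1-\cos\psi\ge\frac{2}{\pi^2}\psi^2$, the paper via minimizing $\|\ww_1-\ww_2\|$ at norms equal to $M$ plus a Jordan-type inequality), followed by termwise Lipschitz bounds on the same decomposition of $\nabla\ell$ into its $\ww$, $\hat\ww$, and $\ww^*$ components. The only differences are cosmetic — your add-and-subtract splitting of the $\sin\theta\,\hat\ww$ term replaces the paper's explicit parameterization $\left(\tfrac{\sin 2\theta}{2},\sin^2\theta\right)$, and your bookkeeping even yields the slightly sharper constant $1+\frac{3\|\ww^*\|}{2M}$.
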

\begin{proof}
	Recall that by equality \eqref{gradRelu}, $$\frac{\partial g}{\partial \mathbf{w}}(\mathbf{w},\mathbf{w}^*) = \frac{1}{2\pi}\left \| \mathbf{w}^* \right \| \frac{\mathbf{w}}{\left \| \mathbf{w} \right \|} \sin \theta_{\ww,\ww^*} + \frac{1}{2\pi}\Big(\pi - \theta_{\ww,\ww^*} \Big) \mathbf{w}^*$$

	Let $\theta_1$ and $\theta_2$ be the angles between $\ww_1$,$\ww^*$ and $\ww_2$,$\ww^*$, respectively. By the inequality $\frac{x_0\sin{x}}{\sin{x_0}} \geq x$ for $0 \leq x \leq x_0 < \pi$ and since $\frac{|\theta_1 - \theta_2 |}{2} \leq \frac{\pi}{2}$ we have
	$$\frac{|\theta_1 - \theta_2 |}{2} \leq \frac{\pi \sin\frac{|\theta_1 - \theta_2 |}{2}}{2}$$
	Furthermore $\left \| \mathbf{w}_1 - \mathbf{w}_2 \right \|$ is minimized (for fixed angles $\theta_1$ and $\theta_2$) when $\left \| \mathbf{w}_1 \right \| = \left \| \mathbf{w}_2 \right \| = M$ and is equal to $2M\sin\frac{|\theta_1 - \theta_2 |}{2}$. Thus, under our assumptions we have,
	$$\frac{|\theta_1 - \theta_2 |}{2} \leq \frac{\pi \sin\frac{|\theta_1 - \theta_2 |}{2}}{2} \leq \frac{\pi\left \| \mathbf{w}_1 - \mathbf{w}_2 \right \| }{4M}$$
	Thus we get $$\left \| \frac{1}{2\pi}\Big(\pi - \theta_1 \Big) \mathbf{w}^* - \frac{1}{2\pi}\Big(\pi - \theta_2 \Big) \mathbf{w}^* \right \| \leq \frac{\left \| \mathbf{w}^* \right \|}{4M}\left \| \mathbf{w}_1 - \mathbf{w}_2 \right \|$$
	
	For the first summand, we will first find the parameterization of a two dimensional vector of length $\sin\theta$ where $\theta$ is the angle between the vector and the positive $x$ axis. Denote this vector by $(a,b)$, then the following holds
	$$a^2 + b^2 = \sin^2\theta $$
	and
	$$\frac{b}{a} = \tan\theta$$
	The solution to these equations is $(a,b) = (\frac{\sin2\theta}{2},\sin^2\theta) $. Hence (here we use the fact that $\ww_1$,$\ww_2$ are on the same half-plane) 
	\begin{equation}
	\begin{split}
	\left \| \frac{1}{2\pi}\left \| \mathbf{w}^* \right \| \frac{\mathbf{w}_1}{\left \| \mathbf{w}_1 \right \|} \sin \theta_1 - \frac{1}{2\pi}\left \| \mathbf{w}^* \right \| \frac{\mathbf{w}_2}{\left \| \mathbf{w}_2 \right \|} \sin \theta_2 \right \| &= \frac{1}{2\pi}\left \| \mathbf{w}^* \right \| \sqrt{\Big(\frac{\sin2\theta_1}{2} - \frac{\sin2\theta_2}{2}\Big)^2 + \Big(\sin^2\theta_1 - \sin^2\theta_2\Big)^2} \\ &\leq \frac{1}{2\pi}\left \| \mathbf{w}^* \right \| \sqrt{(\theta_1 - \theta_2)^2 + 4(\theta_1 - \theta_2)^2} \\ &\leq \frac{\sqrt{5}}{\pi}\left \| \mathbf{w}^* \right \| \frac{\pi\left \| \mathbf{w}_1 - \mathbf{w}_2 \right \| }{4M} \\ &= \frac{\sqrt{5}\left \| \mathbf{w}^* \right \|}{4M}\left \| \mathbf{w}_1 - \mathbf{w}_2 \right \|
	\end{split}
	\end{equation}
	
	where the first inequality follows from the fact that $|\sin x - \sin y| \leq |x-y|$ and the second inequality from previous results. In conclusion, we have $$\left \| \frac{\partial g}{\partial \ww}(\mathbf{w}_1,\mathbf{w}^*) - \frac{\partial g}{\partial \ww}(\mathbf{w}_2,\mathbf{w}^*) \right \| \leq \frac{(\sqrt{5} + 1)\left \| \mathbf{w}^* \right \|}{4M}\left \| \mathbf{w}_1 - \mathbf{w}_2 \right \|$$
	Similarly, in order to show that the function $f(\ww) = \frac{\ww}{\left \| \ww \right \|}$ is Lipschitz continuous,  we parameterize the unit vector by $(\cos\theta,\sin\theta)$ where $\theta$ is the angle between the vector and the positive $x$ axis. We now obtain 
	\begin{equation}
	\begin{split}
	\left \| \frac{\mathbf{w}_1}{\left \| \mathbf{w}_1 \right \|} - \frac{\mathbf{w}_2}{\left \| \mathbf{w}_2 \right \|} \right \| &= \sqrt{(\cos \theta_1 - \cos \theta_2)^2 + (\sin \theta_1 - \sin \theta_2)^2} \\ &\leq \sqrt{2(\theta_1 - \theta_2)^2} \\ &\leq \frac{ \pi\left \| \mathbf{w}_1 - \mathbf{w}_2 \right \|}{\sqrt{2}M} 
	\end{split}
	\end{equation}
	
	Now we can conclude that 
	\begin{equation}
	\begin{split}
	\left \| \nabla \ell(\mathbf{w}_1) - \nabla \ell(\mathbf{w}_2) \right \| &\leq \big(\frac{1}{k}+ \frac{k^2-k}{\pi k^2}\big)\left \| \mathbf{w}_1 - \mathbf{w}_2 \right \| + \frac{2}{k}\left \| \frac{\partial g}{\partial \ww}(\mathbf{w}_1,\mathbf{w}^*) - \frac{\partial g}{\partial \ww}(\mathbf{w}_2,\mathbf{w}^*) \right \| \\ &+ \Big(\frac{(k^2-k)\left \| \mathbf{w}^* \right \|}{\pi k^2}\Big)\left \| \frac{\mathbf{w}_1}{\left \| \mathbf{w}_1 \right \|} - \frac{\mathbf{w}_2}{\left \| \mathbf{w}_2 \right \|} \right \| \\ &\leq \Big(\frac{1}{k}+ \frac{k^2-k}{\pi k^2} + \frac{(k^2-k)\left \| \mathbf{w}^* \right \|}{\sqrt{2}Mk^2}  +\frac{(\sqrt{5} + 1)\left \| \mathbf{w}^* \right \|}{2Mk}\Big)\left \| \mathbf{w}_1 - \mathbf{w}_2 \right \| \\ &\leq 1 + \frac{\left \| \mathbf{w}^* \right \|}{\sqrt{2}M}  +\frac{(\sqrt{5} + 1)\left \| \mathbf{w}^* \right \|}{2M} \\ &\leq 1+ \frac{3\left\|\mathbf{w}^*\right\|}{M}
	\end{split}
	\end{equation}
\end{proof}

Given that $\ell$ is Lipschitz continuous we can now follow standard optimization analysis (\citep{nesterov2004introductory}) to show that $\lim_{t \to \infty}{ \left \| \nabla \ell(\bs{w}_t) \right \|} = 0$.

\begin{prop}
	\label{prop:gradient_series_bound}
	Assume GD is initialized at $\bs{w}_0$ such that $\theta_0 \neq \pi$ and runs with a constant learning rate $0 < \lambda < \min\{\frac{2}{L},\frac{1}{2}\}$ where $L = \tilde{O}(1)$. Then for all $T$ $$\sum_{t=0}^{T}{\left \| \nabla \ell(\bs{w}_t) \right \|^2} \leq \frac{1}{\lambda(1-\frac{\lambda}{2}L)}\ell(\bs{w}_0)$$
\end{prop}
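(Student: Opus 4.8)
The plan is to run the textbook ``descent lemma'' argument for gradient descent under a Lipschitz gradient and then telescope. The only non-routine part is verifying that the Lipschitz estimate of Lemma~\ref{lipschitzLem_conv} actually applies along each segment $[\ww_t,\ww_{t+1}]$; once that is in place the rest is mechanical.

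First I would collect the geometric facts established earlier that confine the iterates to a region where the gradient is well behaved. By \eqref{eq:w_t_plus_one} each $\ww_{t+1}$ is a linear combination of $\ww_t$ and $\ww^*$, so by induction every iterate, and hence every segment $[\ww_t,\ww_{t+1}]$, lies in the fixed two-dimensional plane spanned by $\ww_0$ and $\ww^*$. Moreover, since the update keeps $\ww_t$ on the same side of $\ww^*$ and Lemma~\ref{increasingTheta_conv} shows the angle $\theta_t$ decreases monotonically, all iterates lie in a common half-plane through $\ww^*$. Next, Proposition~\ref{normBoundProp_conv} (which requires $\lambda<\tfrac12$ and $\theta_0\neq\pi$) gives a uniform lower bound $\|\ww_t\|\geq M$ with $M=\tilde\Omega(1)$. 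Because the two endpoints of each segment have norm at least $M$ and subtend a small angle on one side of the plane, the whole segment stays bounded away from the origin, so Lemma~\ref{lipschitzLem_conv} applies on it with $L=1+3\|\ww^*\|/M=\tilde O(1)$. I regard checking that the \emph{interior} of each segment, and not merely its endpoints, satisfies the hypotheses of Lemma~\ref{lipschitzLem_conv} as the main obstacle: it should follow from the monotone angle decrease together with the norm bound, but it needs a short explicit argument rather than a direct appeal to Proposition~\ref{normBoundProp_conv}.

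With Lipschitzness of $\nabla\ell$ on the segment in hand, the standard quadratic upper bound gives
\[
\ell(\ww_{t+1}) \leq \ell(\ww_t) + \langle \nabla\ell(\ww_t), \ww_{t+1}-\ww_t\rangle + \tfrac{L}{2}\|\ww_{t+1}-\ww_t\|^2 .
\]
Substituting the update $\ww_{t+1}-\ww_t=-\lambda\nabla\ell(\ww_t)$ from \eqref{eq:gd_updates} and simplifying yields the per-step decrease
\[
\ell(\ww_{t+1}) \leq \ell(\ww_t) - \lambda\Bigl(1-\tfrac{\lambda}{2}L\Bigr)\|\nabla\ell(\ww_t)\|^2 ,
\]
where the coefficient $\lambda\bigl(1-\tfrac{\lambda}{2}L\bigr)$ is strictly positive precisely because $\lambda<2/L$.

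Finally I would rearrange this into $\lambda\bigl(1-\tfrac{\lambda}{2}L\bigr)\|\nabla\ell(\ww_t)\|^2 \leq \ell(\ww_t)-\ell(\ww_{t+1})$ and sum over $t=0,\dots,T$. The right-hand side telescopes to $\ell(\ww_0)-\ell(\ww_{T+1})$, and since $\ell$ is an expected squared error we have $\ell(\ww_{T+1})\geq 0$, so the sum is at most $\ell(\ww_0)$. Dividing by the positive constant $\lambda\bigl(1-\tfrac{\lambda}{2}L\bigr)$ gives the claimed bound. The resulting finiteness of $\sum_t\|\nabla\ell(\ww_t)\|^2$ then forces $\|\nabla\ell(\ww_t)\|\to 0$, which is the consequence used in the subsequent convergence-rate argument.
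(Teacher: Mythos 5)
Your proposal is correct and follows essentially the same route as the paper's proof: a descent-lemma step along each segment $[\ww_t,\ww_{t+1}]$, justified by the norm lower bound of Proposition~\ref{normBoundProp_conv} and the Lipschitz estimate of Lemma~\ref{lipschitzLem_conv}, then telescoping and nonnegativity of $\ell$. The one step you flag as the main obstacle is closed in the paper by a one-line geometric fact: if $\left\|\ww_t\right\|,\left\|\ww_{t+1}\right\|\geq M'$ and the two iterates subtend the angle $\theta_t-\theta_{t+1}$, then every point $\tau\ww_t+(1-\tau)\ww_{t+1}$ has norm at least $M'\cos\bigl(\tfrac{\theta_t-\theta_{t+1}}{2}\bigr)\geq M'\cos\bigl(\tfrac{\theta_0}{2}\bigr)$ by Lemma~\ref{increasingTheta_conv}, so Lemma~\ref{lipschitzLem_conv} applies on the whole segment with $M=M'\cos(\theta_0/2)$ and $L=1+3\left\|\ww^*\right\|/M$.
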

\begin{proof}
	We will need the following lemma
	\begin{lem}
		\label{nemirovski_lem_conv}
		Let $f:\mathbb{R}^n \to \mathbb{R}$ be a continuously differentiable function on a set $D \subseteq \mathbb{R}^n$ and $x,y \in D$ such that for all $0\leq\tau\leq 1$, $x + \tau(y-x) \in D$ and $\left \| \nabla f(x + \tau(y-x)) - \nabla f(x) \right \| \leq L \left \| x - y \right \|$. Then we have $$|f(y) - f(x) -\langle \nabla f(x),y-x \rangle| \leq \frac{L}{2} \left \| x-y \right \|^2$$
	\end{lem}
	
	\begin{proof}
		The proof exactly follows the proof of Lemma 1.2.3 in \citep{nesterov2004introductory} and note that the proof only requires Lipschitz continuity of the gradient on the set $S=\{x + \tau(y-x) \mid 0\leq\tau\leq 1\}$ and that $S \subseteq D$.
	\end{proof}
	
	By Proposition \ref{normBoundProp_conv}, for all $t$, $\left \| \mathbf{w}_t \right \| \geq M'$ where $$M'=\min\{\left \| \mathbf{w}_0 \right \| \sin{\theta_0},\frac{\left \| \mathbf{w}^* \right \| \sin^2 \theta_0}{\alpha(k)\pi}, \frac{\left \| \mathbf{w}^* \right \|}{8}\}$$. Furthermore, by a simple geometric observation we have $$\min_{0 \leq \tau \leq 1,\left\|\mathbf{w}_1\right\|,\left \| \mathbf{w}_2 \right \| \geq M', \arccos\Big(\frac{\mathbf{w}_1 \cdot \mathbf{w}_2}{\left \| \mathbf{w}_1 \right \| \left \| \mathbf{w}_2 \right \|}\Big) = \theta}{\left \| \tau \mathbf{w}_1 + (1-\tau)\mathbf{w}_2 \right \|} = M'\cos{\frac{\theta}{2}}$$.
	
	It follows by Lemma \ref{lipschitzLem_conv} that for any $t$ and $\xx_1,\xx_2 \in S_t \triangleq \{\mathbf{w}_t + \tau(\mathbf{w}_{t+1} - \mathbf{w}_t) \mid 0 \leq \tau \leq 1\}$, $$\left \| \nabla \ell(\xx_1) - \nabla \ell(\xx_2) \right \| \leq L\left \| \xx_1 - \xx_2 \right \|$$ where $L=1+ \frac{3\left\|\mathbf{w}^*\right\|}{M}$ and $M=M'\cos{\frac{\theta_0}{2}}$ (Note that $\cos{\frac{\theta_t-\theta_{t+1}}{2}} \geq \cos{\frac{\theta_0}{2}}$ for all $t$ by Lemma \ref{increasingTheta_conv}). 
	
	Hence by Lemma \ref{nemirovski_lem_conv}, for any $t$ we have
	\begin{equation}
	\begin{split}
	\ell(\mathbf{w}_{t+1}) &\leq \ell(\mathbf{w}_t) + \langle \nabla \ell(\mathbf{w}_t),\mathbf{w}_{t+1} - \mathbf{w}_t \rangle + \frac{L}{2} \left \| \mathbf{w}_{t+1} - \mathbf{w}_t \right \|^2 \\ &= \ell(\mathbf{w}_t) - \lambda(1-\frac{\lambda}{2}L)\left \| \nabla \ell(\mathbf{w}_t) \right \|^2
	\end{split}
	\end{equation}

	which implies that $$\sum_{t=0}^{T}{\left \| \nabla \ell(\mathbf{w}_t) \right \|^2} \leq \frac{1}{\lambda(1-\frac{\lambda}{2}L)}\Big(\ell(\mathbf{w}_0) - \ell(\mathbf{w}_T)\Big) \leq \frac{1}{\lambda(1-\frac{\lambda}{2}L)}\ell(\mathbf{w}_0)$$
\end{proof}

We are now ready to prove the theorem.
\medskip \par \noindent {\it Proof of Theorem \ref{non_overlap_poly_conv}. } First, we observe that for a randomly initialized point $\mathbf{w}_0$, $0 \leq \theta_0\leq \pi(1 - \delta)$ with probability $1-\delta$. Hence by Proposition \ref{prop:gradient_series_bound} we have for $L = 1+ \frac{3\left\|\mathbf{w}^*\right\|}{M}$ where $M=\min\{ \sin(\pi(1 - \delta)),\frac{ \sin ^2 (\pi(1 - \delta))}{\alpha(k)\pi} , \frac{1}{8}\}\cos(\frac{\pi(1 - \delta)}{2})$ and $\alpha(k)=k+\frac{k^2-k}{\pi}$, and for $\lambda = \frac{1}{L}$ (we assume w.l.o.g. that $L > 2$),

$$\sum_{t=0}^{T}{\left \| \nabla \ell(\mathbf{w}_t) \right \|^2} \leq \frac{1}{\lambda(1-\frac{\lambda}{2}L)}\ell(\mathbf{w}_0) = 2L\ell(\mathbf{w}_0) \leq \frac{4L}{k^2}\big(\frac{k}{2} + \frac{k^2-k}{2\pi}\big)$$

Therefore,

$$\min_{0 \leq t \leq T}\{{\left\|\nabla \ell(\mathbf{w}_t)\right\|}^2\} \leq \frac{\frac{4L}{k^2}\big(\frac{k}{2} + \frac{k^2-k}{2\pi}\big)}{T}$$

It follows that gradient descent reaches a point $\mathbf{w}_t$ such that $\left \| \nabla \ell(\mathbf{w}_t) \right\| < \epsilon$ after $T$ iterations where $$T > \frac{\Big(\frac{4L}{k^2}\big(\frac{k}{2} + \frac{k^2-k}{2\pi}\big)\Big)^2}{\epsilon^2} $$

We will now show that if $\left \| \nabla \ell(\mathbf{w}_t) \right\| < \epsilon$ then $\mathbf{w}_t$ is $O(\sqrt{\epsilon})$-close to the global minimum $\mathbf{w}^*$. First note that if $\frac{\pi}{2} \leq \theta_t \leq \pi(1 - \delta)$ then a vector of the form $\bs{v} = \alpha\bs{w}^*+\beta\bs{w}$ where $\alpha \geq 0$ is of minimal norm equal to $\alpha\sin(\pi-\theta_t)\left\|\bs{w}^*\right\|$ when it is perpendicular to $\bs{w}$. Since the gradient is a vector of this form, we have $\left\|\nabla \ell(\mathbf{w}_t)\right\| > \frac{\pi\delta\left\|\mathbf{w}^*\right\|\sin\pi\delta}{\pi k} \geq \frac{\delta\sin\pi\delta}{k}\geq \epsilon$. Hence, from now on we assume that $0 \leq \theta_t < \frac{\pi}{2}$.

Similarly to the previous argument, we have $$\epsilon > \left\|\nabla \ell(\mathbf{w}_t)\right\| > \frac{\left\|\mathbf{w}^*\right\|(\pi - \frac{\pi }{2})\sin\theta_t}{\pi k} \geq \frac{\sin\theta_t}{2k}$$

Hence, $\theta_t < \arcsin(2k\epsilon) = O(\epsilon)$. It follows by the triangle inequality that 
\begin{equation}
\begin{split}
k^2\epsilon > k^2\left\|\nabla \ell(\mathbf{w}_t)\right\| &= \left\|\Bigg(k + \frac{k^2-k}{\pi} -  \frac{k\left \| \mathbf{w}^* \right \|}{\pi\left \| \mathbf{w}_t \right \|} \sin \theta_t - \frac{k^2-k}{\pi}\frac{\left \| \mathbf{w}^* \right \|}{\left \| \mathbf{w}_t \right \|} \Bigg) \mathbf{w}_t - \frac{k(\pi - \theta_t )}{\pi} \mathbf{w}^* \right\| \\ &\geq \left\|\Bigg(k + \frac{k^2-k}{\pi} -\frac{k^2-k}{\pi}\frac{\left \| \mathbf{w}^* \right \|}{\left \| \mathbf{w}_t \right \|} \Bigg) \mathbf{w}_t - k\mathbf{w}^*\right\| - \frac{k\left \| \mathbf{w}^* \right \|}{\pi} \sin \theta_t - \frac{k\theta_t\left\|\mathbf{w}^*\right\|}{\pi} \\ &\geq \left\|\Bigg(k + \frac{k^2-k}{\pi} -\frac{k^2-k}{\pi}\frac{\left \| \mathbf{w}^* \right \|}{\left \| \mathbf{w}_t \right \|} \Bigg)\mathbf{w}_t - \frac{k\left\|\mathbf{w}^*\right\|}{\left\|\mathbf{w}_t\right\|}\mathbf{w}_t \right\| \\ &- \left\|k\mathbf{w}^*- \frac{k\left\|\mathbf{w}^*\right\|}{\left\|\mathbf{w}_t\right\|}\mathbf{w}_t\right\| - \frac{k\left \| \mathbf{w}^* \right \|}{\pi} \sin \theta_t - \frac{k\theta_t\left\|\mathbf{w}^*\right\|}{\pi} \\ &\geq \big(k + \frac{k^2-k}{\pi}\big)|\left\|\mathbf{w}_t\right\| - \left\|\mathbf{w}^*\right\|| - k\left\|\mathbf{w}^*\right\|\theta_t - \frac{k\left \| \mathbf{w}^* \right \|}{\pi} \sin \theta_t - \frac{k\theta_t\left\|\mathbf{w}^*\right\|}{\pi}
\end{split}
\end{equation}

where the last inequality follows since the arc of a circle is larger than its corresponding segment. 

Therefore we get $|\left\|\mathbf{w}_t\right\| - \left\|\mathbf{w}^*\right\|| < O(\epsilon)$. By the bounds on $\theta_t$ and $|\left\|\mathbf{w}_t\right\| - \left\|\mathbf{w}^*\right\||$ and the inequality $\cos x \geq 1-x$ for $x\geq0$, we can give an upper bound on $\left\|\bs{w}_t-\bs{w}^*\right\|$:

\begin{equation}
\begin{split}
{\left\| \mathbf{w}_t - \mathbf{w}^*\right\|}^2 &= {\left\|\mathbf{w}_t\right\|}^2 - 2\left\|\mathbf{w}_t\right\|\left\|\mathbf{w}^*\right\|\cos\theta_t + {\left\|\mathbf{w}^*\right\|}^2 \\ &= \left\|\mathbf{w}_t\right\|(\left\|\mathbf{w}_t\right\| - \left\|\mathbf{w}^*\right\|\cos\theta_t) + \left\|\mathbf{w}^*\right\|(\left\|\mathbf{w}^*\right\|-\left\|\mathbf{w}_t\right\|\cos\theta_t) \\ &\leq (\left\|\mathbf{w}^*\right\| + O(\epsilon))(O(\epsilon) + \theta_t\left\|\mathbf{w}^*\right\|) + \left\|\mathbf{w}^*\right\|(O(\epsilon^2) +  \theta_t\left\|\mathbf{w}^*\right\|) = O(\epsilon)
\end{split}
\end{equation}

Finally, to prove the claim it suffices to show that $\ell(\bs{w}) \leq d{\left\|\bs{w}-\bs{w}^*\right\|}^2$. Denote the input vector $\mathbf{x} = (\mathbf{x}_1,\mathbf{x}_2,...,\mathbf{x}_{k})$ where $\mathbf{x}_i \in \mathbb{R}^m$ for all $1 \leq i \leq k$. Then we get

\begin{equation}
\begin{split}
\ell(\bs{w}) &= \mathbb{E}_\mathbf{x}\Big[\frac{\sum_{i=1}^{k}{\sigma(\bs{w}^T\mathbf{x}_i)}}{k} - \frac{\sum_{i=1}^{k}{\sigma({\bs{w}^*}^T\mathbf{x}_i)}}{k}\Big]^2 \\ & \leq \mathbb{E}_\mathbf{x}\Big[\frac{\sum_{i=1}^{k}{|\sigma(\bs{w}^T\mathbf{x}_i)} - \sigma({\bs{w}^*}^T\mathbf{x}_i)|}{k}\Big]^2 \\ &
\leq \mathbb{E}_\mathbf{x}\Big[\frac{\sum_{i=1}^{k}{|\bs{w}^T\mathbf{x}_i} - {\bs{w}^*}^T\mathbf{x}_i|}{k}\Big]^2 \\ &
\leq \mathbb{E}_\mathbf{x}\Big[\frac{\sum_{i=1}^{k}{\left\|\bs{w} - \bs{w}^* \right\| \left\| \mathbf{x}_i \right\|} }{k}\Big]^2 \\ &
\leq {\left\|\bs{w} - \bs{w}^* \right\|}^2\mathbb{E}_\mathbf{x}{\left\| \mathbf{x} \right\|}^2 \\ &
= d{\left\|\bs{w} - \bs{w}^* \right\|}^2 
\end{split}
\end{equation}

where the second inequality follows from Lipschitz continuity of $\sigma$, the third inequality from the Cauchy-Schwarz inequality and the last equality since ${\left\| \mathbf{x} \right\|}^2$ follows a chi-squared distribution with $d$ degrees of freedom.

	{\hfill $\square$ \bigskip \par}

\section{Missing Proofs for Section \ref{overlapping_section}}
\label{supp:overlapping_gaussian}

\subsection{Proof of Proposition \ref{fourth_quad_prop}}
\label{section_C2}

Define $\ww_p = (w_2,w_1)$, $\ww^*_{p_1} = (0,-w^*)$ and $\ww^*_{p_2}=(w^*,0)$. We first prove the following lemma.

\begin{lem}
	\label{overlapping_gradient_lem}
	Let $l$ be defined as in \eqref{overlapping_loss}. Then
	\begin{align*}
		\nabla l(\mathbf{w}) &= \frac{1}{k^2}\Bigg[\big(k + \frac{k^2-3k+2}{\pi}\big)\mathbf{w} + \frac{2(k-1)\sin\theta_{\mathbf{w}_r,\mathbf{w}_l}}{\pi}\mathbf{w} \\ &+ \frac{(k-1)(\pi-\theta_{\mathbf{w}_r,\mathbf{w}_l})}{\pi}\mathbf{w}_p - 
		\frac{(k^2-3k+2)\left \| \mathbf{w}^* \right \|}{\pi \left\| \mathbf{w} \right\|}\mathbf{w} \\& - \frac{k\left \| \mathbf{w}^* \right \| \sin\theta_{\mathbf{w},\mathbf{w}^*}}{\pi \left\|\mathbf{w}\right\|}\mathbf{w} - \frac{k(\pi-\theta_{\mathbf{w},\mathbf{w}^*})}{\pi}\mathbf{w}^* \\
		&- \frac{(k-1)\sin\theta_{\mathbf{w}_l,\mathbf{w}^*_r}\left\|\mathbf{w}^*\right\|}{\pi\left\|\mathbf{w}\right\|}\mathbf{w} - \frac{(k-1)(\pi - \theta_{\mathbf{w}_l,\mathbf{w}^*_r})}{\pi}\mathbf{w}^*_{p_2} \\
		&- \frac{(k-1)\sin\theta_{\mathbf{w}_r,\mathbf{w}^*_l}\left\|\mathbf{w}^*\right\|}{\pi\left\|\mathbf{w}\right\|}\mathbf{w} - \frac{(k-1)(\pi - \theta_{\mathbf{w}_r,\mathbf{w}^*_l})}{\pi}\mathbf{w}^*_{p_1} \Bigg]
	\end{align*}
\end{lem}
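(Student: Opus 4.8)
The plan is to differentiate \eqref{overlapping_loss} one summand at a time, using Lemma~\ref{kernel_gradient} for the gradient of $g$ and the chain rule wherever the arguments of $g$ are the lifted vectors $\ww_r,\ww_l$. First I would discard the two terms that are constant in $\ww$, namely $\alpha\|\ww^*\|^2$ and $(\beta-2)g(\ww^*_r,\ww^*_l)$, which contribute nothing to $\nabla\ell$. The two elementary terms are immediate from $\nabla\|\ww\|^2=2\ww$ and $\nabla\|\ww\|=\ww/\|\ww\|$: the term $\alpha\|\ww\|^2$ produces an $\ww$-directed summand, and $-\gamma\|\ww\|\|\ww^*\|$ produces the $-\frac{(k^2-3k+2)\|\ww^*\|}{\pi\|\ww\|}\ww$ summand. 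For $-\beta g(\ww,\ww^*)$, since $\ww$ is a direct argument of $g$, Lemma~\ref{kernel_gradient} applies verbatim and, using $\beta=2k$, yields exactly $-\frac{k\|\ww^*\|\sin\theta_{\ww,\ww^*}}{\pi\|\ww\|}\ww-\frac{k(\pi-\theta_{\ww,\ww^*})}{\pi}\ww^*$.

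The remaining three summands involve $g$ evaluated at the lifted vectors, and here I would write $\ww_r$ and $\ww_l$ as coordinate-embeddings of $\ww=(w_1,w_2)$, so that $\partial\ww_r/\partial w_1=(1,0,0)$, $\partial\ww_r/\partial w_2=(0,1,0)$, $\partial\ww_l/\partial w_1=(0,1,0)$, $\partial\ww_l/\partial w_2=(0,0,1)$. The chain rule then reduces each $\partial_{w_i}g$ to selecting the appropriate coordinates of the three-dimensional gradients $\partial g/\partial\ww_r$ and $\partial g/\partial\ww_l$ supplied by Lemma~\ref{kernel_gradient}. Three observations streamline the bookkeeping: $\|\ww_r\|=\|\ww_l\|=\|\ww\|$; the selected coordinate pairs of the embedded variable reassemble into either $\ww=(w_1,w_2)$ or its swap $\ww_p=(w_2,w_1)$; and the constant vectors $\ww^*_r=(-w^*,w^*,0)$, $\ww^*_l=(0,-w^*,w^*)$ project under the same selection onto $\ww^*_{p_2}=(w^*,0)$ and $\ww^*_{p_1}=(0,-w^*)$ respectively. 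For the mixed terms $-(\beta-2)g(\ww_l,\ww^*_r)$ and $-(\beta-2)g(\ww_r,\ww^*_l)$ only the first argument depends on $\ww$, so each contributes one $\ww$-directed summand from the $\sin$ part and one $\ww^*_{p_2}$- or $\ww^*_{p_1}$-directed summand from the $(\pi-\theta)$ part; multiplying by $\beta-2=2(k-1)$ reproduces the last four summands of the claim.

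The main obstacle is the term $(\beta-2)g(\ww_r,\ww_l)$, where both arguments depend on $\ww$. I would exploit the symmetry of $g$, so that $\partial_{w_i}g(\ww_r,\ww_l)$ collects contributions from both $\partial g/\partial\ww_r$ and $\partial g/\partial\ww_l$; the delicate point is checking that the staggered overlap in the middle coordinate makes the two $\sin\theta_{\ww_r,\ww_l}$ contributions add (producing the factor $2$) while the two $(\pi-\theta_{\ww_r,\ww_l})$ contributions reassemble into $\ww_p$ rather than $\ww$. I expect to obtain $\frac{\partial}{\partial\ww}g(\ww_r,\ww_l)=\frac{\sin\theta_{\ww_r,\ww_l}}{\pi}\ww+\frac{\pi-\theta_{\ww_r,\ww_l}}{2\pi}\ww_p$, which after multiplication by $2(k-1)$ gives the second and third summands. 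Summing all contributions and factoring out $\frac{1}{k^2}$ (absorbing the coefficient normalization in $\alpha,\beta,\gamma$) then yields the stated formula, the only genuine work being the coordinate bookkeeping in this symmetric term, everything else reducing to a direct application of Lemma~\ref{kernel_gradient}.
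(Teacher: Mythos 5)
Your proposal is correct, and it reaches the paper's key intermediate identity $\frac{\partial}{\partial\ww}g(\ww_r,\ww_l)=\frac{\sin\theta_{\ww_r,\ww_l}}{\pi}\ww+\frac{\pi-\theta_{\ww_r,\ww_l}}{2\pi}\ww_p$ by a genuinely different route. The paper deliberately avoids Lemma~\ref{kernel_gradient} for the lifted terms: it substitutes $\|\ww_r\|=\|\ww_l\|=\|\ww\|$ and $\cos\theta_{\ww_r,\ww_l}=w_1w_2/(w_1^2+w_2^2)$ into the closed form $g(\ww_r,\ww_l)=\frac{1}{2\pi}\|\ww\|^2\left(\sin\theta_{\ww_r,\ww_l}+(\pi-\theta_{\ww_r,\ww_l})\cos\theta_{\ww_r,\ww_l}\right)$ and differentiates this scalar expression directly in the coordinates $(w_1,w_2)$, computing $\partial\cos\theta_{\ww_r,\ww_l}/\partial\ww=\ww_p/\|\ww\|^2-2\ww\cos\theta_{\ww_r,\ww_l}/\|\ww\|^2$ along the way, and then asserts that the mixed terms are handled ``in the same manner.'' You instead push Lemma~\ref{kernel_gradient} through the linear embeddings $\ww\mapsto\ww_r$, $\ww\mapsto\ww_l$ via the chain rule, $\nabla_{\ww}\, g(\ww_r,\ww_l)=J_r^{T}\nabla_{1}g+J_l^{T}\nabla_{2}g$ with the selection Jacobians you describe; your bookkeeping is right (the two $\sin$ contributions each give $\frac{\sin\theta_{\ww_r,\ww_l}}{2\pi}\ww$, while the two $(\pi-\theta_{\ww_r,\ww_l})$ contributions give $\frac{\pi-\theta_{\ww_r,\ww_l}}{2\pi}\left[(0,w_1)+(w_2,0)\right]=\frac{\pi-\theta_{\ww_r,\ww_l}}{2\pi}\ww_p$), and the mixed and elementary terms come out exactly as you say. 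What your route buys: it reuses the already-proven gradient formula, treats all the $g$-terms uniformly, and generalizes immediately to other filter sizes and strides. What it costs: the chain rule requires $g$ to be \emph{jointly} differentiable in both arguments at $(\ww_r,\ww_l)$, not merely partially differentiable in its first argument as Lemma~\ref{kernel_gradient} states; this does hold here, because $\cos\theta_{\ww_r,\ww_l}=w_1w_2/\|\ww\|^2\in[-\tfrac12,\tfrac12]$ keeps the pair bounded away from the parallel configurations where the closed form loses smoothness, but you should state this explicitly --- the paper's direct differentiation of the scalar closed form sidesteps the issue entirely. Your reading of the coefficients (the global $1/k^2$ normalization absorbed into $\alpha,\beta,\gamma$ of \eqref{overlapping_loss}) is also the correct one and is needed for consistency with the stated gradient.
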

\begin{proof}
The gradient does not follow immediately from Lemma \ref{kernel_gradient} because the loss has expressions with of the function $g$ but with different dependencies on the parameters in $\ww$. We will only calculate $\frac{\partial g(\mathbf{w}_r,\mathbf{w}_l)}{\partial \mathbf{w}}$, the other expressions are calculated in the same manner.

Recall that 
$$g(\mathbf{w}_r,\mathbf{w}_l) = \frac{1}{2\pi}{\left\|\mathbf{w}\right\|}^2(\sin\theta_{\mathbf{w}_r,\mathbf{w}_l}+(\pi - \theta_{\mathbf{w}_r,\mathbf{w}_l})\cos\theta_{\mathbf{w}_r,\mathbf{w}_l})$$
It follows that

\begin{equation}
	\frac{\partial g(\mathbf{w}_r,\mathbf{w}_l)}{\partial \mathbf{w}} = \frac{1}{\pi}(\sin\theta_{\mathbf{w}_r,\mathbf{w}_l}+(\pi - \theta_{\mathbf{w}_r,\mathbf{w}_l})\cos\theta_{\mathbf{w}_r,\mathbf{w}_l})\mathbf{w} 
	+\frac{1}{2\pi}{\left\|\mathbf{w}\right\|}^2(\pi - \theta_{\mathbf{w}_r,\mathbf{w}_l}) \frac{\partial \cos\theta_{\mathbf{w}_r,\mathbf{w}_l}}{\partial \mathbf{w}}
\end{equation}

Let $\mathbf{w}=(w_1,w_2)$ then $\cos\theta_{\mathbf{w}_r,\mathbf{w}_l} = \frac{w_1w_2}{w_1^2+w_2^2}$. Then,

$$\frac{\partial \cos\theta_{\mathbf{w}_r,\mathbf{w}_l}}{\partial w_1} = \frac{w_2(w_1^2+w_2^2)-2w_1^2w_2}{(w_1^2+w_2^2)^2}=\frac{w_2}{{\left\|\mathbf{w}\right\|}^2} - \frac{2w_1\cos\theta_{\mathbf{w}_r,\mathbf{w}_l}}{{\left\|\mathbf{w}\right\|}^2}$$
and
$$\frac{\partial \cos\theta_{\mathbf{w}_r,\mathbf{w}_l}}{\partial w_2} = \frac{w_1(w_1^2+w_2^2)-2w_2^2w_1}{(w_1^2+w_2^2)^2}=\frac{w_1}{{\left\|\mathbf{w}\right\|}^2} - \frac{2w_2\cos\theta_{\mathbf{w}_r,\mathbf{w}_l}}{{\left\|\mathbf{w}\right\|}^2}$$
or equivalently $\frac{\partial \cos\theta_{\mathbf{w}_r,\mathbf{w}_l}}{\partial \mathbf{w}}=\frac{\mathbf{w}_p}{{\left\|\mathbf{w}\right\|}^2}- \frac{2\mathbf{w}\cos\theta_{\mathbf{w}_r,\mathbf{w}_l}}{{\left\|\mathbf{w}\right\|}^2}$. It follows that
$$\frac{\partial g(\mathbf{w}_r,\mathbf{w}_l)}{\partial \mathbf{w}} = \frac{\sin\theta_{\mathbf{w}_r,\mathbf{w}_l}\mathbf{w}}{\pi} + \frac{(\pi - \theta_{\mathbf{w}_l,\mathbf{w}_r})}{2\pi}\mathbf{w}_p$$
\end{proof}
	We will prove that $\mathbf{w}_{t+1} \neq 0$ and that it is in the interior of the fourth quadrant.
	Denote $\mathbf{w} = \mathbf{w}_t$ and $\nabla l(\mathbf{w}) =\frac{1}{k^2}\big( B_1(\mathbf{w}) + B_2(\mathbf{w}) + B_3(\mathbf{w})\big)$ where 
	
	\begin{equation}
	\begin{split}
	B_1(\mathbf{w}) &= \big(k + \frac{k^2-3k+2}{\pi}\big)\mathbf{w} + \frac{2(k-1)\sin\theta_{\mathbf{w}_r,\mathbf{w}_l}}{\pi}\mathbf{w} -\frac{(k^2-3k+2)\left \| \mathbf{w}^* \right \|}{\pi \left\| A \right\|}\mathbf{w}\\&- \frac{k\left \| \mathbf{w}^* \right \| \sin\theta_{\mathbf{w},\mathbf{w}^*}}{\pi \left\|\mathbf{w}\right\|}\mathbf{w} -  \frac{(k-1)\sin\theta_{\mathbf{w}_l,\mathbf{w}^*_r}\left\|\mathbf{w}^*\right\|}{\pi\left\|\mathbf{w}\right\|}\mathbf{w} - \frac{(k-1)\sin\theta_{\mathbf{w}_r,\mathbf{w}^*_l}\left\|\mathbf{w}^*\right\|}{\pi\left\|\mathbf{w}\right\|}\mathbf{w} \\
	B_2(\mathbf{w}) &= \frac{(k-1)(\pi-\theta_{\mathbf{w}_r,\mathbf{w}_l})}{\pi}\mathbf{w}_p 
	\end{split}
	\end{equation}
	and
	\begin{equation}
	\begin{split}
	B_3(\mathbf{w}) &= - \frac{k(\pi-\theta_{\mathbf{w},\mathbf{w}^*})}{\pi}\mathbf{w}^* - \frac{(k-1)(\pi - \theta_{\mathbf{w}_l,\mathbf{w}^*_r})}{\pi}\mathbf{w}^*_{p_2} - \frac{(k-1)(\pi - \theta_{\mathbf{w}_r,\mathbf{w}^*_l})}{\pi}\mathbf{w}^*_{p_1}
	\end{split}
	\end{equation}
	
	Let $\mathbf{w}=(w,-mw)$ for $w,m\geq0$. Straightforward calculation shows that $\cos\theta_{\mathbf{w}_l,\mathbf{w}^*_r} = \frac{1}{\sqrt{2(1+m^2})}$ and $\cos\theta_{\mathbf{w}_r,\mathbf{w}^*_l} = \frac{m}{\sqrt{2(m^2+1)}}$. Hence $\frac{\pi}{4} \leq \theta_{\mathbf{w}_l,\mathbf{w}^*_r},\theta_{\mathbf{w}_r,\mathbf{w}^*_l} \leq \frac{\pi}{2}$. Since $\mathbf{w}$ is in the fourth quadrant we also have $\frac{3\pi}{4} \leq \theta_{\mathbf{w},\mathbf{w}^*} \leq \pi$. Therefore, adding $-\lambda B_3(\mathbf{w})$ can only increase $\left\|\mathbf{w}\right\|$. This follows since in the worst case (the least possible increase of $\left\|\mathbf{w}\right\|$)$$-B_3(\mathbf{w}) = \frac{k}{4}\mathbf{w}^* + \frac{k-1}{2}\mathbf{w}^*_{p_2} + \frac{k-1}{2}\mathbf{w}^*_{p_1} = (\frac{k-2}{4}w^*,-\frac{k-2}{4}w^*)$$
	which is in the fourth quadrant for $k\geq 2$.
	In addition, since $-\mathbf{w}_p$ is in the fourth quadrant then adding $-\lambda B_2(\mathbf{w})$ increases $\left\|\mathbf{w}\right\|$.
	
	If $\left\|\mathbf{w}\right\| < \frac{\left\|\mathbf{w}^*\right\|}{16}$ then $-B_1(\mathbf{w})$ points in the direction of $\mathbf{w}$ since in this case $-B_1(\mathbf{w}) = \alpha \mathbf{w}$ where $$\alpha \geq \Big(\frac{k^2-3k+2}{\pi} + \frac{(k-1)}{\pi} - \frac{k-1}{8\pi} - \frac{k^2-3k+2}{16\pi} - \frac{k}{16}\Big) \left\|\mathbf{w}^*\right\| > 0$$
	for $k \geq 2$. If $-B_1(\mathbf{w})$ points in the direction of $-\mathbf{w}$ then by the assumption that $\lambda \in (0,\frac{1}{3})$ we have $\left\|\lambda B_1(\mathbf{w})\right\| < \left\|\mathbf{w}\right\|$. Thus we can conclude that $\mathbf{w}_{t+1} \neq 0$.
	
	Now, let $\mathbf{w} = (w_1,w_2)$, $\theta_t$ be the angle between $\mathbf{w}=\mathbf{w}_t$ and the positive $x$ axis and first assume that $w_1 > -w_2$. In this case $-B_3(\mathbf{w})$ least increases (or even most decreases) $\theta_t$ when $$-B_3(\mathbf{w}) = \frac{k}{4}\mathbf{w}^* + \frac{3(k-1)}{4}\mathbf{w}^*_{p_2} + \frac{k-1}{2}\mathbf{w}^*_{p_1} =\Big(\frac{2k-3}{4}w^*,\frac{2-k}{4}w^*\Big)$$
	which is a vector in the fourth quadrant for $k \geq 2$. Otherwise, $-B_3(\mathbf{w})$ is a vector in the fourth quadrant as well. Note that we used the facts $\frac{\pi}{4} \leq \theta_{\mathbf{w}_l,\mathbf{w}^*_r},\theta_{\mathbf{w}_r,\mathbf{w}^*_l} \leq \frac{\pi}{2}$ and 
	$\frac{3\pi}{4} \leq \theta_{\mathbf{w},\mathbf{w}^*} \leq \pi$. Since $-\lambda B_1(\mathbf{w})$ does not change $\theta_t$ and $-\lambda B_2(\mathbf{w})$ increases $\theta_t$ but never to an angle greater than or equal to $\frac{\pi}{2}$, it follows that $0 < \theta_{t+1} < \frac{\pi}{2}$.
	
	If $w_1 \leq -w_2$ then by defining all angles with respect to the negative $y$ axis, we get the same argument as before. This shows that $\mathbf{w}_{t+1}$ is in the interior of the fourth quadrant, which concludes our proof.

\subsection{Proof of Proposition \ref{tight_bound_prop}}
\label{section_C3}

We will need the following auxiliary lemmas.

\begin{lem}
	\label{al_ar_lem}
	Let $\mathbf{w}$ be in the fourth quadrant, then $g(\mathbf{w}_l,\mathbf{w}_r) \geq \frac{1}{2\pi}\big(\frac{\sqrt{3}}{2} - \frac{\pi}{6}\big){\left\|\mathbf{w}\right\|}^2$.
\end{lem}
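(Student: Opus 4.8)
The plan is to reduce the whole statement to a one-variable estimate in the angle $\theta := \theta_{\mathbf{w}_l,\mathbf{w}_r}$. First I would apply the Cho--Saul lemma (the first Lemma in the excerpt) to write
$$g(\mathbf{w}_l,\mathbf{w}_r) = \frac{1}{2\pi}\|\mathbf{w}_l\|\,\|\mathbf{w}_r\|\Big(\sin\theta + (\pi-\theta)\cos\theta\Big).$$
Since $\mathbf{w}_l=(0,w_1,w_2)$ and $\mathbf{w}_r=(w_1,w_2,0)$ are built from the same two coordinates, $\|\mathbf{w}_l\|=\|\mathbf{w}_r\|=\|\mathbf{w}\|$, so the prefactor is exactly $\frac{1}{2\pi}\|\mathbf{w}\|^2$ (the bound is trivial when $\mathbf{w}=\mathbf{0}$, so assume $\mathbf{w}\neq\mathbf{0}$). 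Hence it suffices to lower bound $f(\theta):=\sin\theta+(\pi-\theta)\cos\theta$.

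Next I would pin down the admissible range of $\theta$. Computing $\mathbf{w}_l\cdot\mathbf{w}_r = w_1w_2$ gives $\cos\theta=\frac{w_1w_2}{\|\mathbf{w}\|^2}=\frac{w_1w_2}{w_1^2+w_2^2}$. For $\mathbf{w}$ in the fourth quadrant we have $w_1\ge 0$ and $w_2\le 0$, so $w_1w_2\le 0$, while by AM--GM $|w_1w_2|\le \tfrac12(w_1^2+w_2^2)$ with equality iff $|w_1|=|w_2|$. Therefore $\cos\theta\in[-\tfrac12,0]$, equivalently $\theta\in[\tfrac{\pi}{2},\tfrac{2\pi}{3}]$, the extreme $\theta=\tfrac{2\pi}{3}$ being attained on the diagonal $w_1=-w_2$.

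Finally I would establish monotonicity of $f$ and evaluate at the worst angle. Differentiating,
$$f'(\theta) = \cos\theta - \cos\theta - (\pi-\theta)\sin\theta = -(\pi-\theta)\sin\theta < 0 \quad\text{on } (0,\pi),$$
so $f$ is strictly decreasing there; its minimum over $[\tfrac{\pi}{2},\tfrac{2\pi}{3}]$ is thus at the right endpoint, giving
$$f(\theta) \ge f\!\left(\tfrac{2\pi}{3}\right) = \sin\tfrac{2\pi}{3} + \tfrac{\pi}{3}\cos\tfrac{2\pi}{3} = \frac{\sqrt{3}}{2} - \frac{\pi}{6}.$$
Multiplying through by $\frac{1}{2\pi}\|\mathbf{w}\|^2$ yields the claim. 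The argument is entirely elementary; the only step needing real care is correctly identifying the range of $\theta$ from the fourth-quadrant constraint via the AM--GM bound on $\cos\theta$, since the stated constant is exactly $f$ evaluated at the boundary angle $\tfrac{2\pi}{3}$, and a looser range would make the bound either non-tight or false.
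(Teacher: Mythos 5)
Your proof is correct and follows essentially the same route as the paper's: both reduce $g(\mathbf{w}_l,\mathbf{w}_r)$ via the Cho--Saul formula to the quantity $\|\mathbf{w}\|^2\bigl(\sin\theta+(\pi-\theta)\cos\theta\bigr)/(2\pi)$, show that the fourth-quadrant constraint forces $\cos\theta_{\mathbf{w}_l,\mathbf{w}_r}\geq -\tfrac12$ (the paper parameterizes $\mathbf{w}=(w,-mw)$ and minimizes $-m/(1+m^2)$ over $m$, which is exactly your AM--GM step), and then use the monotonicity of $s(\theta)=\sin\theta+(\pi-\theta)\cos\theta$ to evaluate at the worst angle $\tfrac{2\pi}{3}$. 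Your writeup is in fact slightly more complete, since you verify the monotonicity by computing $s'(\theta)=-(\pi-\theta)\sin\theta<0$ and handle $\mathbf{w}=\mathbf{0}$, both of which the paper leaves implicit.
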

\begin{proof}
	First note that the function $s(\theta) = \sin\theta + (\pi -\theta)\cos\theta$ is decreasing as a function of $\theta \in [0,\pi]$. Let $\mathbf{w}=(w,-mw)$ for $w,m\geq0$. Straightforward calculation shows that $\cos\theta_{\mathbf{w}_l,\mathbf{w}_r} = -\frac{m}{m^2+1}$. As a function of $m \in [0,\infty)$, $\cos\theta_{\mathbf{w}_l,\mathbf{w}_r}$ is minimized for $m=1$ with value $-\frac{1}{2}$, i.e., when $\theta(\mathbf{w}_l,\mathbf{w}_r) = \frac{2\pi}{3}$ and this is the largest angle possible. Thus $g(\mathbf{w}_l,\mathbf{w}_r) \geq \frac{1}{2\pi}s(\frac{2\pi}{3})\big){\left\|\mathbf{w}\right\|}^2 = \frac{1}{2\pi}\big(\frac{\sqrt{3}}{2} - \frac{\pi}{6}\big){\left\|\mathbf{w}\right\|}^2$.
\end{proof}

\begin{lem}
	\label{trigo_lem}
	Let 
	\begin{equation}
	\begin{split}
	f(\theta) &= 2k\big(\sin(\frac{3\pi}{4} + \theta) + (\frac{\pi}{4} -\theta)\cos(\frac{3\pi}{4} + \theta)\big) \\ &+ \big(2k-2\big)\big(\sqrt{1-\frac{{\cos\theta}^2}{2}} + (\pi - \arccos{\frac{\cos\theta}{\sqrt{2}}})\frac{\cos\theta}{\sqrt{2}} \big) + \big(2k-2\big)\big(\sqrt{1-\frac{{\sin\theta}^2}{2}} + (\pi - \arccos{\frac{\sin\theta}{\sqrt{2}}})\frac{\sin\theta}{\sqrt{2}} \big)
	\end{split}
	\end{equation}, then in the interval $\theta \in [0,\frac{\pi}{4}]$, $f(\theta)$  is maximized at $\theta = \frac{\pi}{4}$ for all $k \geq 2$.  
\end{lem}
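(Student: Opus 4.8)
The plan is to show that $f'(\theta)\ge 0$ for every $\theta\in[0,\tfrac\pi4]$; since $f$ is continuous this immediately gives that $f$ is non-decreasing on the interval and hence attains its maximum at the right endpoint $\theta=\tfrac\pi4$. The starting observation is that all three summands of $f$ are instances of the single function $s(\phi)=\sin\phi+(\pi-\phi)\cos\phi$, whose derivative is $s'(\phi)=-(\pi-\phi)\sin\phi\le 0$ on $[0,\pi]$ — this is exactly the monotonicity already used in Lemma~\ref{al_ar_lem}. Writing $\psi_1=\arccos\tfrac{\cos\theta}{\sqrt2}$ and $\psi_2=\arccos\tfrac{\sin\theta}{\sqrt2}$ and using $\sin\psi_1=\sqrt{1-\tfrac{\cos^2\theta}{2}}$, $\sin\psi_2=\sqrt{1-\tfrac{\sin^2\theta}{2}}$, one has $f(\theta)=2k\,s\big(\tfrac{3\pi}4+\theta\big)+(2k-2)\,s(\psi_1)+(2k-2)\,s(\psi_2)$, the first summand matching because $\pi-(\tfrac{3\pi}4+\theta)=\tfrac\pi4-\theta$.

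First I would differentiate term by term. The key simplification is that the factor $\sin\psi_i$ coming from $s'(\psi_i)$ cancels the square root produced by $\tfrac{d\psi_i}{d\theta}$, leaving
\[
\sqrt2\,f'(\theta)=-2k\Big(\tfrac\pi4-\theta\Big)(\cos\theta-\sin\theta)+(2k-2)\Big[(\pi-\psi_2)\cos\theta-(\pi-\psi_1)\sin\theta\Big],
\]
where I also used $\sin(\tfrac{3\pi}4+\theta)=\tfrac1{\sqrt2}(\cos\theta-\sin\theta)$. On $[0,\tfrac\pi4]$ we have $\cos\theta\ge\sin\theta\ge 0$ (and $\psi_1\le\psi_2$), so the first summand is $\le 0$, and the bracket $h(\theta):=(\pi-\psi_2)\cos\theta-(\pi-\psi_1)\sin\theta$ is the quantity that must compensate for it.

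Because $\sqrt2\,f'(\theta)$ is affine in $k$, I would next reduce to a statement free of $k$. With $\eta=\tfrac\pi4-\theta$, the inequality $\sqrt2 f'\ge 0$ reads $(2k-2)h\ge 2k\,\eta(\cos\theta-\sin\theta)$, equivalently $k\big(h-\eta(\cos\theta-\sin\theta)\big)\ge h$. I claim it suffices to prove the single $k$-independent inequality
\[
h(\theta)\ \ge\ 2\Big(\tfrac\pi4-\theta\Big)(\cos\theta-\sin\theta),\qquad \theta\in[0,\tfrac\pi4].
\]
Indeed this gives $h-\eta(\cos\theta-\sin\theta)\ge\eta(\cos\theta-\sin\theta)\ge 0$, so $k\big(h-\eta(\cos\theta-\sin\theta)\big)$ is non-decreasing in $k$ and the binding case is $k=2$, where the requirement is exactly the displayed inequality. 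One checks it is an equality at both endpoints: at $\theta=0$ both sides equal $\tfrac\pi2$, and at $\theta=\tfrac\pi4$ both sides vanish.

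The main obstacle is establishing this last inequality. My plan is to set $\Delta(\theta)=h(\theta)-2(\tfrac\pi4-\theta)(\cos\theta-\sin\theta)$, note $\Delta(0)=\Delta(\tfrac\pi4)=0$, and prove that $\Delta$ is concave on $[0,\tfrac\pi4]$; a concave function vanishing at both endpoints is non-negative throughout, which finishes the proof. Concavity amounts to $\Delta''\le 0$, and this is the delicate computation, since $h$ carries the $\arccos$ terms $\psi_1,\psi_2$ and differentiating twice produces several competing contributions whose combined sign must be controlled. If a clean sign for $\Delta''$ proves elusive, the fallback is to work with $\Delta'$ directly: show $\Delta'(0)\ge 0$, $\Delta'(\tfrac\pi4)\le 0$, and that $\Delta'$ changes sign exactly once on $(0,\tfrac\pi4)$, so that $\Delta$ has a single interior maximum and no interior minimum and is therefore $\ge 0$. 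Either route reduces the whole lemma to a one-dimensional monotonicity statement provable by elementary estimates on $\arccos$ and its derivative.
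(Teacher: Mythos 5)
Your setup is correct and coincides with the paper's own strategy: the paper also proves the lemma by showing $f'(\theta)\geq 0$ on $[0,\frac{\pi}{4}]$, and its computation of the three derivatives ($f_1'(\theta)=-(\frac{\pi}{4}-\theta)\sin(\frac{3\pi}{4}+\theta)$, $f_2'(\theta)=-(\pi-\arccos\frac{\cos\theta}{\sqrt{2}})\frac{\sin\theta}{\sqrt{2}}$, $f_3'(\theta)=(\pi-\arccos\frac{\sin\theta}{\sqrt{2}})\frac{\cos\theta}{\sqrt{2}}$) is exactly your cancellation of $\sin\psi_i$ against $d\psi_i/d\theta$. Your reduction of the $k$-dependence is also valid and is in fact a small improvement in presentation: since $\frac{k}{k-1}\leq 2$ for $k\geq 2$, the binding case is $k=2$, and your $k$-free inequality $h(\theta)\geq 2(\frac{\pi}{4}-\theta)(\cos\theta-\sin\theta)$ is precisely the paper's $d_1(\theta)\geq 0$ at $k=2$; the paper instead carries the factor $\frac{k}{k-1}$ through all of its estimates.

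The genuine gap is that this inequality — which is the entire mathematical content of the lemma — is never proven. Everything up to it is routine differentiation; what remains is exactly the ``delicate computation'' you defer, and you offer only two candidate strategies (concavity of $\Delta$, or a single sign change of $\Delta'$) without executing either. Note that the inequality is tight at \emph{both} endpoints (both sides equal $\frac{\pi}{2}$ at $\theta=0$ and vanish at $\theta=\frac{\pi}{4}$), so no crude termwise bound can work, and any proof must control second-order behavior of the $\arccos$ terms — precisely what you have not done. This is where the paper invests essentially all of its effort: it replaces the $\arccos$ terms by the linear bounds $\arccos(x)\leq\frac{\pi}{2}-x$ on $[0,1]$ and $\arccos(x)\geq\frac{\pi}{2}-x-\frac{1}{10}$ on $[\frac{1}{2},\frac{1}{\sqrt{2}}]$, obtaining a tractable lower bound $d_2$, then runs your own fallback argument rigorously on $d_2$ (checking $d_2(0)\geq 0$, $d_2(\frac{3}{4})\geq 0$, $d_2'(0)>0$, and $d_2''\leq 0$ on $[0,\frac{3}{4}]$), and finally handles the leftover interval $[\frac{3}{4},\frac{\pi}{4}]$ by showing $d_1'\leq 0$ there with endpoint checks. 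Your concavity claim for $\Delta$ is plausible (numerically $\Delta''$ appears to be bounded away from zero and negative on the whole interval), so the plan is not misguided, but as written the proposal stops exactly where the proof has to begin; to complete it you would need either to verify $\Delta''\leq 0$ directly — which requires the same kind of $\arccos$-derivative estimates the paper performs — or to adopt the paper's linearization trick, after which the interval must be split because the linear lower bound on $\arccos$ is only valid on part of the range.
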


\begin{proof}
	We will maximize the function $\frac{f(\theta)}{2(k-1)} = \frac{k}{k-1}f_1(\theta) + f_2(\theta) + f_3(\theta)$ where $f_1(\theta),f_2(\theta),f_3(\theta)$ correspond to the three summands in the expression of $f(\theta)$. 
	
	
	
	Since for $h(x) = \sqrt{1-x^2} + (\pi - \arccos(x))x$ we have $h'(x) = \pi - \arccos(x)$, it follows that $f_2'(\theta) = -(\pi - \arccos{\frac{\cos\theta}{\sqrt{2}}})\frac{\sin\theta}{\sqrt{2}}$, $f_3'(\theta) = (\pi - \arccos{\frac{\sin\theta}{\sqrt{2}}})\frac{\cos\theta}{\sqrt{2}}$ and $f_1'(\theta) = -(\frac{\pi}{4} - \theta)\sin(\frac{3\pi}{4} + \theta)$. It therefore suffices to show that $$d_1(\theta) := (\pi - \arccos{\frac{\sin\theta}{\sqrt{2}}})\frac{\cos\theta}{\sqrt{2}} -(\pi - \arccos{\frac{\cos\theta}{\sqrt{2}}})\frac{\sin\theta}{\sqrt{2}} - \frac{k}{k-1}(\frac{\pi}{4} - \theta)\sin(\frac{3\pi}{4} + \theta) \geq 0$$
	for $\theta \in [0,\frac{\pi}{4}]$.
	
	By applying the inequalities $\arccos(x) \leq \frac{\pi}{2} - x$ for $x \in [0,1]$ and $\arccos(x) \geq \frac{\pi}{2} - x - \frac{1}{10}$ for $x \in [\frac{1}{2}, \frac{1}{\sqrt{2}}]$ we get $d_1(\theta) \geq d_2(\theta)$ where 
	\begin{equation}
	\begin{split}
	d_2(\theta) &= \big(\frac{\pi}{2} + \frac{\sin\theta}{\sqrt{2}}\big)\frac{\cos\theta}{\sqrt{2}} - \big(\frac{\pi}{2} + \frac{\cos\theta}{\sqrt{2}} + \frac{1}{10}\big)\frac{\sin\theta}{\sqrt{2}} - \frac{k}{k-1}(\frac{\pi}{4} - \theta)\sin(\frac{3\pi}{4} + \theta) \\ &= \frac{\pi}{2\sqrt{2}}\cos\theta - \big(\frac{\pi}{2\sqrt{2}} + \frac{1}{10\sqrt{2}}\big)\sin\theta - \frac{k}{k-1}(\frac{\pi}{4} - \theta)\sin(\frac{3\pi}{4} + \theta)
	\end{split}
	\end{equation}
	
	We notice that $d_2(0) \geq 0$ and $d_2(\frac{3}{4}) \geq 0$ for all $k \geq 2$. In addition, $$d_2'(\theta) = -\frac{\pi}{2\sqrt{2}}\sin\theta - \big(\frac{\pi}{2\sqrt{2}} + \frac{1}{10\sqrt{2}}\big)\cos\theta + \frac{k}{k-1}\sin(\frac{3\pi}{4}+\theta) -\frac{k}{k-1}(\frac{\pi}{4} - \theta)\cos(\frac{3\pi}{4} + \theta)$$
	and $d_2'(0) > 0$ for all $k \geq 2$. It follows that in order to show that $d_2(\theta) \geq 0 $ for $\theta \in [0,\frac{3}{4}]$ and $k \geq 2$, it suffices to show that $d_2''(\theta) \leq 0$ for $\theta \in [0,\frac{3}{4}]$ and $k \geq 2$. Indeed, 
	\begin{equation}
	\begin{split}
	d_2''(\theta) &= -\frac{\pi}{2\sqrt{2}}\cos\theta + \big(\frac{\pi}{2\sqrt{2}} + \frac{1}{10\sqrt{2}}\big)\sin\theta + \frac{2k}{k-1}\cos(\frac{3\pi}{4}+\theta) +\frac{k}{k-1}(\frac{\pi}{4} - \theta)\sin(\frac{3\pi}{4} + \theta) \\ &\leq \big(\frac{1}{10\sqrt{2}} + \frac{k}{k-1}\frac{\pi}{4}\big) \max\{\sin\theta,\sin(\frac{3\pi}{4} + \theta)\} + \frac{2k}{k-1}\cos(\frac{3\pi}{4}+\theta) \leq 0
	\end{split}
	\end{equation}

	for all $\theta \in [0,\frac{3}{4}]$ and $k \geq 2$. Note that the first inequality follows since $\cos\theta \geq \sin\theta$ and the second since $\cos(\frac{3\pi}{4}+\theta) \geq \max\{\sin\theta,\sin(\frac{3\pi}{4} + \theta)\}$, both for $\theta \in [0,\frac{3}{4}]$. This shows that $d_1(\theta) \geq 0$ for $\theta \in [0,\frac{3}{4}]$. 
	
	Now assume that $\theta \in [\frac{3}{4},\frac{\pi}{4}]$. Since $d_1(\frac{3}{4}) \geq 0$ and $d_1(\frac{\pi}{4}) \geq 0$, it suffices to prove that $d_1'(\theta) \leq 0$ for $\theta \in [\frac{3}{4},\frac{\pi}{4}]$. Indeed, for all $\theta \in [\frac{3}{4},\frac{\pi}{4}]$
	
	\begin{equation}
	\begin{split}
	d_1'(\theta) &= -(\pi - \arccos{\frac{\cos\theta}{\sqrt{2}}})\frac{\cos\theta}{\sqrt{2}} -(\pi - \arccos{\frac{\sin\theta}{\sqrt{2}}})\frac{\sin\theta}{\sqrt{2}} \\ &+ \frac{\cos^2\theta}{2\sqrt{1-\frac{\sin^2\theta}{2}}} + \frac{\sin^2\theta}{2\sqrt{1-\frac{\cos^2\theta}{2}}} + \frac{k}{k-1}\sin(\frac{3\pi}{4}+\theta) -\frac{k}{k-1}(\frac{\pi}{4} - \theta)\cos(\frac{3\pi}{4} + \theta) \\ &\leq  -(\pi - \arccos{\frac{\cos(\frac{\pi}{4})}{\sqrt{2}}})\frac{\cos(\frac{\pi}{4})}{\sqrt{2}} -(\pi - \arccos{\frac{\sin(\frac{3}{4})}{\sqrt{2}}})\frac{\sin(\frac{3}{4})}{\sqrt{2}} \\ &+ \frac{\cos^2(\frac{3}{4})}{2\sqrt{1-\frac{\sin^2(\frac{\pi}{4})}{2}}} + \frac{\sin^2(\frac{\pi}{4})}{2\sqrt{1-\frac{\cos^2(\frac{3}{4})}{2}}} + 2\sin(\frac{3\pi}{4}+\frac{3}{4}) -2(\frac{\pi}{4} - \frac{3}{4})\cos(\frac{3\pi}{4} + \frac{3}{4}) < 0
	\end{split}
	\end{equation}

	We conclude that $d_1(\theta) \geq 0$ for all $\theta \in [0,\frac{\pi}{4}]$ as desired.

\end{proof}

\medskip \par \noindent {\it Proof of Proposition \ref{tight_bound_prop}. }
First assume that $w_1 \geq -w_2$. Let $\theta$ be the angle between $\mathbf{w}$ and the positive $x$ axis. Then $\cos\theta = \frac{w_1}{\left\|\mathbf{w}\right\|}$ and $\tan\theta = -\frac{w_2}{w_1}$. Therefore we get $$\cos\theta_{\mathbf{w}_l,\mathbf{w}^*_r} = \frac{w_1}{\left\|\mathbf{w}\right\|\sqrt{2}} = \frac{\cos\theta}{\sqrt{2}}$$ and $$\cos\theta_{\mathbf{w}_r,\mathbf{w}^*_l} = \frac{-w_2}{\left\|\mathbf{w}\right\|\sqrt{2}} = \frac{\cos\theta\tan\theta}{\sqrt{2}} = \frac{\sin\theta}{\sqrt{2}}$$
We can rewrite $\ell(\mathbf{w})$ as 
\begin{equation}
\begin{split}
\ell(\mathbf{w}) &= \frac{1}{k^2}\Bigg[ \frac{k^2-3k+2}{2\pi}(\left\|\mathbf{w}\right\| - \left\|\mathbf{w}^*\right\|)^2 + \frac{k}{2}{\left\|\mathbf{w}\right\|}^2+2(k-1)g(\mathbf{w}_r,\mathbf{w}_l) \\ &- \frac{\left\|\mathbf{w}\right\|\left\|\mathbf{w}^*\right\|}{2\pi}\Big(2k\big(\sin(\frac{3\pi}{4} + \theta) + (\frac{\pi}{4} -\theta)\cos(\frac{3\pi}{4} + \theta)\big)\Big) \\ &+  \big(2k-2\big)\big(\sqrt{1-\frac{{\cos\theta}^2}{2}} + (\pi - \arccos{\frac{\cos\theta}{\sqrt{2}}})\frac{\cos\theta}{\sqrt{2}} \big) + \big(2k-2\big)\big(\sqrt{1-\frac{{\sin\theta}^2}{2}} + (\pi - \arccos{\frac{\sin\theta}{\sqrt{2}}})\frac{\sin\theta}{\sqrt{2}} \big)\Big) \\ &+ \frac{k}{2}{\left\|\mathbf{w}^*\right\|}^2 + 2(k-1)g(\mathbf{w}^*_r,\mathbf{w}^*_l) \Bigg]
\end{split}
\end{equation}

Hence by Lemma \ref{al_ar_lem} and Lemma \ref{trigo_lem} we can lower bound $\ell(\mathbf{w})$ as follows 
\begin{equation}
\begin{split}
\ell(\mathbf{w}) &\geq \frac{1}{k^2}\Bigg[\frac{k^2-3k+2}{2\pi}(\left\|\mathbf{w}\right\| - \left\|\mathbf{w}^*\right\|)^2 + \frac{k}{2}{\left\|\mathbf{w}\right\|}^2 + \frac{k-1}{\pi}\big(\frac{\sqrt{3}}{2} - \frac{\pi}{6}\big){\left\|\mathbf{w}\right\|}^2 \\ &- \frac{(k-1)\left\|\mathbf{w}\right\|\left\|\mathbf{w}^*\right\|}{\pi}\big(\sqrt{3} + \frac{2\pi}{3}\big) + \frac{k}{2}{\left\|\mathbf{w}^*\right\|}^2 +  \frac{k-1}{\pi}\big(\frac{\sqrt{3}}{2} - \frac{\pi}{6}\big){\left\|\mathbf{w}^*\right\|}^2\Bigg]
\end{split}
\end{equation}

By setting $\left\|\mathbf{w}\right\| = \alpha\left\|\mathbf{w}^*\right\|$ we get 

\begin{equation}
\begin{split}
\frac{\ell(\mathbf{w})}{{\left\|\mathbf{w}^*\right\|}^2} &\geq  \frac{1}{k^2}\Bigg[\frac{k^2-3k+2}{2\pi}(\alpha-1)^2 + \frac{k}{2}\alpha^2 + \frac{k-1}{\pi}\big(\frac{\sqrt{3}}{2} - \frac{\pi}{6}\big)\alpha^2 \\ &- \frac{(k-1)}{\pi}\big(\sqrt{3} + \frac{2\pi}{3}\big)\alpha + \frac{k}{2} + \frac{k-1}{\pi}\big(\frac{\sqrt{3}}{2} - \frac{\pi}{6}\big)\Bigg]
\end{split}
\end{equation}

Solving for $\alpha$ that minimizes the latter expression we obtain $$\alpha^* = \frac{\frac{k^2-3k+2}{\pi} + \frac{(k-1)}{\pi}\big(\sqrt{3} + \frac{2\pi}{3}\big)}{k + \frac{k^2-3k+2}{\pi} + \frac{2(k-1)}{\pi}(\frac{\sqrt{3}}{2} - \frac{\pi}{6}\big)} = \frac{h(k)}{h(k)+1}  $$

Plugging $\alpha^*$ back to the inequality we get $$\ell(\mathbf{w}) \geq \frac{1}{k^2}\Big(\frac{h(k)+1}{2}(\alpha^*)^2 - h(k)\alpha^* + \frac{h(k)+1}{2}\Big){\left\|\mathbf{w}^*\right\|}^2 = \frac{2h(k)+1}{k^2(2h(k)+2)}{\left\|\mathbf{w}^*\right\|}^2$$
and for $\tilde{\mathbf{w}} = -\alpha^*\mathbf{w}^*$ it holds that $\ell(\tilde{\mathbf{w}}) = \frac{2h(k)+1}{k^2(2h(k)+2)}{\left\|\mathbf{w}^*\right\|}^2$.

Finally, assume $w_1 \leq -w_2$. In this case, let $\theta$ be the angle between $\mathbf{w}$ and the negative $y$ axis. Then $\cos\theta = \frac{-w_2}{\left\|\mathbf{w}\right\|}$ and $\tan\theta = -\frac{w_1}{w_2}$. Therefore $$\cos\theta_{\mathbf{w}_l,\mathbf{w}^*_r} = \frac{w_1}{\left\|\mathbf{w}\right\|\sqrt{2}} = \frac{\cos\theta\tan\theta}{\sqrt{2}} =  \frac{\sin\theta}{\sqrt{2}}$$ and $$\cos\theta_{\mathbf{w}_r,\mathbf{w}^*_l} = \frac{-w_2}{\left\|\mathbf{w}\right\|\sqrt{2}} =  \frac{\cos\theta}{\sqrt{2}}$$

Notice that from now on we get the same analysis as in the case where $w_1 \geq -w_2$, where we switch between expressions with $\mathbf{w}_l,\mathbf{w}^*_r$ and expressions with $\mathbf{w}_r,\mathbf{w}^*_l$. This concludes our proof.
{\hfill $\square$ \bigskip \par}

\section{Experimental Setup for \secref{general_conv_experiments}}
\label{supp:experimental_setup}

In our experiments we estimated the probability of convergence to the global minimum of a randomly initialized gradient descent for many different ground truths $\mathbf{w}^*$ of a convolutional neural network with overlapping filters. For each value of number of hidden neurons, filter size, stride length and ground truth distribution we randomly selected $30$ different ground truths $\mathbf{w}^*$ with respect to the given distribution. We tested with all combinations of values given in Table \ref{table:experiment}.

Furthermore, for each combination of values of number of hidden neurons, filter size and stride length we tested with deterministic ground truths: ground truth with all entries equal to 1, all entries equal to -1 and with entries that form an increasing sequence from -1 to 1, -2 to 0 and 0 to 2 or decreasing sequence from 1 to -1, 0 to -2 and 2 to 0.

For each ground truth, we ran gradient descent 20 times and for each run we recorded whether it reached a point very close to the unique global minimum or it repeatedly (5000 consecutive iterations) incurred very low gradient values and stayed away from the global minimum. We then calculated the empirical probability $\hat{p}=\frac{\text{\#times reached global minimum}}{20}$. To compute the one-sided confidence interval we used the Wilson method (\citep{brown2001interval}) which gives a lower bound 
\begin{equation}
\label{conf_lower_bound}
\frac{\hat{p} + \frac{z_{\alpha}^2}{2n} + z_{\alpha}\sqrt{\frac{\hat{p}(1-\hat{p})}{n}+\frac{z_{\alpha}^2}{4k^2}}}{1+\frac{z_{\alpha}^2}{n}}
\end{equation}
where $z_{\alpha}$ is the $Z$-score with $\alpha=0.05$ and in our experiments $n=20$. Note that we initialized gradient descent inside a large hypercube such that outside the hypercube the gradient does not vanish (this can be easily proved after writing out the gradient for each setting).

For all ground truths we got $\hat{p} \geq 0.15$, i.e., for each ground truth we reached the global minimum at least $3$ times. Hence the confidence interval lower bound \eqref{conf_lower_bound} is greater than $\frac{1}{17}$ in all settings. This suggests that with a few dozen repeated runs of a randomly initialized gradient descent, with high probability it will converge to the global minimum.
\begin{table}[t]
	\caption{Parameters values for experiments in \secref{general_conv_experiments}}
	\begin{center}
		
		\begin{tabular}{ |c| c| c| }
			
			\hline
			Number of hidden neurons & 50,100  \\ 
			\hline
			Filter size & 2,8,16  \\ 
			\hline
			stride length & 1,$\min\{\frac{f}{4},1\},\min\{\frac{f}{2},1\}$ where $f$ is the filter size  \\
			&(For instance, for $f=16$ we used strides 1,4,8 \\
			& and for $f=2$ we used stride 1) \\
			\hline
			Ground truth distribution & The entries of the ground truth are i.i.d. \\
			& uniform random variables over the interval $[a,b]$ \\
			& where $(a,b)\in\{(-1,1),(-2,0),(0,2)\}$ \\
			\hline
		\end{tabular}
		\label{table:experiment}
	\end{center}
\end{table}

\section{Uniqueness of Global Minimum in the Population Risk}
\label{supp:min_prop_section}
Without loss of generality we assume that the filter is of size $2$ and the stride is $1$. The proof of the general case follows the same lines. Assume that $\ell(\mathbf{w}) = 0$ and denote $\mathbf{w}=(w_1,w_2)$, $\mathbf{w}^*=(w^*_1,w^*_2)$. Recall that $\ell(\ww) = \expect{\cG}{(f(\xx;W) - f(\xx;W^*))^2}$ where $f(\xx;W) = {1\over k} \sum_{i} \relu{\ww_i\cdot\xx }$ and for all $1 \leq i \leq k$ $\ww_i = (\mathbf{0}_{i-1},\ww,\mathbf{0}_{d-i-1})$. By equating $\ell(\mathbf{w})$ to $0$ we get that  $(f(\xx;W) - f(\xx;W^*))^2=0$ almost surely. Since $(f(\xx;W) - f(\xx;W^*))^2$ is a continuous function it follows that $f(\xx;W) - f(\xx;W^*) = 0$ for all $\xx$. In particular this is true for $\xx_1=(x,0,0,...,0)$, $x\in \mathbb{R}$. Thus $\relu{xw_1} = \relu{xw^*_1}$ for all $x \in \mathbb{R}$ which implies that $w_1 = w^*_1$. The equality holds also for $\xx_2=(0,x,0,...,0)$, $x\in \mathbb{R}$ which implies that $\relu{xw_2} + \relu{xw_1} = \relu{xw^*_2} + \relu{xw^*_1}$ for all $x \in \mathbb{R}$. By the previous result, we get $\relu{xw_2} = \relu{xw^*_2}$ for all $x \in \mathbb{R}$ and thus $w_2 = w^*_2$. We proved that $\mathbf{w}=\mathbf{w}^*$ and therefore $\mathbf{w}^*$ is the unique global minimum.

	
	
\end{document}